\newcommand{\norm}[1]{\left\| #1 \right\|}
\theoremstyle{plain}
\newcounter{mainthm}
\newtheorem{maintheorem}[mainthm]{Theorem} 
\newtheorem*{theorem*}{Theorem} 
\newcounter{lemmacounter}
\newtheorem{newlemma}[lemmacounter]{Lemma}
\newcounter{corollarycounter}
\newtheorem{newcorollary}[corollarycounter]{Corollary}
\newtheorem{theorem}{Theorem}[section]
\newtheorem{lemma}[theorem]{Lemma}
\theoremstyle{definition}
\theoremstyle{remark}
\icmltitlerunning{
LoRA Training Provably Converges to a Low-Rank Global Minimum or Fails Loudly
}
\begin{document}

\twocolumn[

\icmltitle{LoRA Training Provably Converges to a Low-Rank Global Minimum\\
or It Fails Loudly (But it Probably Won't Fail)}

\icmlsetsymbol{equal}{*}

\begin{icmlauthorlist}
\icmlauthor{Junsu Kim}{yyy}
\icmlauthor{Jaeyeon Kim}{comp}
\icmlauthor{Ernest K. Ryu}{sch}
\end{icmlauthorlist}

\icmlaffiliation{yyy}{Department of Mathematics, Seoul National University}
\icmlaffiliation{comp}{Department of Computer Science, Harvard University}
\icmlaffiliation{sch}{Department of Mathematics, University of California, Los Angeles}

\icmlcorrespondingauthor{Ernest K. Ryu}{eryu@math.ucla.edu}

\icmlkeywords{Machine Learning, ICML}

\vskip 0.3in]

\printAffiliationsAndNotice{} 

\begin{abstract}
Low-rank adaptation (LoRA) has become a standard approach for fine-tuning large foundation models. However, our theoretical understanding of LoRA remains limited as prior analyses of LoRA's training dynamics either rely on linearization arguments or consider highly simplified setups. In this work, we analyze the LoRA loss landscape without such restrictive assumptions. We define two regimes: a ``special regime'', which includes idealized setups where linearization arguments hold, and a ``generic regime'' representing more realistic setups where linearization arguments do not hold. In the generic regime, we show that LoRA training converges to a global minimizer with low rank and small magnitude, or a qualitatively distinct solution with high rank and large magnitude. Finally, we argue that the zero-initialization and weight decay in LoRA training induce an implicit bias toward the low-rank, small-magnitude region of the parameter space---where global minima lie---thus shedding light on why LoRA training usually succeeds in finding global minima.
\end{abstract}

\section{Introduction}

With the recent explosive trend of scale, fine-tuning a pre-trained foundational model to target downstream tasks has become a dominant approach to deep learning. Low-rank adaptation (LoRA) \citep{hu2022lora} is a parameter-efficient fine-tuning method freezing the pre-trained weight matrix $W_0 \in \mathbb{R}^{m\times n}$, and training a low-rank update $X=AB^\intercal$ to it using
\[
W = W_0 +X = W_0 +AB^\intercal,
\]
where $r\ll \min (m,n)$, $A \in \mathbb{R}^{m\times r}$ and $ B \in \mathbb{R}^{n \times r}$. The low-rank factor matrices $A$ and $B$ are respectively initialized as a random Gaussian matrix and a zero matrix, leading to $X=0$ at initialization.
By training fewer parameters, LoRA fine-tuning significantly reduces memory usage, making fine-tuning feasible on GPUs with limited GPU memory. 

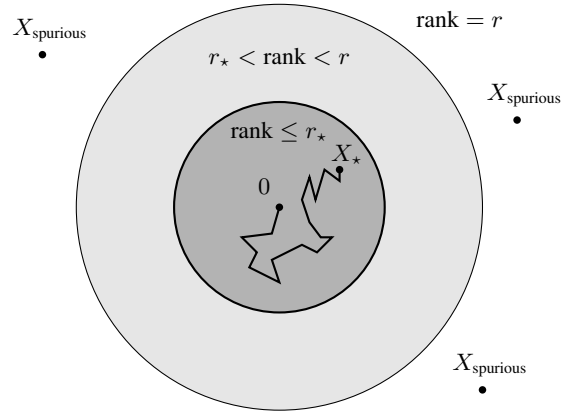
\begin{figure}[t]
\centering
\begin{tikzpicture}[scale=1, font=\small]

\def\rA{1.4}       
\def\rB{2.7}

\begin{scope}
    \clip circle (\rB);
    \fill[black!10] (-3.4,-3.4) rectangle (3.4,3.4);
\end{scope}

\begin{scope}
    \clip circle (\rA);
    \fill[black!30] (-3.4,-3.4) rectangle (3.4,3.4);
\end{scope}

\node[fill=black,circle,inner sep=1pt,label={[shift={(-0.2,-0.0)}]$0$}] (O) at (0,0) {};

\draw[thick] (O) circle (\rA);
\node at (-0.0,1.0) {$\text{rank}\le r_\star$};

\draw[] (O) circle (\rB);
\node at (0.0,2.0) {$r_\star<\text{rank} <r$};
\node at (2.4,2.5) {$\text{rank}=r$};

\node[fill=black,circle,inner sep=1pt,label={[shift={(0.1,-0.1)}] $X_\star$}] (Xs) 
  at (0.8,0.5) {};

\node[fill=black,circle,inner sep=1pt,label={[shift={(0.1,0)}]$X_{\text{spurious}}$}] (Xo1) 
  at (3.16,1.16) {};
\node[fill=black,circle,inner sep=1pt,label={[shift={(0.1,0)}]$X_{\text{spurious}}$}] (Xo2) 
  at (2.7,-2.43) {};
\node[fill=black,circle,inner sep=1pt,label={[shift={(0.1,0)}]$X_{\text{spurious}}$}] (Xo3) 
  at (-3.15,2.03) {};

\draw[thick](O) (0.0, 0.0) -- (-0.1, -0.35) -- (-0.5,-0.4) -- (-0.45, -0.45)
    -- (-0.3,-0.6) -- (-0.4, -0.8) -- (0.0, -1.0) -- (-0.1,-0.7) -- (0.3,-0.5) -- (0.5, -0.6) --(0.7, -0.4) -- (0.55, -0.4)
    -- (0.4,-0.2)  -- (0.3,0.1)
    -- (0.4,0.4) -- (0.48,0.1) -- (0.6,0.5) -- (0.8,0.35)
    -- (0.8,0.5) -- (Xs);

\end{tikzpicture}

    \caption{
    In LoRA fine-tuning, under the assumption that the global minimum \(X_{\star}\) has low rank and small magnitude, we show that spurious local minima \(X_{\text{spurious}}\) may exist, but they have high rank and large magnitude.}
\vspace{-2mm}
\end{figure}

The broad use of LoRA has spurred theoretical works aimed at understanding its effectiveness. One line of work focuses on analyzing LoRA’s training dynamics, exploring why optimizers like SGD or Adam successfully find effective low-rank updates despite the significant non-convexity introduced by the factorization $X=AB^\intercal$, as well as the inherent non-convexity of neural networks, by utilizing some degree of linearization. Specifically, \citet{pmlr-v202-malladi23a} studies LoRA under a complete linearization, effectively holding $A$ fixed during fine-tuning and viewing the training as a convex optimization problem. A subsequent work \citep{jang2024lora} presents a more refined analysis linearizing with respect to the product $X=AB^\intercal$, retaining the non-convexity arising from the interaction between $A$ and $B$. Beyond linearization, \citet{dayi2024gradient} analyzes a two-layer teacher-student setup for rank-$1$ LoRA. In this work, we carry out a theoretical analysis without any linearizations and any restriction on layers or LoRA rank.

\newpage
\textbf{Contribution.}
We analyze the loss landscape of LoRA fine-tuning and show that in the ``generic regime'', a more practical setup where linearization arguments do not hold, a local minimizer is either (i) a global minimizer with small rank and small magnitude or (ii) a spurious local minimizer with high rank and large magnitude. We further argue that the zero-initialization and weight decay in LoRA training induce an implicit bias toward the low-rank, small-magnitude region of the parameter space, where global minima lie. Altogether, we shed light on why practical LoRA training effectively converges to global minima.

Our key assumptions, formally defined and justified in Section~\ref{sec:2}, are the existence of a low-rank global minimizer for full fine-tuning, restricted strong convexity, and restricted smoothness. Notably, \textbf{our analysis does not rely on any linearization arguments}, making it more applicable to practical fine-tuning setups compared to prior work.

\subsection{Prior works} \label{sec::1.1} 
\paragraph{PEFT methods and LoRA}
Parameter-Efficient Fine-tuning (PEFT) methods have emerged as effective approaches for fine-tuning large language models on downstream tasks while reducing computational and storage requirements. Among numerous proposed methods \citep{ben-zaken-etal-2022-bitfit,li-liang-2021-prefix,lester-etal-2021-power}, Low-Rank Adaptation (LoRA) \citep{hu2022lora} has become a predominant approach by decomposing weight updates into low-rank matrices. Several variants such as LoRA+ \cite{10.5555/3692070.3692782}, rsLoRA \citep{kalajdzievski2023rankstabilizationscalingfactor}, PiSSA \cite{meng2024pissa}, and MiLoRA \citep{wang-etal-2025-milora} have been built upon the LoRA framework, addressing the discrepancy with full fine-tuning in optimization and performance.

\paragraph{Theoretical foundation of LoRA.}
Existing theoretical works on LoRA focus on the expressive power and the training dynamics of LoRA.  \citet{zeng2024the} demonstrates that a certain LoRA rank suffices to express a given fine-tuning function. \citet{jang2024lora} proves that under the NTK regime, LoRA with rank $\Omega (\sqrt{N})$ can express the global minimizer of the original model. \citet{pmlr-v202-malladi23a} argues that the LoRA fine-tuning dynamics are nearly equivalent to the kernel regression. Under this framework, \citet{jang2024lora} proves LoRA fine tuning loss has no spurious local minima when the rank is $O(\sqrt{N})$. Beyond the kernel regime, \citet{dayi2024gradient} analyzes a two-layer teacher-student setup for LoRA and explains why SGD leads to convergence to a global minimum in this context. \citet{zhang2025onestepgradientsufficeslowrank} also identifies the training dynamics in a 2-layer setup, proving LoRA will align to a singular subspace of one-step gradient of full fine-tuning.

\paragraph{Low-rank optimization.}
The low-rank optimization problem 
\vspace{-0.2in}
\begin{align*}
    \min_{X\in \mathbb{R}^{m\times n}, \,\mathrm{rank}(X)\le r} f(X)
\end{align*}
has been extensively studied in the optimization literature, including matrix sensing  \citep{doi:10.1137/070697835} and matrix completion \citep{10.1145/2184319.2184343}. 
Rather than directly optimizing over the space of low-rank matrices, it is often preferred to employ the Burer-Monteiro factorization \citep{Burer2003ANP}, which formulates the problem by parameterizing $X$ as $X=UV^\intercal$, $U\in \mathbb{R}^{m\times r}, V\in \mathbb{R}^{n\times r}$.

As the Burer-Monteiro factorization introduces nonconvexity, a large body of work has identified conditions under which this approach avoids spurious local minima \citep{NIPS2016_b139e104,10.5555/3305381.3305509,pmlr-v54-park17a,zhang2021sharpglobalguaranteesnonconvex}. Further studies extend these results to general settings \citep{doi:10.1137/18M1231675,zhang2024improved}. In our work, we utilize the framework established in these studies with novel techniques to extend its boundary to optimization guarantees in LoRA training.

\subsection{Notation and preliminaries} \label{sec::1.2}

\paragraph{Matrix notation.}
For $X\in \mathbb{R}^{m\times n}$, denote its singular values as $\sigma_1 (X) \ge \sigma_2 (X) \ge \dots \ge \sigma_r (X)\ge 0$.
For matrices \( A \) and \( B \), let $\norm{A}_2=\sigma_1(A)$ denote the spectral norm, \( \|A\|_* =\sum \sigma_i (A) \) the nuclear norm, $\norm{A}_F =\sqrt{\sum \sigma_i (A)^2}$ the Frobenius norm, and \( \langle A, B \rangle = \operatorname{tr}(A^\intercal B) \) the matrix inner product. For a tuple of matrices
$\mathbf{A}=(A^{(1)}, \dots , A^{(L)})$, denote $\|\mathbf{A}\|= \sum_{l=1}^L \|A^{(l)}\| $ for any matrix norm $\|\cdot\|$ and $\textrm{rank} (\mathbf{A}) = \max_{1\le l\le L} \text{rank} (A^{(l)})$.

\paragraph{Neural network.} Let \( f(\cdot \ ;\cdot):\mathcal{P}\times \mathcal{X} \rightarrow \mathbb{R}^K \) be a neural network where $\mathcal{P}$ is the parameter space,  $\mathcal{X}$ is the data space, and \( \mathbb{R}^K \) is the output space. Assume the model is pre-trained to $\Theta_0 \in \mathcal{P}$, i.e., the pre-trained model is $f(\Theta_0 ; \cdot)$.

\paragraph{Fine-tuning loss.}
Let $\mathbf{W}_0 = (W_0^{(1)},\dots,W_0^{(L)}) \subset \Theta_0$ be the pre-trained value of the weights $\mathbf{W}$ that we choose to fine-tune. We wish to fine-tune the pre-trained model $f(\Theta_0;\cdot)$ on a downstream task with data distribution $(x,y)\sim \mathcal{D}$. With slight abuse of notation, write $f(\mathbf{W}\ ;\cdot)$ to denote $f(\Theta\ ;\cdot)$, where all parameters of $\Theta$ excluding $\mathbf{W}$ are fixed to their corresponding values in $\Theta_0$. Let
\[
\mathbf{X}= (X^{(1)}, \dots, X^{(L)})
\]
be the change of \( \mathbf{W} \) during the (full) fine-tuning. The true objective one hopes to minimize is
\[
\mathcal{L}^\mathrm{full}(\mathbf{X}) = \mathbb{E}_{(x,y) \sim \mathcal{D}} \left[ \ell(f(\mathbf{W}_0 + \mathbf{X} ; x),y) \right]
\]
with some loss function \( \ell(\cdot,\cdot) \). We assume \( \ell(x, y) \) is non-negative and twice-differentiable with respect to \( x \) for any \( y \).
In practice, we have access to a finite dataset $\{(x_i, y_i)\}_{i=1}^N$, so we minimize the empirical risk
\[
\widehat{\mathcal{L}}^\mathrm{full}(\mathbf{X}) = \frac{1}{N} \sum_{i=1}^N \ell(f (\mathbf{W}_0 + X; x_i), y_i).
\]

\paragraph{LoRA.}
Low-rank adaptation (LoRA) uses a rank-$r$ parameterization for each update matrix
\[
X^{(l)} =A^{(l)} (B^{(l)})^\intercal \in \mathbb{R}^{m_l \times n_l}
\]
with $A^{(l)} \in \mathbb{R}^{m_l \times r}$ and $ B^{(l)} \in \mathbb{R}^{n_l \times r}$ for $ l=1,\dots,L$. Denote
\begin{align*}
    \mathbf{A}= (A^{(1)}, \dots , A^{(L)}) , \quad \mathbf{B}= (B^{(1)}, \dots , B^{(L)})
\end{align*}
and 
\[
\mathbf{A}\mathbf{B}^\intercal = \left (A^{(1)}(B^{(1)})^\intercal, \dots , A^{(L)}(B^{(L)})^\intercal\right ).
\]
Under this parametrization, we define the empirical LoRA risk as
\[
\widehat{\mathcal{L}}^{\mathrm{lora}} (\mathbf{A}, \mathbf{B}) \triangleq \widehat{\mathcal{L}}^\mathrm{full} (\mathbf{A}\mathbf{B}^\intercal)
\]

We adopt the standard initialization \citep{hu2022lora}, respectively initializing each $\mathbf{A}$ and $\mathbf{B}$ as a random gaussian and zero, leading to $\mathbf{A}\mathbf{B}^\intercal =0$ at initialization.

\paragraph{Second-order stationary points.}
Let $L\colon \mathbb{R}^n \to \mathbb{R}$ be twice-continuously differentiable. We say $X \in \mathbb{R}^n $ is a \textit{(first-order) stationary point} if
\[
\nabla L(X) = 0.
\]
We say $X \in \mathbb{R}^n $ is a \textit{second-order stationary point (SOSP)} if
\[
\nabla L(X) = 0, \quad \nabla^2 L(X)[U,U] \geq 0,
\]
for any $U \in \mathbb{R}^n$. Lastly, we say $X\in \mathbb{R}^n$ is a \textit{local minimum} if there exists an open ball $\mathcal{B}$ that contains $X$ and
\[
L(X) \leq L(X')
\]
for any $X'\in\mathcal{B}$. It follows that a local minimum is an SOSP.
If a local minimum is not a global minimum, we say it is a \textit{spurious} local minimum.

Prior works have established that stochastic gradient descent applied to twice-continuously differentiable functions (regardless of convexity) roughly converges to SOSPs.

\begin{theorem*}[Theorem 4.1 of \citet{pmlr-v49-lee16}]
Gradient descent on twice-differentiable functions with random initialization, almost surely, does not converge to strict saddle points. I.e.,\textit{if gradient descent converges, it converges to an SOSP, almost surely.}
\end{theorem*}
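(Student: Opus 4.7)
The plan is to treat gradient descent as iteration of the smooth map $g(x) = x - \alpha \nabla L(x)$ and, via the stable manifold theorem from smooth dynamics, show that the set of initializations whose forward orbit converges to any fixed strict saddle has Lebesgue measure zero; a random initialization drawn from any distribution absolutely continuous with respect to Lebesgue measure then avoids this set almost surely.

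First, I would verify that for a sufficiently small step size $\alpha$, the map $g$ is a $C^1$ local diffeomorphism, since its Jacobian $Dg(x) = I - \alpha \nabla^2 L(x)$ is invertible whenever $\alpha$ is below the reciprocal of the operator norm of $\nabla^2 L$. Next, I would fix a strict saddle $x^*$: by definition $\nabla L(x^*) = 0$ and $\nabla^2 L(x^*)$ has a strictly negative eigenvalue, which makes $Dg(x^*) = I - \alpha \nabla^2 L(x^*)$ possess at least one eigenvalue of modulus strictly greater than one. The center-stable manifold theorem then furnishes a neighborhood $U$ of $x^*$ together with an embedded $C^1$ submanifold $W^{cs}_{\mathrm{loc}}(x^*) \subset U$ of dimension at most $n-1$, such that any orbit of $g$ remaining in $U$ and converging to $x^*$ must lie in $W^{cs}_{\mathrm{loc}}(x^*)$.

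The set of initializations whose gradient descent trajectory eventually enters $U$ and converges to $x^*$ is therefore contained in $\bigcup_{k \ge 0} g^{-k}(W^{cs}_{\mathrm{loc}}(x^*))$. Since $g$ is a diffeomorphism and $W^{cs}_{\mathrm{loc}}(x^*)$ has positive codimension, each preimage is Lebesgue-null, and a countable union of null sets remains null. Finally, I would union over all strict saddles. Since the set of strict saddles could a priori be uncountable, the crucial tool here is separability (the Lindel\"of property) of $\mathbb{R}^n$: from the open cover by local center-stable-manifold neighborhoods, I can extract a countable subcover, reducing the union to a countable one and preserving the null-set conclusion.

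The hardest part, I expect, is precisely this passage from a per-saddle local statement to a global measure-zero conclusion. The center-stable manifold theorem itself can be invoked as a black box, but one must carefully justify that every forward orbit converging to a strict saddle is eventually trapped in one of the local neighborhoods, that the iterated preimages under $g^{-1}$ continue to be null sets, and that the separability argument yields a countable subcover uniformly handling potentially uncountably many saddles. Once these ingredients are in place, the conclusion follows: if gradient descent converges at all under Lebesgue-absolutely-continuous initialization, it must converge to a point where $\nabla L = 0$ and $\nabla^2 L \succeq 0$, i.e., to an SOSP, almost surely.
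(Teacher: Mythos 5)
The paper does not prove this statement; it is cited as a known result (Theorem~4.1 of Lee, Simchowitz, Jordan, and Recht, 2016), so there is no in-paper argument to compare against. That said, your sketch is a faithful reconstruction of the argument in that reference and its 2019 sequel: view gradient descent as iteration of the $C^1$ map $g(x) = x - \alpha\nabla L(x)$ with invertible Jacobian for $\alpha$ small enough, invoke the center-stable manifold theorem at each strict saddle to obtain a positive-codimension local submanifold that captures every orbit converging to that saddle, pull back under the iterates of $g$ to get a Lebesgue-null stable set per saddle, and use the Lindel\"of property of $\mathbb{R}^n$ to reduce the union over a potentially uncountable family of saddles to a countable one. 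This is exactly the right route.

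Two places where you should tighten the wording if you were to write this out in full. First, you call $g$ a ``local diffeomorphism'' and then conclude that $g^{-k}$ of a null set is null; this is correct, but the reason is that a $C^1$ map with nowhere-vanishing Jacobian is locally bi-Lipschitz, so both it and its local inverses map Lebesgue-null sets to Lebesgue-null sets, and that observation deserves an explicit sentence rather than being absorbed into the word ``diffeomorphism.'' Second, you should make explicit that a trajectory converging to $x^*$ must have its tail eventually contained in the neighborhood $U$ where $W^{cs}_{\mathrm{loc}}(x^*)$ is defined, which is precisely why unioning $g^{-k}(W^{cs}_{\mathrm{loc}}(x^*))$ over all $k \ge 0$ suffices and why the trapping hypothesis of the center-stable manifold theorem is satisfied by the shifted orbit. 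With these clarifications the argument is complete and matches the cited proof.
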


\begin{theorem*}[Informal, Theorem 1 of \citet{pmlr-v40-Ge15}] Stochastic gradient descent with noise on twice-differentiable strict saddle functions (i.e., every stationary point is either a local minimum or a strict saddle) does not converge to strict saddle points with high probability. I.e., \textit{if stochastic gradient descent with noise converges, it converges to an SOSP with high probability.}
\end{theorem*}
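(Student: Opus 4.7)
The plan is to prove the informal claim by combining a descent argument away from stationary points with an escape-from-saddle argument driven by the injected noise. Write the update as $X_{t+1} = X_t - \eta(\nabla L(X_t) + \xi_t)$ with $\xi_t$ zero-mean and (say) isotropic Gaussian. For parameters $\epsilon,\gamma > 0$ to be sent to zero at the end, partition the state space into (i) the large-gradient region $\{X : \|\nabla L(X)\| \geq \epsilon\}$, (ii) the strict-saddle neighborhood $\{X : \|\nabla L(X)\| < \epsilon,\ \lambda_{\min}(\nabla^2 L(X)) \leq -\gamma\}$, and (iii) the approximate SOSP region $\{X : \|\nabla L(X)\| < \epsilon,\ \lambda_{\min}(\nabla^2 L(X)) > -\gamma\}$. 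The strict-saddle hypothesis guarantees that, as $\epsilon,\gamma \to 0$, no stationary limit point can fall in region (ii), so it suffices to show iterates do not stall there.

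In region (i), a standard smoothness-based descent lemma yields $\mathbb{E}[L(X_{t+1}) - L(X_t)] \leq -\Omega(\eta\epsilon^2)$ for $\eta$ small relative to the Hessian norm on the initial sublevel set, so iterates cannot linger there. The heart of the argument is region (ii), where I would adapt the coupling argument of Ge et al.\ (later sharpened by Jin et al.): couple two noisy trajectories starting at points that differ by $\alpha v$, where $v$ is the unit eigenvector corresponding to $\lambda_{\min}$ at the nearby saddle $X_\star$. Linearizing, the separation grows by roughly $(1+\eta\gamma)^t$, so after $T = O\bigl(\tfrac{1}{\eta\gamma}\log(1/\alpha)\bigr)$ steps it exceeds any prescribed radius, forcing at least one trajectory out of a small ball around $X_\star$. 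Integrating over the Gaussian perturbation, the set of noise realizations for which the iterate remains trapped is a thin slab whose measure shrinks geometrically in $t$. Consequently, after $\tilde{O}(1/(\eta\gamma^2))$ further iterations inside region (ii), $L$ decreases by $\Omega(\eta)$ with high probability.

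Combining the two bounds, for fixed $\epsilon,\gamma>0$ the expected number of iterations spent in regions (i) and (ii) is at most $\mathrm{poly}(1/\epsilon,1/\gamma,1/\eta)\cdot(L(X_0) - \inf L)$, which is finite since $L$ is bounded below. Taking $\epsilon,\gamma\to 0$ along a countable dense sequence and applying a union bound over the (isolated, by the strict-saddle hypothesis on each sublevel set) strict saddles yields: with probability at least $1-\delta$, every limit point $X_\infty$ of $\{X_t\}$ satisfies $\nabla L(X_\infty)=0$ and $\nabla^2 L(X_\infty)\succeq 0$, i.e.\ is an SOSP.

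The chief obstacle is the escape analysis in region (ii): the Hessian is not constant, so the linearized amplification bound is only accurate inside a small ball around $X_\star$. One must jointly tune the ball radius, the step size $\eta$, and the noise scale so that (a) the escape time is short enough that the linearization remains valid, yet (b) the amplification window is long enough for the perturbation to actually exit the ball. This tradeoff is exactly what forces the polynomial (rather than logarithmic) dependence on $1/\gamma$ and is the technical core of the argument. All remaining ingredients---descent in (i), characterization of (iii), and the union bound---are comparatively routine once this escape rate is in hand.
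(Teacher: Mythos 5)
The paper does not prove this statement; it is cited verbatim as Theorem~1 of Ge, Huang, Jin, and Yuan (2015), so there is no in-paper proof to compare against. Against the actual source, your sketch captures the right high-level architecture: partition iterates into a large-gradient region, a strict-saddle region, and an approximate-SOSP region; descend via smoothness in the first; escape via noise-amplified negative curvature in the second; and sum a potential argument to bound total time outside the third. Those three pillars are indeed what the reference theorem rests on.

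Two points are worth flagging. First, an attribution slip: the two-trajectory coupling argument you describe is not Ge et al.'s but was introduced later by Jin, Ge, Netrapalli, Kakade, and Jordan (2017) for perturbed \emph{gradient descent}. Ge et al.\ (2015) analyze the noisy SGD update directly, tracking the component of the iterate along the negative-curvature eigenspace over a window of $\widetilde{O}(1/\gamma^2)$ steps and showing via a second-order Taylor expansion and a variance lower bound from the injected noise that the function value drops in expectation with high probability; no coupling is used. The two routes prove similar facts, but your write-up attributes Jin et al.'s device to the paper being cited. Second, and more substantively, your final step---sending $\epsilon,\gamma\to 0$, claiming strict saddles are ``isolated on each sublevel set,'' and union-bounding over them to conclude that \emph{every limit point} is an SOSP---does not hold under the stated hypotheses and is not what Ge et al.\ prove. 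Strict-saddle functions need not have isolated (or even countably many) saddle points, and nothing in the assumptions controls the limit as $\gamma\to 0$. The actual theorem is quantitative: it fixes a $(\alpha,\gamma,\epsilon,\delta)$-strict-saddle profile, a bounded-below and Hessian-Lipschitz objective, and a fixed step size, and concludes that after $\mathrm{poly}(1/\epsilon,1/\gamma,\dots)$ iterations the iterate is $\widetilde{O}(\sqrt{\eta})$-close to some local minimum with probability $1-\zeta$. Replacing your limit-and-union-bound conclusion with this finite-horizon, fixed-parameter statement is what is needed to make the argument close.
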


In the context of our work, the implication is that LoRA training converges to SOSPs. The question we address is whether such SOSPs are global minima or whether it is possible to converge to a bad local minimum.

\subsection{Weight decay and nuclear norm regularization}
\label{sec::1.3}

Let $\lambda\ge 0$ and 
\[
\widehat{\mathcal{L}}_\lambda^\mathrm{lora} (\mathbf{A}, \mathbf{B}) \triangleq \widehat{\mathcal{L}}^{\mathrm{lora}}(\mathbf{A},\mathbf{B}) + \frac{\lambda}{2}\big( \norm{\mathbf{A}}_F^2 +\norm{\mathbf{B}}_F^2\big).
\]
Practical LoRA training typically employs weight decay \citep{hu2022lora, dettmers2023qlora} and applying SGD with weight decay on $\widehat{\mathcal{L}}^{\mathrm{lora}}$ is equivalent to minimizing $\widehat{\mathcal{L}}_\lambda^\mathrm{lora}$ without weight decay. In other words, the effect of weight decay is equivalent to adding $\ell_2$-regaularization. Let
\[
\widehat{\mathcal{L}}_{\lambda}^\mathrm{full}(\mathbf{X}) \triangleq 
\widehat{\mathcal{L}}^\mathrm{full}(\mathbf{X}) + \lambda \norm{\mathbf{X}}_\star.
\]
From the prior literature on low-rank matrix sensing and Burer--Monterio factorizations \citep[Lemma~5.1]{doi:10.1137/070697835}, it is known that minimizing this $\ell_2$-regularized problem in $\mathbf{A}$ and $\mathbf{B}$ is mathematically equivalent to minimizing the nuclear-norm regularized loss in the product $\mathbf{X}=\mathbf{A}\mathbf{B}^\intercal$ subject to a rank constraint. In other words
\begin{align*}
\!\!\!
\begin{array}{ll}
\underset{\mathbf{A},\,\mathbf{B}}{\text{minimize}} &
\displaystyle{ \widehat{\mathcal{L}}_\lambda^\mathrm{lora} (\mathbf{A}, \mathbf{B}) }
\end{array}
\,\,
\Leftrightarrow
\,\,
\begin{array}{ll}
\underset{\mathbf{X}}{\text{minimize}} &
\displaystyle{ \widehat{\mathcal{L}}_{\lambda}^\mathrm{full}(\mathbf{X}) }\\
\mbox{subject to}& \mathrm{rank}(\mathbf{X}) \le r
\end{array}
\end{align*}

When using LoRA, we hope to match the performance of full fine-tuning. We expect this to be feasible if the full fine-tuning problem (with nuclear norm regularization) admits a global minimizer whose rank is at most \(r\), since the LoRA update \(\mathbf{A}\mathbf{B}^\intercal\) cannot represent updates of rank larger than \(r\). Therefore, as we discuss further in Section~\ref{sec::2.1}, we conduct our analysis under the assumption that \(\widehat{\mathcal{L}}_{\lambda}^\mathrm{full}\) has a low-rank global minimizer.

\paragraph{Nuclear norm regularization.}
Nuclear norm regularization is a popular technique that promotes low-rank solutions in matrix optimization. As the convex envelope of the rank function on the unit ball \citep{fazel2001rank}, the nuclear norm penalty provides a tractable alternative to directly minimizing rank. Its effectiveness in yielding low-rank solutions has been demonstrated both theoretically and empirically across various fields, including matrix sensing \citep{doi:10.1137/070697835}, computer vision \citep{cabral2013unifying}, nonconvex optimization \citep{hu_low_2021}, deep learning \citep{kobayashi2024weight}, and LoRA \citep{jang2024lora}. Collectively, these prior results make the assumption that \(\widehat{\mathcal{L}}_{\lambda}^\mathrm{full}\) admits a low-rank minimizer more natural.

\section{Main assumptions} \label{sec:2}

In this section, we define and quickly justify the main assumptions used in our analyses of Section~\ref{sec:3}.

\subsection{Existence of a low-rank minimizer} \label{sec::2.1}

Throughout our analysis, we assume that \textbf{there exists a rank $r_\star$ global minimizer of full fine-tuning loss \(\widehat{\mathcal{L}}_{\lambda}^\mathrm{full}\)} and that \textbf{our LoRA module uses rank $r\ge r_\star$}.

We argue that there is sufficient conceptual and experimental justification supporting the assumption. Initially, LoRA \citep{hu2022lora} was proposed based on the insight that learned over-parameterized models lie in a low intrinsic dimension \citep{li2018measuring, aghajanyan-etal-2021-intrinsic}, making them amenable to low-rank updates during fine-tuning. Moreover, as discussed in Section~\ref{sec::1.3}, training LoRA with weight decay is equivalent to nuclear norm regularization in full fine-tuning, thereby strongly biasing the solution toward low rank. As shown in Table~\ref{tab:lowrank_hypo} and further discussed in Section~\ref{sec::experiments}, we experimentally verify the low-rank assumption in a few setups. Finally, the extensive empirical literature demonstrating the success of LoRA with small rank \(r\) further justifies this assumption.

\begin{table}[t]
\centering
\caption{Global minimizer rank $\mathrm{rank}(X_\star)$ as a function of weight decay value $\lambda$.}
\vspace{-0.1in}
\label{tab:lowrank_hypo}
\resizebox{0.97\linewidth}{!}{
\begin{tabular}{cccccccc}
\toprule 

\multirow{2}{*}{\centering SST2} & Max Rank    & 749 & 107   & \textit{5}     & 3    & 1   \\  
\cmidrule(lr){2-7} 
& $\lambda$   & 0.0 & 0.001 & 0.005 & 0.01 & 0.1 \\
\midrule
\multirow{2}{*}{\centering CIFAR100} & Max Rank    & 752 & 23 & 12 & 4 & 1 \\  
\cmidrule(lr){2-7}  
 & $\lambda$   & 0.0 & 0.0005 & 0.001 & 0.003 & 0.005 \\  
\bottomrule
\end{tabular}}
\vspace{-0.1in}
\end{table}

Nevertheless, it may sometimes be more realistic to assume that the global minimizer of full fine-tuning is only \emph{approximately} low rank. We address this issue in Section~\ref{sec::3.3}, where we generalize the analysis to the case where the global minimizer of full fine-tuning is not exactly low rank.

\subsection{Restricted strong convexity and smoothness} \label{sec::2.2}

Our analyses also rely on the assumptions of restricted smoothness and restricted strong convexity, which are weaker assumptions compared to the smoothness and strong convexity assumptions commonly used in optimization.

We say a twice-differentiable function $f\colon \mathbb{R}^{m \times n} \to \mathbb{R}$ is $(\alpha,r,D)$-\underline{\textbf{restricted strongly convex}} about $X_\star$ if
\[
\langle \nabla f(X) - \nabla f(X_\star), X - X_\star \rangle \geq \alpha \|X - X_\star\|_F^2.
\]
for any $X \in \mathbb{R}^{m \times n}$ such that $\|X-X_\star\|_F \le D$ and $\mathrm{rank}(X) \leq r$. We denote the largest $\alpha$ such that $f$ is $(\alpha,r,D)$-restricted strongly convex about $X_\star$ as the \underline{\textbf{$(r,D)$-RSC constant}} of $f$ about $X_\star$.

We say a twice-differentiable function $f\colon \mathbb{R}^{m \times n} \to \mathbb{R}$ is  $(\beta,r,D)$-\underline{\textbf{restricted smooth}} about $X_\star$ if 
\[
\nabla^2 f(X) [UX + XV, UX + XV] \leq \beta \|UX + XV\|_F^2
\]
for any [$X \in \mathbb{R}^{m \times n}$ such that $\|X-X_\star\|_F \le D$ and $\mathrm{rank}(X) \leq r$], [$U \in \mathbb{R}^{m \times m}$ such that $\|U\|_F = \|V\|_F = 1$] and $\mathrm{rank}(U)= 1$], and [$V \in \mathbb{R}^{n \times n}$ such that $\|V\|_F = 1$ and $\mathrm{rank}(U) = \mathrm{rank}(V) = 1$].
We denote the smallest $\beta$ such that $f$ is $(\beta,r,D)$-restricted smooth about $X_\star$ (or $\beta=\infty$ if there is no such finite value) as the \underline{\textbf{$(r,D)$-RSM constant}} of $f$ about $X_\star$.

In this work, we consider the case where $ \alpha>0 $ and $\beta < \infty$. Although deep learning objectives are typically neither strongly convex nor have small smoothness constants, the \emph{restricted} notions of strong convexity and smoothness are valid in many practical fine-tuning scenarios as we empirically demonstrate in Section~\ref{sec::experiments}.
Finally, this current definition treats \(f\) as a function of a single matrix \(X\). In Section~\ref{sec::3.2}, we generalize the definitions to $\mathbf{X}=(X^{(1)} , X^{(2)} , \dots , X^{(L)})$ with multiple matrices.

\section{Spurious Local minima of LoRA} \label{sec:3}

In this section, we analyze the loss landscape of LoRA fine-tuning and show that in the ``generic regime'', a second-order stationary point (SOSP) is either (i) a global minimizer with small rank and small magnitude or (ii) a spurious solution with high rank and large magnitude.

Section~\ref{sec::3.1} starts by presenting the result in the simpler setup of fine-tuning a single matrix when a low-rank global minimizer exists. Section~\ref{sec::3.2} extends the result to the setup of fine-tuning multiple matrices. Section~\ref{sec::3.3} extends the theory to work when an \emph{approximately} low-rank global minimizer exists. The extensions of Sections~\ref{sec::3.2} and \ref{sec::3.3} slightly complicate the notation, but the qualitative conclusion is maintained. In Section~\ref{sec::3.4}, we discuss why first-order optimizers with zero-initialization and weight decay, are unlikely to converge to the spurious local minimizers.

\subsection{LoRA converges to a global minimizer or fails loudly\!\!\!\!} \label{sec::3.1}

We now state our main result.

\begin{maintheorem} \label{thm:1}
Let $\lambda\ge 0$. 
Assume the full fine-tuning loss $\widehat{\mathcal{L}}^\mathrm{full}_\lambda$ has a rank-$r_\star$ global minimizer $X_\star$.
Respectively denote the $(r,D)$-RSC and $(r,D)$-RSM constants of $\widehat{\mathcal{L}}^\mathrm{full}$ about $X_\star$ as $\alpha$ and $\beta$.
Assume $\alpha>0$ and $\beta<\infty$.
Assume we use a LoRA module with rank $r\ge r_\star$.
Then, every SOSP $(A, B)$ of $\widehat{\mathcal{L}}^{\mathrm{lora}}_\lambda $ with $X_\square=AB^\intercal $ and $\|X_\square -X_\star\|_F \le D$ satisfies the following.
\begin{enumerate}
        \item  If $2\alpha >\beta$ (special regime), $X_\square$ is a global minimum.
        \item If $2\alpha \le \beta$ (generic regime), one of the following holds.
        \begin{itemize}
        \item[(i)] $X_\square$ is a global minimum.
         \item[(ii)] $X_\square$ is not a global minimum, $\mathrm{rank}(X_\square)=r$ with $\sigma_r (X_\square) \ge  \frac{2\alpha}{\beta} \sigma_{r_\star} (X_\square)$,
         and         \[
         \norm{X_\square-X_\star}_F^2 \ge \frac{\|X_\square -
         \Pi_{\mathrm{rank}\le r_\star}(X_\square)
         \|_F^2}{1-\frac{2\alpha \sigma_{r_\star}}{\beta \sigma_r }},
         \] 
         where $\Pi_{\mathrm{rank}\le r_\star}(X_\square)$ is the projection of $X_\square$ onto the set of matrices of rank $r_\star$ or less.
        \end{itemize}
    \end{enumerate} 
    To clarify, when we say $X_\square$ is or is not a global minimum, it is with respect to $\widehat{\mathcal{L}}^\mathrm{full}_\lambda$.
\end{maintheorem}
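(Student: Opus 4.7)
The plan is to adapt the Burer--Monteiro SOSP analysis from nonconvex low-rank matrix recovery to the $\ell_2$-regularized setting, writing $f=\widehat{\mathcal{L}}^\mathrm{full}$ and $X_\square=AB^\intercal$. The first-order conditions
\begin{align*}
\nabla f(X_\square)\,B+\lambda A=0, \qquad \nabla f(X_\square)^\intercal A+\lambda B=0
\end{align*}
yield the balancing identity $A^\intercal A = B^\intercal B$ and tie $\nabla f(X_\square)$ to the ranges of $A$ and $B$. The Hessian-PSD condition, expanded on a perturbation $(\Delta A,\Delta B)$ with $\Delta X=\Delta A\,B^\intercal+A\,\Delta B^\intercal$, gives
\begin{align*}
&\nabla^2 f(X_\square)[\Delta X,\Delta X] + 2\langle \nabla f(X_\square),\Delta A\,\Delta B^\intercal\rangle\\
&\qquad{}+\lambda(\|\Delta A\|_F^2+\|\Delta B\|_F^2)\ge 0,
\end{align*}
and I would probe this with two distinct families of test perturbations.

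The rank-deficient case is the more elementary one. If $\mathrm{rank}(X_\square)<r$, an orthogonal rotation puts an entire column of both $A$ and $B$ to zero, and I place a rank-one pair $(u,v)$ into that column. Then $\Delta X=0$, the $\nabla^2 f$ term vanishes, and the Hessian condition reduces to $2u^\intercal\nabla f(X_\square)v+\lambda(\|u\|^2+\|v\|^2)\ge 0$ for all $u,v$; minimizing forces $\|\nabla f(X_\square)\|_2\le\lambda$. Combined with the first-order equations (which make $-\nabla f(X_\square)$ act as $\lambda$ times a partial-isometry on the range/row space of $X_\square$), this is exactly the inclusion $-\nabla f(X_\square)\in\partial(\lambda\|\cdot\|_*)(X_\square)$, so $X_\square$ is a subgradient-stationary point of $\widehat{\mathcal{L}}^\mathrm{full}_\lambda$. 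Summing this subgradient inequality with the analogous one for $X_\star$ yields $\langle\nabla f(X_\square)-\nabla f(X_\star),X_\square-X_\star\rangle\le 0$, which against RSC gives $\alpha\|X_\square-X_\star\|_F^2\le 0$ and hence $X_\square=X_\star$. So every spurious SOSP must have $\mathrm{rank}(X_\square)=r$.

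For the full-rank case, I align $X_\star=A_\star B_\star^\intercal$ (padded to $r$ columns) with $(A,B)$ via the orthogonal rotation $R$ minimizing $\|A-A_\star R\|_F^2+\|B-B_\star R\|_F^2$ and take $(\Delta A,\Delta B)=(A-A_\star R,\,B-B_\star R)$, so $\Delta X$ encodes $X_\square-X_\star$. The cross term is collapsed using the first-order equations, the $\nabla^2 f$ piece is upper-bounded via RSM after decomposing $\Delta X$ into matched rank-one $UX_\square+X_\square V$ summands, and the whole expression is lower-bounded via RSC applied to $\langle\nabla f(X_\square)-\nabla f(X_\star),X_\square-X_\star\rangle$. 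In the special regime $2\alpha>\beta$, the RSC bound strictly dominates the RSM bound and again forces $X_\square=X_\star$. In the generic regime $2\alpha\le\beta$, the leftover slack is quantitative: comparing the bounds on the Eckart--Young ``tail'' $X_\square-\Pi_{\mathrm{rank}\le r_\star}(X_\square)$ should yield both the singular-value gap $\sigma_r(X_\square)\ge\tfrac{2\alpha}{\beta}\sigma_{r_\star}(X_\square)$ and the claimed Frobenius-distance lower bound.

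The main obstacle I foresee is the nonstandard RSM hypothesis: it only controls $\nabla^2 f$ along directions of the form $UX+XV$ with rank-one $U,V$, which is strictly weaker than a bound on all rank-$2$ directions. Bounding $\nabla^2 f(X_\square)[\Delta X,\Delta X]$ for the alignment perturbation therefore requires writing $\Delta X$ as a sum of pieces matched to the SVD of $X_\square$ and carefully managing the cross terms of the resulting bilinear form---this is where the exact factor $2\alpha/\beta$ and the Eckart--Young projection in case (ii) should emerge.
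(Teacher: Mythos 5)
Your rank-deficient case is essentially the paper's: the test perturbation living in the kernel of $A$ and $B$ gives $\|\nabla f(X_\square)\|_2\le\lambda$, and combined with the first-order equations this identifies $-\nabla f(X_\square)$ as a subgradient of $\lambda\|\cdot\|_*$ at $X_\square$; summing with the subgradient inclusion at $X_\star$ and invoking RSC gives $X_\square=X_\star$. This is correct and matches the argument in Appendix~A.

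The full-rank case, however, is where your proposal diverges from the paper and where you have a genuine gap. You propose the classical Burer--Monteiro alignment perturbation $(\Delta A,\Delta B)=(A-A_\star R,\ B-B_\star R)$ and then want to bound $\nabla^2 f(X_\square)[\Delta X,\Delta X]$ via RSM. But the RSM hypothesis here is deliberately weak: it only controls the Hessian along directions of the exact form $UX_\square+X_\square V$ with \emph{rank-one} $U,V$. The alignment direction $\Delta X=\Delta A\,B^\intercal+A\,\Delta B^\intercal\approx X_\square-X_\star$ has no such structure in general, and decomposing it into $UX_\square+X_\square V$ summands introduces bilinear cross terms in the Hessian that a per-summand bound does not control; there is no reason a single constant $\beta$ survives this decomposition. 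You flag this as "the main obstacle," but the paper's proof does not overcome it---it avoids it entirely. The paper never applies RSM to an alignment perturbation. Instead it picks the single rank-one-structured direction $(U,V)=(-u_\star u_r^\intercal A,\ v_\star v_r^\intercal B)$, for which $\Delta X = X_\square v_r v_\star^\intercal - u_\star u_r^\intercal X_\square$ manifestly fits the RSM template, and uses it solely to prove the spectral bound $\|S\|_2\le\lambda+\beta\sigma_r$ on the residual $S$ in the decomposition $\nabla f(X_\square)=-\lambda L_X R_X^\intercal+S$. From there the proof works entirely in $X$-space: it builds a near-subgradient $Z=g+\nabla f(X_\square)$ with $\|Z\|_2\le\beta\sigma_r$, chooses $\kappa=\sigma_{r_\star}/(\beta\sigma_r)$, invokes Eckart--Young--Mirsky to relate $\Pi_{\mathrm{rank}\le r_\star}(X_\square)$ to $X_\square-\kappa Z$, and then sums three inequalities (EYM optimality, RSC, subgradient monotonicity at $X_\star$) to obtain $(2\kappa\alpha-1)\|X_\star-X_\square\|_F^2+\|X_\square^{r_\star}-X_\square\|_F^2\le0$. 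The factor $2\alpha\sigma_{r_\star}/(\beta\sigma_r)$ and the two-way dichotomy fall directly out of this inequality. Your sketch of the full-rank case thus both (a) relies on an RSM application that the hypothesis does not justify, and (b) does not explain how the quantitative generic-regime conclusion (the $\sigma_r$ vs.\ $\sigma_{r_\star}$ threshold and the Frobenius lower bound) would emerge from comparing RSC/RSM; the phrases "should yield" and "should emerge" are covering exactly the part of the argument that is novel in this paper. To fix the proposal, replace the alignment step with the paper's spectral-norm bound on $S$ and the subsequent Eckart--Young--Mirsky/subgradient-summation argument.
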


We denote $2\alpha >\beta $ as the \emph{special regime}, as the loss objective should be very well-conditioned to fall in this regime. Most practical setups would fall into the \emph{generic regime} with $\beta \ge2\alpha$, thereby being the regime of primary interest.

The global minimizer $X_\star$ of the full fine-tuning loss $\widehat{\mathcal{L}}^\mathrm{full}_\lambda$ is assumed to be low rank, and we intuitively understand that $X_\star$ should have small magnitude since we are fine-tuning. Theorem~\ref{thm:1} states that in the generic regime, there may be additional spurious local minima, but those will have high rank and will be far away from the global minimizer $X_\star$.

The following corollary restates Theorem~\ref{thm:1} in an alternate form that clarifies its main conclusions.
\begin{newcorollary}
\label{cor:1}
Consider the setup in Theorem~\ref{thm:1}. Further assume the strict inequality $r>r_\star$.
Let $(A, B)$ be a SOSP of $\widehat{\mathcal{L}}^{\mathrm{lora}}_\lambda $ with $X_\square=AB^\intercal $ and $\|X_\square -X_\star\|_F \le D$. Then, 
\begin{itemize}
\item[(i)] If $\sigma_r(X_\square)\le \frac{2\alpha}{\beta}\sigma_{r_\star}(X_\square)$, then $X_\square$ is a global minimizer.
\item[(ii)] If $\sigma_r(X_\square)> \frac{2\alpha}{\beta}\sigma_{r_\star}(X_\square)$, then $X_\square$ is a spurious solution, and further $X_\square$ has large magnitude with 
\[
         \norm{X_{\square}}_F \ge\sqrt{\frac{\sum_{s=r_\star+1}^{r}\sigma_{s}^2(X_{\square})}{1-\frac{2\alpha \sigma_{r_\star}}{\beta \sigma_r}}}-\norm{X_\star}_F.
\]
\end{itemize}
\end{newcorollary}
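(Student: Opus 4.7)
The plan is to read Corollary~\ref{cor:1} as a direct reformulation of Theorem~\ref{thm:1}, with the SOSP $X_\square$ sorted by the spectral ratio $\sigma_r(X_\square)/\sigma_{r_\star}(X_\square)$ against the threshold $2\alpha/\beta$. The special regime ($2\alpha > \beta$) is automatically subsumed in part (i): since $\sigma_r \le \sigma_{r_\star}$ and $2\alpha/\beta > 1$, the hypothesis of (i) holds, and Theorem~\ref{thm:1} already declares $X_\square$ to be a global minimum. So the nontrivial work happens in the generic regime.

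For part (i), I would rule out case 2(ii) of Theorem~\ref{thm:1} by exploiting the tension between the hypothesis $\sigma_r \le \frac{2\alpha}{\beta}\sigma_{r_\star}$ and the lower bound in 2(ii). A strict inequality $\sigma_r < \frac{2\alpha}{\beta}\sigma_{r_\star}$ directly contradicts the requirement $\sigma_r \ge \frac{2\alpha}{\beta}\sigma_{r_\star}$ appearing in 2(ii). At the boundary $\sigma_r = \frac{2\alpha}{\beta}\sigma_{r_\star}$, the denominator $1 - \frac{2\alpha\sigma_{r_\star}}{\beta\sigma_r}$ of the 2(ii) bound vanishes; since 2(ii) demands $X_\square$ be full rank we have $\sigma_r>0$ and hence $\|X_\square - \Pi_{\mathrm{rank}\le r_\star}(X_\square)\|_F^2 \ge \sigma_r^2 > 0$, so 2(ii) would force $\|X_\square - X_\star\|_F = \infty$, contradicting the standing assumption $\|X_\square - X_\star\|_F \le D$. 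Hence case 2(i) must hold and $X_\square$ is a global minimum.

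For part (ii), the hypothesis $\sigma_r > \frac{2\alpha}{\beta}\sigma_{r_\star} \ge 0$ forces $\sigma_r > 0$, so $X_\square$ has rank exactly $r$. To see $X_\square$ is spurious, I would argue local uniqueness of the global minimizer: if some $\widetilde X$ with $\mathrm{rank}(\widetilde X)\le r$ and $\|\widetilde X - X_\star\|_F \le D$ were also a global minimum of $\widehat{\mathcal{L}}^{\mathrm{full}}_\lambda$, then both $\widetilde X$ and $X_\star$ satisfy $0 \in \nabla \widehat{\mathcal{L}}^{\mathrm{full}}(\cdot) + \lambda\,\partial\|\cdot\|_*$, so subdifferential monotonicity of the nuclear norm gives $\langle \nabla \widehat{\mathcal{L}}^{\mathrm{full}}(\widetilde X) - \nabla \widehat{\mathcal{L}}^{\mathrm{full}}(X_\star),\, \widetilde X - X_\star \rangle \le 0$; combining with $(\alpha,r,D)$-RSC yields $\alpha\|\widetilde X - X_\star\|_F^2 \le 0$, hence $\widetilde X = X_\star$. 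Since $X_\star$ has rank $r_\star < r$ while $X_\square$ has rank $r$, we conclude $X_\square\ne X_\star$ and therefore $X_\square$ is not a global minimizer, placing us in case 2(ii). The magnitude bound then drops out: substitute $\|X_\square - \Pi_{\mathrm{rank}\le r_\star}(X_\square)\|_F^2 = \sum_{s=r_\star+1}^{r}\sigma_s^2(X_\square)$ (read off from the SVD of the rank-$r$ matrix $X_\square$) into the 2(ii) inequality, take the square root, and apply the reverse triangle inequality $\|X_\square\|_F \ge \|X_\square - X_\star\|_F - \|X_\star\|_F$.

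The only real obstacle is the local-uniqueness step in part (ii); once that is in hand, the remainder of the corollary is pure bookkeeping on the inequalities provided by Theorem~\ref{thm:1}.
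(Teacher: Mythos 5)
The paper gives no explicit proof of Corollary~\ref{cor:1}, presenting it only as a ``restatement'' of Theorem~\ref{thm:1}, so the comparison here is against what is implicit in the paper. Your derivation is correct, and it fills a genuine logical gap that the paper glosses over. Theorem~\ref{thm:1} on its own only asserts a dichotomy (either $X_\square$ is global, or it is spurious with the stated properties); it does \emph{not} assert that the two branches are triggered by the sign of $\sigma_r - \tfrac{2\alpha}{\beta}\sigma_{r_\star}$. To obtain part~(ii) of the corollary you must rule out the possibility that a full-rank SOSP with $\sigma_r > \tfrac{2\alpha}{\beta}\sigma_{r_\star}$ is a \emph{different} global minimizer of $\widehat{\mathcal{L}}^{\mathrm{full}}_\lambda$. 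Your local-uniqueness step --- combining monotonicity of $\partial(\lambda\|\cdot\|_*)$ at two first-order-optimal points with $(\alpha,r,D)$-RSC to conclude $\alpha\|\widetilde X - X_\star\|_F^2 \le 0$ --- is exactly the right ingredient and is sound (it only invokes RSC inside the domain $\|\widetilde X - X_\star\|_F\le D$, $\mathrm{rank}(\widetilde X)\le r$, which is where $X_\square$ lives). Your handling of the boundary case $\sigma_r = \tfrac{2\alpha}{\beta}\sigma_{r_\star}$ in part~(i) is also valid, though slightly informal (``forces $\|X_\square-X_\star\|_F = \infty$''); a tidier route is to observe, from the intermediate inequality $(2\kappa\alpha-1)\|X_\star-X\|_F^2 + \|X^{r_\star}-X\|_F^2 \le 0$ in the proof of Theorem~\ref{thm:1}, that the boundary with $\sigma_r>0$ forces $X = X^{r_\star}$, i.e.\ $\mathrm{rank}(X)\le r_\star < r$, a contradiction with full rank. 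The remaining bookkeeping (Eckart--Young identity $\|X_\square - \Pi_{\mathrm{rank}\le r_\star}(X_\square)\|_F^2 = \sum_{s=r_\star+1}^r\sigma_s^2$ and the reverse triangle inequality) is exactly as you say. In short: correct, and more careful than the paper itself on the ``spurious'' conclusion.
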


\textbf{LoRA training converges to a global minimizer or fails loudly.}
As discussed in Section~\ref{sec::1.2}, \citet{pmlr-v49-lee16} and \citet{pmlr-v40-Ge15} imply that LoRA fine-tuning with SGD converges to a SOSP. In Section~\ref{sec::3.4}, we argue why it is likely that the SOSP we converge to is a global minimizer.

However, if LoRA fine-tuning does converge to a spurious solution, its high rank and large magnitude would be noticeable, and, as the experiments in Section~\ref{sec::experiments} show, generalization will be poor. In this sense, we describe this mode of failure to be ``failing loudly.''

\textbf{Relation to prior work.}
Interestingly, Theorem~\ref{thm:1} completely includes the prior loss landscape analysis of \citep{jang2024lora}, which considers a linearized loss in the NTK regime with an \(\varepsilon\)-perturbation. This perturbation ensures \(2\alpha = 2\varepsilon > \beta = \varepsilon\), placing the loss objective in the special regime. Then, with Theorem~\ref{thm:1}, we conclude that any SOSP is a global minimum.

\subsubsection{Proof outline of Theorem~\ref{thm:1}}
Our proof technique takes inspiration from the low-rank optimization literature. In fact, the analysis in the special regime \(\,2\alpha \ge \beta\) naturally extends results from matrix sensing \citep{8357489, doi:10.1137/18M1231675}. On the other hand, the analysis on the generic regime \(\,2\alpha < \beta\) is a novel result of ours. In the matrix sensing setting, showing that local minimizers near the solution are global minimizers has limited meaning since there is not a good estimate of the global minimizer, so such results were not pursued. On the other hand, in the LoRA fine-tuning setup, $0$, the pre-trained baseline, is a good estimate of the global minimizer.

We defer the full proof of Theorem~\ref{thm:1} to Appendix~\ref{Appendix:A},  providing a brief outline here.
     For notational simplicity, write $f(X)=\widehat{\mathcal{L}}^{\mathrm{full}}(X)$ and $g(A, B)=\widehat{\mathcal{L}}^{\mathrm{lora}}(A, B)$, and $X=X_\square$.
     (So $X$ is assumed to be an SOSP.)
    Denote the compact SVD of $X$ as $L_X \Sigma_X R_X^\intercal$, and $\sigma_i , u_i , v_i$ as the $i$-th (largest) singular value of $X$ and the corresponding singular vectors. From the first and second-order optimality of $g(A, B)$, we acquire the following properties:
    \begin{enumerate}
        \item $0=\nabla_{A} g(A, B) = \nabla f(X) \cdot B+\lambda A$
        \item $0=\nabla_{B} g(A, B) = \nabla f(X)^\intercal \cdot A +\lambda B$
        \item $\nabla^2 g(A, B) [(U,V),(U,V)]= 2\langle \nabla f(X), UV^\intercal \rangle + \nabla ^2 f(X)[AV^\intercal +UB^\intercal , AV^\intercal +UB^\intercal] + \lambda (\norm{U}_F^2 + \norm{V}_F^2 ) \ge 0 $ for any $(U,V)$.
    \end{enumerate}
    Properties 1 and 2 imply
     $\nabla f(X)$ can be represented as 
    \begin{align} \label{nablaf(X)_svd}
    \nabla f(X) = -\lambda L_X R_X ^\intercal + S , \quad L_X^\intercal S = SR_X = 0
    \end{align}
    for some matrix $S$. Furthermore, plugging in $(U,V) = (-u_\star u_r^\intercal A, v_\star v_r ^\intercal B)$ into property 3 and using the $\beta$-restricted smoothness of $f$, where $(u_\star , v_\star)$ are the top singular vectors of $S$, we find 
    \begin{align} \label{spectral_norm}
    \|S\|_2 \le \lambda +\beta \sigma_r .
    \end{align}

From \eqref{nablaf(X)_svd}, \eqref{spectral_norm} we can induce there exists a  subgradient $g\in \partial (\lambda \|X\|_* )$ such that $\|g+\nabla f(X)\|_2 \le \beta \sigma_r$. 

Denoting $Z=g+\nabla f(X)$ and $\kappa = \frac{\sigma_{r_\star}}{\beta \sigma_r}$, we see $\|\kappa Z\| \le \sigma_{r_\star}$ and therefore the top $r_\star$ singular vectors of $X-\kappa Z$ coincide with those of $X$. Thus, by the Eckart--Young--Mirsky Theorem, we have
\[
\Pi_{\mathrm{rank}\le r_\star} (X)\in \underset{\text{rank}(Y)\le r_\star}{\text{argmin}} \|Y-(X-\kappa Z)\|_F^2.
\]
Since $\text{rank}(X_\star) =r_\star$,
\[
\| X^{r_\star} -X +\kappa Z\|_F^2 \le \kappa X\| X_\star -X +\kappa Z\|_F^2
\]
which is again equivalent to
\begin{equation*} 
\|X^{r_\star} -X\|_F^2 \le \|X_\star -X\|_F^2 + 2\kappa \langle X_\star-X,Z \rangle
\end{equation*}
Now $\alpha$-restricted convexity at $X_\star$ implies
    \begin{equation*} 
       \langle X-X_\star, \nabla f(X) - \nabla f(X_\star) \rangle \ge \alpha \norm{X- X_\star}_F ^2 
    \end{equation*}
By the global optimality of  $X_\star$, from \citet[Theorem~3.1]{mordukhovich1995nonconvex} we have $-\nabla f(X_\star) \in \partial  (\lambda \|X_\star\|_*)$ and thus the subgradient property implies 
\begin{align*}
    \langle X_\star , g -(-\nabla f(X_\star))\rangle \ge 0
\end{align*}
Summing up  the three inequalities, we have 
\[
(2\kappa \alpha -1) \|X_\star -X\|_F^2 + \|X^{r_\star} -X\|_F^2 \le 0
\]
Therefore  when $2\kappa \alpha >1$, $X_\star = X$, and when $2\kappa \alpha <1$ the inequality of the theorem holds.
\qed

\subsection{Extension to fine-tuning multiple matrices} \label{sec::3.2}
For the sake of notational convenience, Theorem~\ref{thm:1} was stated for the case of fine-tuning a single weight matrix. In this section, we generalize the result to the case of fine-tuning multiple matrices.

First, we extend the definition of restricted smoothness and strong convexity to the multiple matrix case.

Let $f\colon \mathbb{R}^{m_1 \times n_1} \times \dots \times \mathbb{R}^{m_L \times n_L} \to \mathbb{R}$
be twice differentiable. 
Let $\mathbf{X}=(X^{(1)} , X^{(2)} , \dots , X^{(L)})$, $\alpha = (\alpha ^{(1)}, \dots , \alpha ^{(L)})$, and  $\mathbf{\beta} = (\beta ^{(1)}, \dots , \beta ^{(L)})$.

We say $f$ is $(\alpha,r,D)$-restricted strongly convex about $X_\star$ if for each $1\le l\le L$,
\[
\langle \nabla_l f(\mathbf{X}_\star) - \nabla_l f(\mathbf{X}), X^{(l)} - X_\star^{(l)} \rangle \geq \alpha^{(l)} \|X^{(l)} - {X_\star}^{(l)}\|_F^2.
\]
for any $\mathbf{X}$ such that $\|\mathbf{X}-\mathbf{X}_\star\|_F \le D$.
We denote the tuple $\alpha$ of the largest $\alpha^{(l)}$s such that $f$ is $(\alpha,r,D)$-restricted strongly convex about $\mathbf{X}_\star$ as the $(r,D)$-RSC constant of $f$ about $\mathbf{X}_\star$. 

We say a twice-differentiable function $f\colon \mathbb{R}^{m \times n} \to \mathbb{R}$ is  $(\beta,r,D)$-restricted smooth about $\mathbf{X}_\star$ if for each $1\le l\le L$,
\begin{align*}
\nabla_{l,l}^2 f (\mathbf{X})  [UX^{(l)} + X^{(l)}V, \ &UX^{(l)} + X^{(l)}V] \\ &\leq \beta^{(l)} \|UX^{(l)} + X^{(l)}V\|_F^2
\end{align*}
for any $\mathbf{X} $ such that $\|\mathbf{X}-\mathbf{X}_\star\|_F \le D$, $U \in \mathbb{R}^{m_l \times m_l}$ such that $\mathrm{rank}(U) = 1$ and $\|U\|_F =1$ , $ V \in \mathbb{R}^{n_l \times n_l}$ such that $\mathrm{rank}(V) = 1$ and $\|V\|_F=1$. We denote the tuple $\beta$ of the largest $\beta^{(l)}$ such that $f$ is $(\beta,r,D)$-restricted strongly convex about $X_\star$ as the $(r,D)$-RSM constant of $f$ about $\mathbf{X}_\star$. Here $\nabla_l, \nabla_{l,l}^2$ refers to the gradient and Hessian respect to the $l$th matrix $X^{(l)}$. 

Next, under this extended notion of restricted smoothness and convexity, we present the natural extension of Theorem~\ref{thm:1} below. The proof follows the same reasoning as in Theorem~\ref{thm:1} and is detailed in Appendix~\ref{Appendix:A}

\begin{maintheorem} \label{thm:2}
Let $\lambda\ge 0$. Assume the full fine-tuning loss $\widehat{\mathcal{L}}^\mathrm{full}_\lambda$ has a rank-$r_\star$ global minimizer $\mathbf{X}_\star=(X_\star^{(1)}, \dots, X^{(L)}_\star)$.
Respectively denote the $(r,D)$-RSC and $(r,D)$-RSM constants of $\widehat{\mathcal{L}}^\mathrm{full}$ about $\mathbf{X}_\star$ as $\alpha = (\alpha ^{(1)}, \dots , \alpha^{(L)})$ and $\beta = (\beta ^{(1)}, \dots , \beta^{(L)})$.
Assume $\alpha ^{(1)}, \dots , \alpha^{(L)}>0$ and $\beta ^{(1)}, \dots , \beta^{(L)}<\infty$.
Assume we use LoRA modules all with rank $r\ge r_\star$.
Then, every SOSP $(\mathbf{A}, \mathbf{B})$ of $\widehat{\mathcal{L}}_\lambda$ with 
$\mathbf{X}_\square=\mathbf{A}\mathbf{B}^\intercal$ and $\|\textbf{X}_\square -\textbf{X}_\star\|_F \le D$ satisfies the following.
    \begin{enumerate}
        \item  If $2\alpha^{(l)} \ge \beta^{(l)}$ for all $l=1,\dots,L$ (special regime), \\$\mathbf{X}_\square$ is a global minimum
       
        \item If $2\alpha^{(l)} <\beta^{(l)}$ for some $l=1,\dots,L$ (generic regime), one of the following holds.
        \begin{itemize}
        \item[(i)] $\mathbf{X}_\square$ is a global minimum.
         \item[(ii)] $\mathbf{X}_\square$ is not a global minimum, $X_\square^{(l)}$ is exactly rank $r$ with $\sigma_r (X^{(l)}_\square) >  \frac{2\alpha^{(l)}}{\beta^{(l)}} \sigma_{r_\star} (X^{(l)}_\square)$ and \vspace{-0.05in}
        \[
        \big\|X^{(l)}_\square-X^{(l)}_\star\big\|_F^2 \ge \frac{\big\|X^{(l)}_\square-\Pi_{\mathrm{rank}\le r_\star}(X^{(l)}_\square)\big\|_F^2}{1-\frac{2\alpha^{(l)} \sigma_{r_\star}}{\beta^{(l)} \sigma_r}}\]\vspace{-0.05in}
        for some $l=1,\dots,L$, 
        where $\Pi_{\mathrm{rank}\le r_\star}(X^{(l)}_\square)$ is the projection of $X^{(l)}_\square$ onto the set of matrices of rank $r_\star$ or less.        \end{itemize}
    \end{enumerate} 
    \vspace{-0.1in}
    To clarify, when we say $\mathbf{X}_\square$ is or is not a global minimum, it is with respect to $\widehat{\mathcal{L}}^\mathrm{full}_\lambda$.
\end{maintheorem}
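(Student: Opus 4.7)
The plan is to reduce Theorem~\ref{thm:2} to a block-wise application of the argument behind Theorem~\ref{thm:1}. The generalized restricted strong convexity and smoothness definitions are stated per block (in terms of $\nabla_l f$ and $\nabla^2_{l,l} f$), and in the LoRA loss the blocks couple only through the shared argument $\mathbf{X}=\mathbf{A}\mathbf{B}^\intercal$ of $f$. The core idea is to test the SOSP inequalities with perturbations supported in a single block, which collapses cross-block contributions and leaves a one-block copy of the proof of Theorem~\ref{thm:1}.

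Fix $l \in \{1,\dots,L\}$. The first-order stationarity of $\widehat{\mathcal{L}}^{\mathrm{lora}}_\lambda$ restricted to $(A^{(l)},B^{(l)})$ reproduces properties 1 and 2 from the proof of Theorem~\ref{thm:1} with $\nabla f(X)$ replaced by the partial gradient $\nabla_l f(\mathbf{X})$:
\[
\nabla_l f(\mathbf{X})\, B^{(l)} + \lambda A^{(l)} = 0, \qquad \nabla_l f(\mathbf{X})^\intercal A^{(l)} + \lambda B^{(l)} = 0.
\]
For the second-order condition, I would insert $(\mathbf{U},\mathbf{V})$ supported only in block $l$. Under such directions, every cross-block term $\nabla^2_{l,l'}f$ with $l\neq l'$ contracts against a zero direction and vanishes, so the full Hessian inequality collapses to the single-block version featuring $\nabla^2_{l,l} f(\mathbf{X})$. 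The $(\beta^{(l)},r,D)$-restricted smoothness hypothesis then delivers the SVD representation $\nabla_l f(\mathbf{X}) = -\lambda L_{X^{(l)}} R_{X^{(l)}}^\intercal + S^{(l)}$ and the spectral bound $\|S^{(l)}\|_2 \le \lambda + \beta^{(l)} \sigma_r(X^{(l)})$, exactly as in \eqref{nablaf(X)_svd} and \eqref{spectral_norm}.

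Combining these with the block-wise subgradient relation $-\nabla_l f(\mathbf{X}_\star) \in \partial(\lambda\|X_\star^{(l)}\|_*)$ coming from global optimality of $\mathbf{X}_\star$ (via \cite{mordukhovich1995nonconvex}) and with the $\alpha^{(l)}$-restricted strong convexity inequality at $\mathbf{X}_\star$, the same three-inequality summation as in Theorem~\ref{thm:1} yields
\[
(2\kappa^{(l)}\alpha^{(l)} - 1)\, \|X^{(l)} - X_\star^{(l)}\|_F^2 + \|X^{(l)} - \Pi_{\mathrm{rank}\le r_\star}(X^{(l)})\|_F^2 \le 0,
\]
where $\kappa^{(l)} = \sigma_{r_\star}(X^{(l)})/(\beta^{(l)}\sigma_r(X^{(l)}))$. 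If $2\alpha^{(l)} \ge \beta^{(l)}$ for every $l$, each inequality forces $X^{(l)} = X_\star^{(l)}$, yielding part~1. In the generic regime, either the same collapse still occurs for every block (case (i) of part~2), or there is some $l$ with $X^{(l)} \ne X_\star^{(l)}$, which via the inequality requires $2\kappa^{(l)}\alpha^{(l)} < 1$; rearranging delivers both the singular-value gap $\sigma_r(X^{(l)}) > (2\alpha^{(l)}/\beta^{(l)})\sigma_{r_\star}(X^{(l)})$ (hence full rank of $X^{(l)}_\square$) and the magnitude lower bound appearing in case (ii).

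The main obstacle is the decoupling step: one must verify that joint second-order stationarity genuinely supplies a clean per-block Hessian inequality matching property~3 of the single-matrix proof. This succeeds because the multi-matrix restricted smoothness assumption only constrains the diagonal pieces $\nabla^2_{l,l}f$, and restricting the SOSP test directions to a coordinate subspace preserves non-negativity of the quadratic form. Once this block-diagonal reduction is justified, the remainder is a faithful transcription of Theorem~\ref{thm:1}, with scalar constants promoted to $l$-indexed tuples and the structural conclusions recorded block by block.
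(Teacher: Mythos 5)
Your proposal matches the paper's proof of Theorem~\ref{thm:2} essentially step for step: the paper likewise picks out a block $l$, tests the SOSP conditions with $(\mathbf{U},\mathbf{V})$ supported only in that block to collapse the Hessian to $\nabla^2_{l,l}f$, and then runs the single-matrix argument verbatim with $\nabla f$ replaced by $\nabla_l f$ and scalar constants promoted to $l$-indexed ones. Your decoupling justification and the final case split (all blocks collapse to $X_\star^{(l)}$ vs.\ some block violates $2\kappa^{(l)}\alpha^{(l)}\ge 1$) are exactly what appears in Appendix~A; indeed your $\kappa^{(l)}=\sigma_{r_\star}/(\beta^{(l)}\sigma_r)$ corrects a small typo in the paper's version of that line.
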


\subsection{Extension to approximately low-rank minimizers} \label{sec::3.3}
In Sections~\ref{sec::3.1} and \ref{sec::3.2}, we assumed the nuclear-norm regularized full fine-tuning loss \(\widehat{\mathcal{L}}_{\lambda}^\mathrm{full}\) has a low-rank minimizer, but this assumption may be unrealistic especially when the weight-decay parameter $\lambda$ is too small. In this section, we relax this assumption and consider the case where \(\widehat{\mathcal{L}}_{\lambda}^\mathrm{full}\) has an \emph{approximately} low-rank minimizer. As in Section~\ref{sec::3.1}, we present here the result for the single matrix case. 
In Appendix~\ref{Appendix:A}, we provide a `Master Theorem' that combines the generalizations of Theorems~\ref{thm:2} and \ref{thm:3}.

We say $X_\star^{(\delta)}$ is a \emph{$\delta$-global minimizer of full fine-tuning} if
\[
\|X_\star^{(\delta)} - X_\star\|_F \le \delta
\]
for some $X_\star$ that exactly minimizes $\widehat{\mathcal{L}}^\mathrm{full}$.

\begin{maintheorem} \label{thm:3}
Let $\varepsilon>0$ and $\lambda\ge 0$.
Assume the full fine-tuning loss $\widehat{\mathcal{L}}^\mathrm{full}_\lambda$ has a rank-$r_\star$ $\delta$-global minimizer $X_\star^{(\delta)}$ with $\delta=o(\varepsilon^3)$.
Respectively denote the $(r,D)$-RSC and $(r,D)$-RSM constants of $\widehat{\mathcal{L}}^\mathrm{full}$ about $X_\star$ as $\alpha$ and $\beta$.
Assume $0<\alpha$ and $\beta<\infty$.
Assume we use a LoRA module with rank $r\ge r_\star$.
Then, every SOSP $(A, B)$ of $\widehat{\mathcal{L}}^{\mathrm{lora}}_\lambda $ with $X_\square=AB^\intercal $ and $\|X_\square -X_\star\|_F \le D$ satisfies the following.\vspace{-0.05in}
    \begin{enumerate}
        \item  If $2\alpha \ge\beta (1+\varepsilon)$ (special regime),
        $X_\square$ is an $\varepsilon$-global minimizer.
        \item If $2\alpha <\beta(1+\varepsilon)$ (generic regime), one of the following holds.
    \vspace{-0.05in}
        \begin{itemize}
        \item[(i)]
        $X_\square$ is an $\varepsilon$-global minimizer.
         \item[(ii)] $X_\square$ is not an $\varepsilon$-global minimizer, $X_\square$  is exactly rank $r$ with $\sigma_r (X_\square) \ge  \max\{\frac{2\alpha}{\beta (1+\varepsilon)}\sigma_{r_\star} (X_\square),\frac{\alpha}{2\beta \sqrt{r}}\cdot \varepsilon \} $, and either \vspace{-0.1in}\[\sigma_r (X_\square) \le \frac{2\alpha}{\beta} \sigma_{r_\star} (X_\square) \vspace{-0.1in}\]
         or \vspace{-0.05in}
        \[
        \!\!\!\!\!\!\!\!\!\!\!\!\!
        \norm{X_\square-X_\star}_F \ge \sqrt{\frac{\big\|X_\square-\Pi_{\mathrm{rank}\le r_\star}(X_\square)\big\|_F^2 -\varepsilon^3}{1-\frac{2\alpha \sigma_{r_\star}}{\beta \sigma_r}} }-\varepsilon^2\]
         where $\Pi_{\mathrm{rank}\le r_\star}(X_\square)$ is the projection of $X_\square$ onto the set of matrices of rank $r_\star$ or less.
        \end{itemize}
    \end{enumerate} 
    \vspace{-0.05in}
    To clarify, when we say $X_\square$ is or is not an $\varepsilon$-global minimizer, it is with respect to $\widehat{\mathcal{L}}^\mathrm{full}_\lambda$.
\end{maintheorem}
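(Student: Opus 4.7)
The plan is to retrace the proof of Theorem~\ref{thm:1} wherever possible, while being scrupulous about the fact that the rank-$r_\star$ witness $X_\star^{(\delta)}$ is now distinct from the true global minimizer $X_\star$. The rank-$r_\star$ witness enters only in the Eckart--Young--Mirsky step, while the subgradient condition $-\nabla f(X_\star)\in\partial(\lambda\|X_\star\|_*)$ coming from global optimality and the restricted strong convexity inequality must be invoked at the exact minimizer $X_\star$. I would apply each property at the appropriate point and absorb the resulting $O(\delta)$ slack using the hypothesis $\delta=o(\varepsilon^3)$.

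\textbf{Key steps.} Writing $f=\widehat{\mathcal{L}}^{\mathrm{full}}$ and $X=X_\square$, the first- and second-order stationarity arguments on $\widehat{\mathcal{L}}^{\mathrm{lora}}_\lambda$ are unchanged and still produce the decomposition $\nabla f(X)=-\lambda L_X R_X^\intercal+S$ with $L_X^\intercal S=SR_X=0$ and $\|S\|_2\le\lambda+\beta\sigma_r$; a subgradient $h\in\partial(\lambda\|X\|_*)$ for which $Z:=h+\nabla f(X)$ satisfies $\|Z\|_2\le\beta\sigma_r$ and $L_X^\intercal Z=ZR_X=0$; and, with $\kappa=\sigma_{r_\star}/(\beta\sigma_r)$, the Eckart--Young--Mirsky identification of $\Pi_{\mathrm{rank}\le r_\star}(X)$ as a best rank-$r_\star$ approximation of $X-\kappa Z$. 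Plugging the rank-$r_\star$ witness $X_\star^{(\delta)}$ into Eckart--Young and using $\langle X-X^{r_\star},Z\rangle=0$ gives
\[
\|X^{r_\star}-X\|_F^2-\|X_\star^{(\delta)}-X\|_F^2\le 2\kappa\langle X_\star^{(\delta)}-X,Z\rangle.
\]
Applying RSC and subgradient monotonicity at $X_\star$ exactly as in Theorem~\ref{thm:1} yields $\langle X_\star-X,Z\rangle\le-\alpha\|X-X_\star\|_F^2$. Splitting $X_\star^{(\delta)}-X=(X_\star-X)+(X_\star^{(\delta)}-X_\star)$ and invoking Cauchy--Schwarz to bound both $\bigl|\|X_\star^{(\delta)}-X\|_F^2-\|X_\star-X\|_F^2\bigr|\le 2\delta\|X-X_\star\|_F+\delta^2$ and $|\langle X_\star^{(\delta)}-X_\star,Z\rangle|\le\delta\|Z\|_F$ produces a master inequality
\[
(2\kappa\alpha-1)\|X-X_\star\|_F^2+\|X^{r_\star}-X\|_F^2\le E,
\]
with $E=O(\delta)$ times known quantities.

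\textbf{Case analysis and main obstacle.} In the special regime $2\alpha\ge\beta(1+\varepsilon)$, the prefactor $2\kappa\alpha-1$ is at least $\varepsilon$, so the master inequality forces $\|X-X_\star\|_F^2=O(\delta/\varepsilon)=o(\varepsilon^2)$, making $X$ an $\varepsilon$-global minimizer. In the generic regime, if either $\sigma_r\le\tfrac{2\alpha}{\beta(1+\varepsilon)}\sigma_{r_\star}$ (so that $2\kappa\alpha\ge 1+\varepsilon$) or $\sigma_r\le\tfrac{\alpha\varepsilon}{2\beta\sqrt{r}}$, the same inequality again forces $X$ to be an $\varepsilon$-global minimizer; otherwise $2\kappa\alpha-1<0$ and rearrangement of the master inequality produces the claimed lower bound on $\|X_\square-X_\star\|_F$ with the stated $\varepsilon^3$-subtractive and $\varepsilon^2$-additive slack. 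The main obstacle is the perturbation bookkeeping: controlling $\kappa\delta\|Z\|_F$ tightly enough to extract the factor $\sqrt{r}$ appearing in the threshold $\sigma_r(X_\square)\ge\tfrac{\alpha\varepsilon}{2\beta\sqrt{r}}$ requires exploiting that $Z$ lies in the orthogonal complement of the column/row spans of the rank-$(\le r)$ matrix $X$, and the rate $\delta=o(\varepsilon^3)$ is tuned precisely so that after dividing through by the possibly small $|2\kappa\alpha-1|$, every error term remains $o(\varepsilon^2)$.
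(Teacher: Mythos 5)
Your high-level scaffolding is right: the first- and second-order stationarity conditions are unchanged, they produce the decomposition $\nabla f(X)=-\lambda L_XR_X^\intercal+S$ with $\|S\|_2\le\lambda+\beta\sigma_r$ and a subgradient $Z$ with $\|Z\|_2\le\beta\sigma_r$ in the orthogonal complement of $X$, and the dominant inequality comes from Eckart--Young--Mirsky using the rank-$r_\star$ witness $X_\star^{(\delta)}$, with RSC and $-\nabla f(X_\star)\in\partial(\lambda\|X_\star\|_*)$ applied at the exact minimizer $X_\star$ and the $O(\delta)$ slack absorbed by $\delta=o(\varepsilon^3)$. The quadratic-in-$\|X-X_\star\|_F$ rearrangement that yields the $\varepsilon^3$/$\varepsilon^2$ slack is also how the paper proceeds.

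However, there is a genuine gap in how you obtain the lower bound $\sigma_r(X_\square)\ge\frac{\alpha}{2\beta\sqrt{r}}\varepsilon$. You assert that \emph{the same master inequality}, with $\kappa=\sigma_{r_\star}/(\beta\sigma_r)$, forces $X$ to be an $\varepsilon$-global minimizer whenever $\sigma_r\le\frac{\alpha\varepsilon}{2\beta\sqrt{r}}$. This is not so: the coefficient in that inequality is $2\kappa\alpha-1=\frac{2\alpha\sigma_{r_\star}}{\beta\sigma_r}-1$, and if $\sigma_{r_\star}$ is itself small (of the same order as $\sigma_r$), then $\kappa$ is $O(1/\beta)$ and $2\kappa\alpha-1$ can be negative even though $\sigma_r$ is tiny; the master inequality then gives nothing. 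The paper handles this with a separate ``Step 1'' argument that does not use $\kappa=\sigma_{r_\star}/(\beta\sigma_r)$ at all: it fixes a free threshold $c>0$, sets $\kappa=c/(\beta\sigma_r)$, defines $r'=\min\{\gamma:\sigma_\gamma(X)<c\}$, and runs Eckart--Young--Mirsky against the rank-$r'$ truncation $X^{r'}$ rather than $X^{r_\star}$. The $\sqrt{r}$ factor then arises because the two low-rank truncations $X^{r'}$ and $(X-\kappa Z)^{r_\star}$ differ only on at most $r_\star-r'\le r$ directions whose singular values are all below $c$, giving $\|X^{r'}-(X-\kappa Z)^{r_\star}\|_F\le c\sqrt{r_\star-r'}$; choosing $c\asymp\varepsilon/\sqrt{r}$ then closes the argument. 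Your explanation of where the $\sqrt{r}$ comes from (``controlling $\kappa\delta\|Z\|_F$ using orthogonality of $Z$'') points at the wrong mechanism — $\|Z\|_F$ is not bounded by $\sqrt{r}\|Z\|_2$ since $Z$ is not itself low rank — and the single-$\kappa$ master inequality cannot substitute for this step. You would need to add the paper's Step 1 (or an equivalent free-threshold truncation argument) to make the proof complete.
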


\subsection{LoRA training probably won't fail; it probably won't converge to spurious local minima} \label{sec::3.4}
In the analysis of Section~\ref{sec::3.1} and its subsequent generalizations, we showed that in the generic regime, spurious local minima may exist, but if it does, it will fail \emph{loudly}, having high rank and large magnitude. In this section, we argue that the standard LoRA fine-tuning procedure induces implicit biases that make it unlikely for the LoRA training to converge to these spurious local minima.

\paragraph{Zero initialization biases the optimization towards minima with smaller magnitude.}
LoRA fine-tuning is initialized with $B=0$, leading to $X =AB^\intercal=0$ at initialization. This choice comes from the intuition that fine-tuning should not change the model too much, i.e., that $X_\star$ should be small, so the initialization should be at $0$.

When weight decay is used, we can make this argument further quantitative. The global minimizer $X_\star$ satisfies
\[
\widehat{\mathcal{L}}(X_\star) +\lambda \|X_\star\|_* \le \widehat{\mathcal{L}}(0) +\lambda \|0\|_* ,
\]
thus $\|X\|_* <\frac{\widehat{L}(0)}{\lambda}$. Here, $\widehat{\mathcal{L}}(0)$ is the loss corresponding to directly applying the pre-trained model to the fine-tuning task, so  $\widehat{\mathcal{L}}(0)$ should not be inordinately large when the fine-tuning task is not too different from tasks seen during pre-training.

On the other hand, spurious local minima exist only outside a neighborhood of zero, as argued in Corollary~\ref{cor:1}. Because the SGD or Adam optimizers used for LoRA training are initialized at $0$, the optimization is biased towards smaller-magnitude solutions near the starting point, which are the global minima. In Section~\ref{sec::experiments}, we experimentally test this theory by fine-tuning LoRA with a non-zero initialization; indeed, we find there is an instance in this scenario, where the fine-tuning gets trapped in spurious local minima.

\paragraph{Weight decay implicitly biases the optimization towards low-rank matrices.} 
Practical LoRA training typically employs weight decay \citep{hu2022lora, dettmers2023qlora}, and it is shown in prior theoretical work that weight decay induces an implicit bias toward low-rank matrices. This makes it more likely for the LoRA training to converge to the low-rank global minimizer, rather than to a spurious local minima with high rank being $\sigma_r (X)>\frac{2\alpha}{\beta} \sigma_{r_\star}(X)$.

For deep \emph{linear} networks, this implicit bias is characterized somewhat precisely.
\begin{theorem*}[Informal, Theorem 3.2 of \citet{wang2024implicit}]
When training a deep linear network with positive weight decay, a sufficiently small learning rate, and a ground-truth teacher model with low effective rank, there is a positive probability of jumping from a high-rank critical point to a lower-rank one, but the probability of jumping back is zero.
\end{theorem*}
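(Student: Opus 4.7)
The plan is to treat SGD with weight decay on the deep linear product $W = W_L \cdots W_1$ as a stochastic process on a stratified space whose strata are indexed by $\mathrm{rank}(W)$, and to compare the local stability of critical points lying in different strata. The informal conclusion corresponds to two qualitatively distinct statements: (a) critical points with $\mathrm{rank}(W) > r^\star$ are strict saddles of the regularized loss, so SGD escapes them into a lower-rank stratum with positive probability; and (b) critical points with $\mathrm{rank}(W) \le r^\star$ sit on an invariant low-rank manifold that the SGD noise does not push off of in a rank-increasing direction, so transitioning back to higher rank has probability zero.

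For the positive-probability direction, I would first characterize a rank-$k$ critical point ($k > r^\star$) by its balanced singular-value decomposition $W_l = U_l \Sigma_l V_l^\top$ and compute the Hessian of $\tfrac{1}{2}\|W - W^\star\|_F^2 + \tfrac{\lambda}{2}\sum_l \|W_l\|_F^2$ along a coordinated perturbation that shrinks one ``spurious'' singular mode across all layers simultaneously. Because $W^\star$ has effective rank $r^\star < k$, the data-fitting term contributes only a nonnegative quadratic along such a perturbation, while the weight-decay contribution is strictly negative (shrinking a non-needed singular mode reduces $\sum_l \|W_l\|_F^2$). This yields a negative-curvature direction whose magnitude is bounded below in terms of $\lambda$ and the balanced singular values, putting us in the classical strict-saddle regime. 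Then a standard escape-from-saddle argument (in the style of Ge et al.\ or Jin et al.) with SGD noise of bounded covariance and sufficiently small step size gives escape in finite time with positive probability, and the direction of escape forces the product rank to drop.

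For the zero-probability direction, the plan is to show that the rank-$k'$ stratum with $k' \le r^\star$ is locally invariant under the stochastic dynamics. The key observation is that, for deep linear networks, the stochastic gradient with respect to each $W_l$ factors through adjacent products $W_{l-1}\cdots W_1$ and $W_L\cdots W_{l+1}$, so at a point where the product has rank $k'$ the noise injected into each layer lies in a subspace aligned with the currently realized singular directions. I would quantify this by tracking the $(k'+1)$-th singular value of the product as a Lyapunov function: the deterministic part of the update contracts it at rate $\Theta(\lambda \eta)$ due to weight decay combined with balancedness, while the noise contribution to this particular singular value is of strictly higher order in $\eta$ and vanishes when evaluated on the stratum. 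A supermartingale argument then shows that if this singular value starts at zero it stays at zero almost surely, giving probability zero of leaving the low-rank stratum.

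The hardest step is the probability-zero claim, since it is stronger than a generic basin-of-attraction statement: it requires the noise to be structurally incapable of exciting a new singular direction, not merely unlikely to do so. This will hinge on an exact description of the range of the SGD noise covariance at a low-rank critical point, and on exploiting the conservation laws (balancedness of $W_{l+1}^\top W_{l+1} - W_l W_l^\top$) specific to linear networks so that transverse perturbations at one layer cannot be amplified into rank-increasing motion of the product. The strict-saddle and escape step, by contrast, is largely a matter of plugging explicit curvature bounds into existing saddle-escape theorems once the Hessian computation above is carried out.
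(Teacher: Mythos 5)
This statement is not proved in the paper at all: it is an informal quotation of Theorem~3.2 of \citet{wang2024implicit}, cited in Section~\ref{sec::3.4} only to motivate the implicit-bias discussion, so there is no in-paper proof to compare your proposal against. Judging the proposal on its own merits, your treatment of the zero-probability direction is essentially the right mechanism: when a singular mode is identically zero across all layers, the gradient of the data term with respect to each $W_l$ factors through the products of the remaining layers, so both the full gradient \emph{and} the mini-batch noise vanish in every direction that would grow that mode, while weight decay strictly contracts those coordinates; this makes the low-rank stratum forward-invariant and yields the probability-zero claim. You correctly identify this as the crux and as requiring an exact description of the noise covariance's range.

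The positive-probability half, however, contains a genuine error. You derive negative curvature from the weight-decay term on the grounds that shrinking a spurious singular mode ``reduces $\sum_l \|W_l\|_F^2$,'' but that is a first-order statement about the function value, not a curvature statement: the Hessian of $\tfrac{\lambda}{2}\sum_l\|W_l\|_F^2$ is exactly $\lambda I \succ 0$, so weight decay can only \emph{add} positive curvature in every direction; any negative curvature at a critical point must come from the cross-layer terms of the data-fitting Hessian. More fundamentally, the high-rank critical points in the cited theorem are (generically) \emph{local minima} of the regularized loss --- the depth-$L$ $\ell_2$ penalty acts like a nonconvex Schatten-$2/L$ penalty on the product, producing a hierarchy of local minima indexed by rank --- which is precisely why the theorem asserts only a \emph{positive} probability of jumping down rather than the almost-sure escape that strict-saddle results (Lee et al., Ge et al.) would give. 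The downward jump is a noise-activated crossing of a finite loss barrier (a large-deviations-type event), not a saddle escape, so the step you describe as ``largely a matter of plugging curvature bounds into existing saddle-escape theorems'' is both mis-framed and would prove a different (and, for genuine local minima, false) statement.
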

While the theory of \citet{wang2024implicit} does not immediately apply to general deep (non-linear) neural networks, it does provide meaningful insight into the implicit bias towards low rank. In the more general setup, \citet{galanti2024sgd} argues for a similar implicit bias. Adapting their arguments to LoRA training, we get the following statement.
\begin{newlemma}
\label{lem:low-rank}
Consider LoRA training with SGD with batch size $b$, learning rate $\mu$, and weight decay $\lambda>0$. For any low-rank update $X=AB^\intercal$ of a weight matrix in the network, if the sequence of $X$-values throughout training converges to a matrix $ \tilde{X}$, then $\tilde{X}$ is approximately low rank in the sense that for any $\varepsilon >0$, there exists some $W$ with
    \[
     \bigg\|{\frac{\tilde{X}}{\|\tilde{X}\|} - W} \bigg\| <\varepsilon, \quad \mathrm{rank}(W)\le  \frac{b \log (\varepsilon/ 4)}{\log(1-\mu\lambda)}
     \]
\end{newlemma}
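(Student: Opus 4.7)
The plan is to adapt the decay-plus-rank-$b$-step argument of \citet{galanti2024sgd} to LoRA's factored parameterization, working at the level of the left factor $A$. The key observation is that for a mini-batch of size $b$, the gradient of the loss with respect to the effective weight $X = AB^\intercal$ is a sum of at most $b$ rank-one outer products (one per sample, from the standard backprop formula for a linear layer), so the stochastic gradient $G_t = \nabla_X \widehat{\mathcal{L}}^{\mathrm{lora}}(X_t)$ on the current batch satisfies $\mathrm{rank}(G_t) \le b$. Consequently the SGD-with-weight-decay update on the left factor, $A_{t+1} = (1-\mu\lambda)A_t - \mu G_t B_t$, takes the form $A_{t+1} = (1-\mu\lambda)A_t + C_t$ with $\mathrm{rank}(C_t) \le b$.

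Unrolling this recursion yields
\[
A_T \;=\; (1-\mu\lambda)^T A_0 \;+\; \sum_{t=0}^{T-1}(1-\mu\lambda)^{T-1-t} C_t.
\]
Truncating to the most recent $K$ terms, define $A_T^{(K)} := \sum_{t=T-K}^{T-1}(1-\mu\lambda)^{T-1-t} C_t$, which has $\mathrm{rank}(A_T^{(K)}) \le bK$. The truncation error telescopes cleanly into $A_T - A_T^{(K)} = (1-\mu\lambda)^K A_{T-K}$, so $\|A_T - A_T^{(K)}\| \le (1-\mu\lambda)^K \|A_{T-K}\|$. Setting $W_K := A_T^{(K)} B_T^\intercal / \|\tilde X\|$ gives $\mathrm{rank}(W_K) \le bK$, and by the triangle inequality and sub-multiplicativity of the norm,
\[
\Bigl\| \tfrac{\tilde X}{\|\tilde X\|} - W_K \Bigr\| \;\le\; \tfrac{\|\tilde X - X_T\|}{\|\tilde X\|} \;+\; \tfrac{(1-\mu\lambda)^K \,\|A_{T-K}\|\,\|B_T\|}{\|\tilde X\|}.
\]
Because $X_t \to \tilde X$ and weight decay (with $\lambda > 0$) keeps $\|A_t\|$ and $\|B_t\|$ uniformly bounded along the trajectory, the first term is $o(1)$ as $T \to \infty$ and the second is at most $C\,(1-\mu\lambda)^K$ for some constant $C$ depending on the limit. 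Choosing $K \ge \log(\varepsilon/4)/\log(1-\mu\lambda)$ (taking $T$ large enough and absorbing $C$ together with the $o(1)$ into the ``$/4$'' slack) yields the stated bound.

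The main obstacle is the bookkeeping around the LoRA factorization. Unlike vanilla SGD on a single matrix, where the rank-$b$ increment acts directly on the target matrix, here it enters only through $A_t$, so the relevant norm bound involves the product $\|A_{T-K}\|\,\|B_T\|$ rather than $\|\tilde X\|$ directly. Tying these quantities together requires the uniform boundedness of $\|A_t\|$ and $\|B_t\|$ under weight decay---a standard consequence of $\lambda > 0$, since along any convergent trajectory weight decay drives the sum $\|A_t\|_F^2 + \|B_t\|_F^2$ toward its minimum subject to $A_t B_t^\intercal \approx \tilde X$, namely $2\|\tilde X\|_*$. Once this uniform control is in place, the remainder of the proof is the clean geometric-decay argument above.
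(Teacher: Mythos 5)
Your proposal follows the same high-level strategy as the paper --- rank-$b$ per-step gradient increments combined with geometric decay from weight decay --- but makes one structural choice that opens a gap.

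The paper truncates \emph{both} factors simultaneously: it writes $A_t = (1-2\mu\lambda)^n A_{t-n} + U_{t,n}$ and $B_t = (1-2\mu\lambda)^n B_{t-n} + V_{t,n}$ with $\mathrm{rank}(U_{t,n}), \mathrm{rank}(V_{t,n}) \le nb$, then expands $X_t = A_t B_t^\intercal$. After grouping, the only term not absorbed into the low-rank $W$ is $(1-2\mu\lambda)^{2n} A_{t-n}B_{t-n}^\intercal = (1-2\mu\lambda)^{2n} X_{t-n}$, whose norm is controlled directly by the assumed convergence of the $X$-sequence (giving $\|X_{t-n}\|/\|X_t\| \le 2$ for $t$ large). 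You instead truncate only $A$, setting $W_K = A_T^{(K)} B_T^\intercal/\|\tilde X\|$, so your residual is $(1-\mu\lambda)^K A_{T-K}B_T^\intercal$. The crucial point is that this mixes time indices: it is $A_{T-K}$ against $B_T$, not $A_{T-K}$ against $B_{T-K}$, so the residual is \emph{not} a multiple of $X_{T-K}$ and cannot be bounded by $\|X_{T-K}\|$. You therefore retreat to $\|A_{T-K}\|\,\|B_T\|$, which requires uniform boundedness of the individual factor norms.

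That boundedness claim is the gap. Convergence of $X_t = A_t B_t^\intercal$ does \emph{not} imply $\|A_t\|$ and $\|B_t\|$ stay bounded: the factorization has a continuous scaling symmetry $A \mapsto cA$, $B \mapsto B/c$ under which the product is invariant, so the factors can drift to infinity while $X_t$ converges. Your appeal to weight decay ("drives $\|A_t\|_F^2 + \|B_t\|_F^2$ toward its minimum, namely $2\|\tilde X\|_*$") is a statement about stationary points of the regularized objective, not a bound that holds along an arbitrary trajectory; turning it into a uniform a priori bound would need additional assumptions (e.g.\ bounded gradients) and a contraction/Lyapunov argument that you do not supply. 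The paper's symmetric truncation neatly sidesteps this by keeping the residual in the form $(1-2\mu\lambda)^{2n} X_{t-n}$, where the hypothesis $X_t \to \tilde X$ does all the work. If you insist on truncating only $A$, you could repair the argument by also truncating $B$ to the same window, which would recover the paper's bound (at the cost of doubling the rank to $2bK$, matching what the paper's proof actually produces).
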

We provide the proof of Lemma~\ref{lem:low-rank} in Appendix~\ref{ss:lemma-proof}.

\begin{figure*}[!h]
    \centering 
    \subfigure{
    \includegraphics[width = 0.8\textwidth]{ 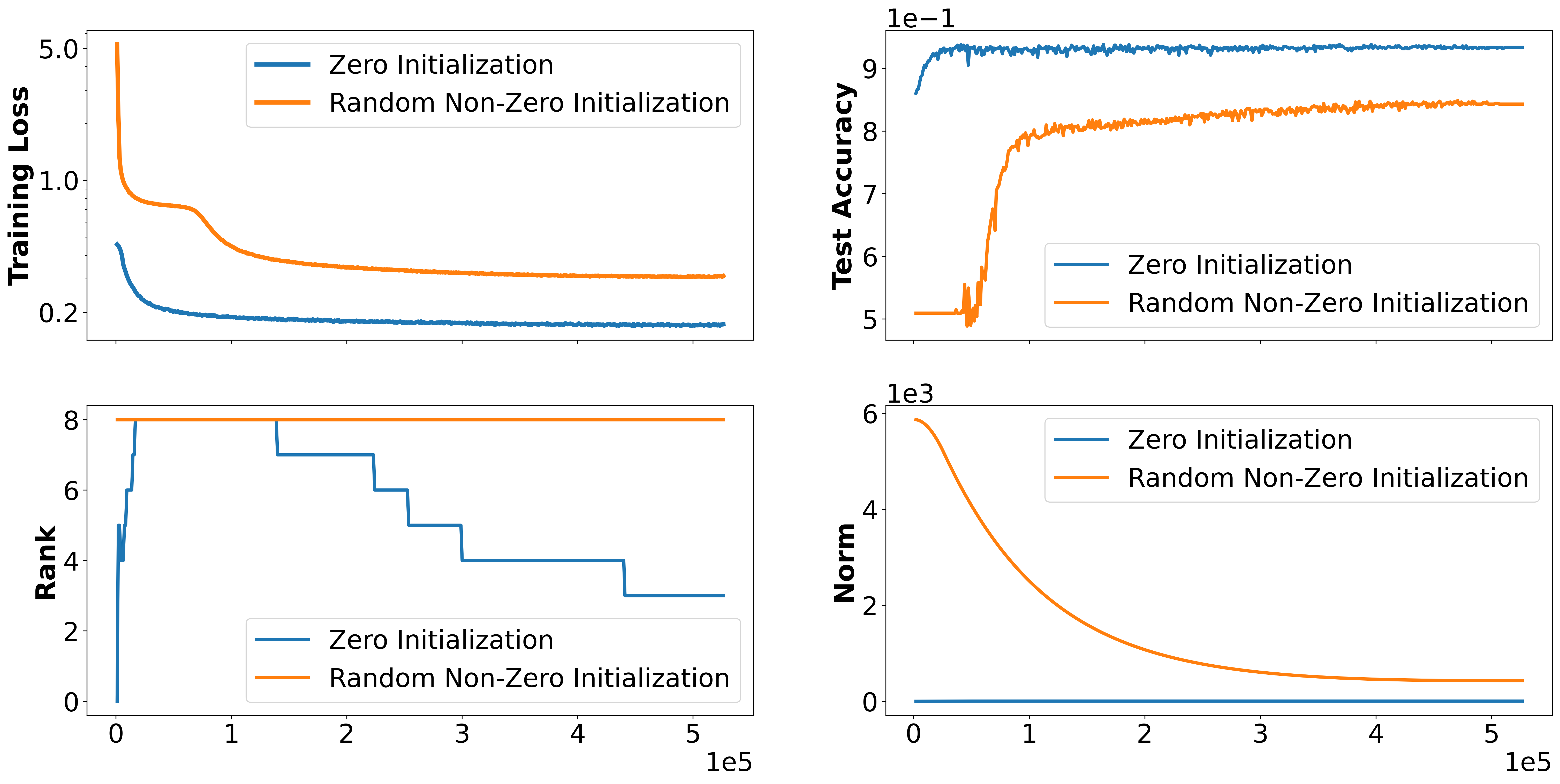}
    }
    \vspace{-0.2in}
    \caption{LoRA training converging to global minima with zero-initialization vs.\ spurious local minima with random non-zero initialization. \textbf{(y-axes)} Training loss, test accuracy, minimizer rank, and minimizer norm. \textbf{(x-axes)} Training steps.    
    }   \label{fig:spurious_localmin}
    \vspace{-4mm}
\end{figure*}

\section{Experiments} \label{sec::experiments}
In this section, we validate our theory through real-world experiments. First, we verify our assumptions outlined in Section~\ref{sec:2}. Then, we present both the success and failure modes of LoRA fine-tuning, where training either converges to a low-rank, small-magnitude global minimizer or stuck on a high-rank, large-magnitude local minimizer.

\paragraph{Experimental setup.}
We conduct experiments on two tasks in NLP and vision. For the NLP task, we fine-tune a RoBERTA-base model \citep{zhuang-etal-2021-robustly} on a sentiment analysis task, using the SST-2 dataset \citep{socher-etal-2013-recursive} from the GLUE benchmark \citep{wang-etal-2018-glue}. For the vision task, we fine-tune a vision transformer \citep{dosovitskiy2021an} on the CIFAR100 dataset \citep{Krizhevsky2009LearningML}. Both models have $12$ attention layers, and we tune the query and value weights of each layer, following the prescription of \citet{hu2022lora}. We describe further details in Appendix~\ref{Appendix:C}.

\paragraph{Results: Verifying low-rank global minima exist.}
First, we validate our assumption of a low-rank global minimum. We perform full fine-tuning on the nuclear norm regularized loss objective $\widehat{\mathcal{L}}^{full}_\lambda$ with varying values of weight decay $\lambda$. The results of Table~\ref{tab:lowrank_hypo} exhibit a clear decreasing trend on the rank of the global minimum as a function of $\lambda$. Notably, when $\lambda$ is set to values at least $0.001$, the resulting rank is lower than typical LoRA ranks (4, 8, or 16).

\paragraph{Results: Verifying RSC and RSM.}
Next, we verify our assumption of restricted strong convexity and smoothness. As it is infeasible to exactly compute $\alpha$ and $\beta$ values, we estimate them by Monte-Carlo sampling with $1000$ samples within rank bound $r=8,16,32,64$, distance bound $D=5$, and $\lambda = 0.01$. Table~\ref{tab:RSCM} presents the $\alpha$ and $\beta$ values for the largest $\beta / \alpha$ value across weight matrices. We see as $r$ increases, $\alpha$ decreases and $\beta$ increases. In fact, when $r$ is as large as $64$, the requirement $\alpha>0$ breaks, and our theory no longer applies. These results demonstrate that our assumption of $\alpha>0$ and $\beta <\infty $ is plausible using a low LoRA rank $r$. This also suggests that reduced memory footprint is not the only benefit of using small $r$; the $\alpha$, $\beta$-values that determine the loss landscape also become more favorable with small $r$.

\paragraph{Results: Validating main theorem.}
Finally, we verify our main result through an illustrative example for the SST2 task with $\lambda = 0.01$ and $r=8$.
To clarify, our results prove that spurious local minima may not exist, but when they do, they exhibit high rank and large norm, being readily distinguishable from the global minimum and thereby avoidable through zero initialization. We present in Figure~\ref{fig:spurious_localmin} that such spurious local minimum found by large random initialization indeed fails loudly in the sense that it has high rank, large magnitude, and poor generalization performance. We further demonstrate in  Appendix~\ref{Appendix:C} that spurious local minima isn't found in any smaller initializations.

\begin{table}[t] 
    \centering
    \caption{RSC and RSM values for different ranks.}
    \vspace{-0.05in}
    \begin{tabular}{lcccc}
        \toprule
        Rank & 8 & 16 & 32 & 64 \\
        \midrule
        $\beta /\alpha$ & 8.0249 & 18.7032 & 320.82 & N/A \\
        $\alpha$ & 0.0061 & 0.0029 & 0.0002 & $-$0.0445 \\
        $\beta$ & 0.0492 & 0.0539 & 0.0726 & 0.3371 \\
        \bottomrule
    \end{tabular}
    \label{tab:RSCM}
    \vspace{-0.2in}
\end{table}

\section{Conclusion}
In this work, we theoretically analyze LoRA fine-tuning and obtain a new type of result: that a second-order stationary point is either a global minimizer with low rank and small magnitude or is a spurious solution with high rank and large magnitude. Unlike previous analyses based on linearization, our approach relies on a general condition of restricted strong convexity and smoothness, which are conditions the experiments of Section~\ref{sec::experiments} confirm to be practical. We further argue that zero-initialization and weight decay in LoRA training induce an implicit bias toward this low-rank small-magnitude region, explaining why LoRA typically converges to global minima in practice.

While the primary focus of this work is on establishing the theoretical convergence of LoRA, our framework possesses broader practical relevance. The properties of spurious local minima that we characterize may be used to diagnose and monitor the fine-tuning process. Furthermore, as our framework relies solely on the low-rank decomposition structure of LoRA and a few minimal assumptions, our theory applies to many LoRA variants, including LoRA+ \cite{10.5555/3692070.3692782}, rsLoRA \citep{kalajdzievski2023rankstabilizationscalingfactor}, PiSSA \cite{meng2024pissa}, and MiLoRA \citep{wang-etal-2025-milora} as well.

Our results open several avenues for future work. One is to perform a more rigorous analysis of the implicit bias induced by weight decay and zero initialization. Another intriguing insight is that the restricted strong convexity and smoothness constants $\alpha$ and $\beta$ improve as the LoRA rank decreases, suggesting that smaller-rank parameterizations enjoy more favorable optimization landscapes. This observation contrasts with the modern wisdom of deep learning theory that overparameterization helps training and aligns with recent results indicating that overparameterization can slow down training \citep{xu2023over,xiong2024how}. Exploring this phenomenon further is another promising direction.

\section*{Acknowledgments}
EKR was supported by the Samsung Science and Technology Foundation (Project Number SSTF-BA2101-02) and the National Research Foundation of Korea (NRF) grant funded by the Korean government (No.RS-2024-00421203). JK acknowledges the support from the Ilju foundation scholarship. We also thank Jaesung Park, Uijeong Jang, and Sunwoo Kim for providing valuable feedback.

\section*{Impact statement}
This paper advances the understanding of Low-rank Adaptation, contributing to the broader field of Machine Learning. There are many potential societal consequences of our work, none of which we feel must be specifically highlighted here.

\bibliography{main}

\begin{thebibliography}{44}
\providecommand{\natexlab}[1]{#1}
\providecommand{\url}[1]{\texttt{#1}}
\expandafter\ifx\csname urlstyle\endcsname\relax
  \providecommand{\doi}[1]{doi: #1}\else
  \providecommand{\doi}{doi: \begingroup \urlstyle{rm}\Url}\fi

\bibitem[Aghajanyan et~al.(2021)Aghajanyan, Gupta, and Zettlemoyer]{aghajanyan-etal-2021-intrinsic}
Aghajanyan, A., Gupta, S., and Zettlemoyer, L.
\newblock Intrinsic dimensionality explains the effectiveness of language model fine-tuning.
\newblock \emph{Association for Computational Linguistics}, 2021.

\bibitem[Ben~Zaken et~al.(2022)Ben~Zaken, Goldberg, and Ravfogel]{ben-zaken-etal-2022-bitfit}
Ben~Zaken, E., Goldberg, Y., and Ravfogel, S.
\newblock {B}it{F}it: Simple parameter-efficient fine-tuning for transformer-based masked language-models.
\newblock \emph{Association for Computational Linguistics}, 2022.

\bibitem[Bhojanapalli et~al.(2016)Bhojanapalli, Neyshabur, and Srebro]{NIPS2016_b139e104}
Bhojanapalli, S., Neyshabur, B., and Srebro, N.
\newblock Global optimality of local search for low rank matrix recovery.
\newblock \emph{Neural Information Processing Systems}, 2016.

\bibitem[Burer \& Monteiro(2003)Burer and Monteiro]{Burer2003ANP}
Burer, S. and Monteiro, R. D.~C.
\newblock A nonlinear programming algorithm for solving semidefinite programs via low-rank factorization.
\newblock \emph{Mathematical Programming}, 95:\penalty0 329--357, 2003.

\bibitem[Cabral et~al.(2013)Cabral, De~la Torre, Costeira, and Bernardino]{cabral2013unifying}
Cabral, R., De~la Torre, F., Costeira, J.~P., and Bernardino, A.
\newblock Unifying nuclear norm and bilinear factorization approaches for low-rank matrix decomposition.
\newblock \emph{International Conference on Computer Vision}, 2013.

\bibitem[Cand\`{e}s \& Recht(2012)Cand\`{e}s and Recht]{10.1145/2184319.2184343}
Cand\`{e}s, E. and Recht, B.
\newblock Exact matrix completion via convex optimization.
\newblock \emph{Foundations of Computational Mathematics}, 55\penalty0 (6):\penalty0 111--119, 2012.

\bibitem[Dayi \& Chen(2024)Dayi and Chen]{dayi2024gradient}
Dayi, A.~K. and Chen, S.
\newblock Gradient dynamics for low-rank fine-tuning beyond kernels.
\newblock \emph{arXiv preprint arXiv:2411.15385}, 2024.

\bibitem[Dettmers et~al.(2023)Dettmers, Pagnoni, Holtzman, and Zettlemoyer]{dettmers2023qlora}
Dettmers, T., Pagnoni, A., Holtzman, A., and Zettlemoyer, L.
\newblock {QL}o{RA}: Efficient finetuning of quantized {LLM}s.
\newblock \emph{Neural Information Processing Systems}, 2023.

\bibitem[Dosovitskiy et~al.(2021)Dosovitskiy, Beyer, Kolesnikov, Weissenborn, Zhai, Unterthiner, Dehghani, Minderer, Heigold, Gelly, Uszkoreit, and Houlsby]{dosovitskiy2021an}
Dosovitskiy, A., Beyer, L., Kolesnikov, A., Weissenborn, D., Zhai, X., Unterthiner, T., Dehghani, M., Minderer, M., Heigold, G., Gelly, S., Uszkoreit, J., and Houlsby, N.
\newblock An image is worth 16x16 words: Transformers for image recognition at scale.
\newblock \emph{International Conference on Learning Representations}, 2021.

\bibitem[Fazel et~al.(2001)Fazel, Hindi, and Boyd]{fazel2001rank}
Fazel, M., Hindi, H., and Boyd, S.~P.
\newblock A rank minimization heuristic with application to minimum order system approximation.
\newblock \emph{American Control Conference}, 2001.

\bibitem[Galanti et~al.(2024)Galanti, Siegel, Gupte, and Poggio]{galanti2024sgd}
Galanti, T., Siegel, Z.~S., Gupte, A., and Poggio, T.~A.
\newblock {SGD} and weight decay secretly minimize the rank of your neural network.
\newblock \emph{NeurIPS 2024 Workshop on Mathematics of Modern Machine Learning}, 2024.

\bibitem[Ge et~al.(2015)Ge, Huang, Jin, and Yuan]{pmlr-v40-Ge15}
Ge, R., Huang, F., Jin, C., and Yuan, Y.
\newblock Escaping from saddle points --- online stochastic gradient for tensor decomposition.
\newblock \emph{Conference on Learning Theory}, 2015.

\bibitem[Ge et~al.(2017)Ge, Jin, and Zheng]{10.5555/3305381.3305509}
Ge, R., Jin, C., and Zheng, Y.
\newblock No spurious local minima in nonconvex low rank problems: A unified geometric analysis.
\newblock \emph{International Conference on Machine Learning}, 2017.

\bibitem[Ha et~al.(2020)Ha, Liu, and Barber]{doi:10.1137/18M1231675}
Ha, W., Liu, H., and Barber, R.~F.
\newblock An equivalence between critical points for rank constraints versus low-rank factorizations.
\newblock \emph{SIAM Journal on Optimization}, 30\penalty0 (4):\penalty0 2927--2955, 2020.

\bibitem[Hayou et~al.(2024)Hayou, Ghosh, and Yu]{10.5555/3692070.3692782}
Hayou, S., Ghosh, N., and Yu, B.
\newblock {LoRA+}: efficient low rank adaptation of large models.
\newblock In \emph{International Conference on Machine Learning}, 2024.

\bibitem[Hu et~al.(2022)Hu, yelong shen, Wallis, Allen-Zhu, Li, Wang, Wang, and Chen]{hu2022lora}
Hu, E.~J., yelong shen, Wallis, P., Allen-Zhu, Z., Li, Y., Wang, S., Wang, L., and Chen, W.
\newblock Lo{RA}: Low-rank adaptation of large language models.
\newblock \emph{International Conference on Learning Representations}, 2022.

\bibitem[Hu et~al.(2021)Hu, Nie, Wang, and Li]{hu_low_2021}
Hu, Z., Nie, F., Wang, R., and Li, X.
\newblock Low {Rank} {Regularization}: {A} review.
\newblock \emph{Neural Networks}, 136:\penalty0 218--232, 2021.
\newblock ISSN 0893-6080.

\bibitem[Jang et~al.(2024)Jang, Lee, and Ryu]{jang2024lora}
Jang, U., Lee, J.~D., and Ryu, E.~K.
\newblock Lo{RA} training in the {NTK} regime has no spurious local minima.
\newblock \emph{International Conference on Machine Learning}, 2024.

\bibitem[Kalajdzievski(2023)]{kalajdzievski2023rankstabilizationscalingfactor}
Kalajdzievski, D.
\newblock A rank stabilization scaling factor for fine-tuning with {LoRA}.
\newblock \emph{arXiv preprint arXiv:2312.03732}, 2023.

\bibitem[Kobayashi et~al.(2024)Kobayashi, Akram, and Oswald]{kobayashi2024weight}
Kobayashi, S., Akram, Y., and Oswald, J.~V.
\newblock Weight decay induces low-rank attention layers.
\newblock \emph{Neural Information Processing Systems}, 2024.

\bibitem[Krizhevsky(2009)]{Krizhevsky2009LearningML}
Krizhevsky, A.
\newblock Learning multiple layers of features from tiny images.
\newblock Master's thesis, University of Toronto, 2009.

\bibitem[Lee et~al.(2016)Lee, Simchowitz, Jordan, and Recht]{pmlr-v49-lee16}
Lee, J.~D., Simchowitz, M., Jordan, M.~I., and Recht, B.
\newblock Gradient descent only converges to minimizers.
\newblock \emph{Conference on Learning Theory}, 2016.

\bibitem[Lester et~al.(2021)Lester, Al-Rfou, and Constant]{lester-etal-2021-power}
Lester, B., Al-Rfou, R., and Constant, N.
\newblock The power of scale for parameter-efficient prompt tuning.
\newblock \emph{Empirical Methods in Natural Language Processing}, 2021.

\bibitem[Li et~al.(2018)Li, Farkhoor, Liu, and Yosinski]{li2018measuring}
Li, C., Farkhoor, H., Liu, R., and Yosinski, J.
\newblock Measuring the intrinsic dimension of objective landscapes.
\newblock \emph{International Conference on Learning Representations}, 2018.

\bibitem[Li \& Liang(2021)Li and Liang]{li-liang-2021-prefix}
Li, X.~L. and Liang, P.
\newblock Prefix-tuning: Optimizing continuous prompts for generation.
\newblock \emph{Association for Computational Linguistics}, 2021.

\bibitem[Malladi et~al.(2023)Malladi, Wettig, Yu, Chen, and Arora]{pmlr-v202-malladi23a}
Malladi, S., Wettig, A., Yu, D., Chen, D., and Arora, S.
\newblock A kernel-based view of language model fine-tuning.
\newblock \emph{International Conference on Machine Learning}, 2023.

\bibitem[Meng et~al.(2024)Meng, Wang, and Zhang]{meng2024pissa}
Meng, F., Wang, Z., and Zhang, M.
\newblock Pi{SSA}: Principal singular values and singular vectors adaptation of large language models.
\newblock \emph{Neural Information Processing Systems}, 2024.

\bibitem[Mordukhovich \& Shao(1995)Mordukhovich and Shao]{mordukhovich1995nonconvex}
Mordukhovich, B.~S. and Shao, Y.
\newblock On nonconvex subdifferential calculus in banach spaces.
\newblock \emph{Journal of Convex Analysis}, 2\penalty0 (1--2):\penalty0 211--227, 1995.

\bibitem[Parikh \& Boyd(2014)Parikh and Boyd]{10.1561/2400000003}
Parikh, N. and Boyd, S.
\newblock Proximal algorithms.
\newblock \emph{Foundations and Trends in Optimization}, 1\penalty0 (3):\penalty0 127--239, 2014.

\bibitem[Park et~al.(2017)Park, Kyrillidis, Carmanis, and Sanghavi]{pmlr-v54-park17a}
Park, D., Kyrillidis, A., Carmanis, C., and Sanghavi, S.
\newblock {Non-square matrix sensing without spurious local minima via the {B}urer--{M}onteiro approach}.
\newblock \emph{International Conference on Artificial Intelligence and Statistics}, 2017.

\bibitem[Recht et~al.(2010)Recht, Fazel, and Parrilo]{doi:10.1137/070697835}
Recht, B., Fazel, M., and Parrilo, P.~A.
\newblock Guaranteed minimum-rank solutions of linear matrix equations via nuclear norm minimization.
\newblock \emph{SIAM Review}, 52\penalty0 (3):\penalty0 471--501, 2010.

\bibitem[Socher et~al.(2013)Socher, Perelygin, Wu, Chuang, Manning, Ng, and Potts]{socher-etal-2013-recursive}
Socher, R., Perelygin, A., Wu, J., Chuang, J., Manning, C.~D., Ng, A., and Potts, C.
\newblock Recursive deep models for semantic compositionality over a sentiment treebank.
\newblock \emph{Empirical Methods in Natural Language Processing}, 2013.

\bibitem[Tomihari \& Sato(2024)Tomihari and Sato]{tomihari2024understanding}
Tomihari, A. and Sato, I.
\newblock Understanding linear probing then fine-tuning language models from {NTK} perspective.
\newblock \emph{Neural Information Processing Systems}, 2024.

\bibitem[Wang et~al.(2018)Wang, Singh, Michael, Hill, Levy, and Bowman]{wang-etal-2018-glue}
Wang, A., Singh, A., Michael, J., Hill, F., Levy, O., and Bowman, S.
\newblock {GLUE}: A multi-task benchmark and analysis platform for natural language understanding.
\newblock \emph{EMNLP Workshop BlackboxNLP: Analyzing and Interpreting Neural Networks for NLP}, 2018.

\bibitem[Wang et~al.(2025)Wang, Li, Wang, Chen, and Chen]{wang-etal-2025-milora}
Wang, H., Li, Y., Wang, S., Chen, G., and Chen, Y.
\newblock {M}i{L}o{RA}: Harnessing minor singular components for parameter-efficient {LLM} finetuning.
\newblock \emph{Association for Computational Linguistics}, 2025.

\bibitem[Wang \& Jacot(2024)Wang and Jacot]{wang2024implicit}
Wang, Z. and Jacot, A.
\newblock Implicit bias of {SGD} in {$L_2$}-regularized linear {DNN}s: One-way jumps from high to low rank.
\newblock \emph{International Conference on Learning Representations}, 2024.

\bibitem[Xiong et~al.(2024)Xiong, Ding, and Du]{xiong2024how}
Xiong, N., Ding, L., and Du, S.
\newblock How over-parameterization slows down gradient descent in matrix sensing: The curses of symmetry and initialization.
\newblock \emph{International Conference on Learning Representations}, 2024.

\bibitem[Xu \& Du(2023)Xu and Du]{xu2023over}
Xu, W. and Du, S.
\newblock Over-parameterization exponentially slows down gradient descent for learning a single neuron.
\newblock \emph{Conference on Learning Theory}, 2023.

\bibitem[Zeng \& Lee(2024)Zeng and Lee]{zeng2024the}
Zeng, Y. and Lee, K.
\newblock The expressive power of low-rank adaptation.
\newblock \emph{International Conference on Learning Representations}, 2024.

\bibitem[Zhang(2021)]{zhang2021sharpglobalguaranteesnonconvex}
Zhang, R.~Y.
\newblock Sharp global guarantees for nonconvex low-rank matrix recovery in the overparameterized regime.
\newblock \emph{arXiv preprint arXiv: 2104.10790}, 2021.

\bibitem[Zhang(2024)]{zhang2024improved}
Zhang, R.~Y.
\newblock Improved global guarantees for the nonconvex {B}urer--{M}onteiro factorization via rank overparameterization.
\newblock \emph{Mathematical Programming}, pp.\  1--30, 2024.

\bibitem[Zhang et~al.(2025)Zhang, Liu, and Chen]{zhang2025onestepgradientsufficeslowrank}
Zhang, Y., Liu, F., and Chen, Y.
\newblock One-step full gradient suffices for low-rank fine-tuning, provably and efficiently.
\newblock \emph{arXiv preprint arXiv: 2502.01235}, 2025.

\bibitem[Zhu et~al.(2018)Zhu, Li, Tang, and Wakin]{8357489}
Zhu, Z., Li, Q., Tang, G., and Wakin, M.~B.
\newblock Global optimality in low-rank matrix optimization.
\newblock \emph{IEEE Transactions on Signal Processing}, 66\penalty0 (13):\penalty0 3614--3628, 2018.

\bibitem[Zhuang et~al.(2021)Zhuang, Wayne, Ya, and Jun]{zhuang-etal-2021-robustly}
Zhuang, L., Wayne, L., Ya, S., and Jun, Z.
\newblock A robustly optimized {BERT} pre-training approach with post-training.
\newblock pp.\  1218--1227, 2021.

\end{thebibliography}
\bibliographystyle{icml2025}

\newpage
\appendix
\onecolumn
\section{Omitted theorems and proofs} \label{Appendix:A}
\subsection{Full proof for Theorem~\ref{thm:1}} \label{proof:thm1}
Here we present the complete proof of Theorem~\ref{thm:1}. For simplicity, we write $\widehat{\mathcal{L}} (X)$ as $f(X)$, $\widehat{\mathcal{L}}^{\mathrm{lora}} (A,B)$ as $g(A,B)$. 

Set $(A,B) \in \mathbb{R}^{m\times r}\times \mathbb{R}^{n\times r}$ to be a SOSP of $f_\lambda$, $AB^\intercal$ as $X$, the compact SVD of $X$ as $L_X \Sigma_X R_X^\intercal$, and $\sigma_i$ as the $i$-th (largest) singular value of $X$. From the first and second-order optimality of $\mathcal{L}_\lambda(A,B)$, we acquire the following properties:
    \begin{enumerate}
        \item $0=\nabla_A g(A,B) = \nabla f(AB^\intercal) \cdot B+\lambda A$
        \item $0=\nabla_B g(A,B) = \nabla f(AB^\intercal)^\intercal \cdot A +\lambda B$
        \item $\nabla^2 g(A,B) [(U,V),(U,V)]= 2\langle \nabla f(X) , UV^\intercal \rangle + \nabla ^2 f(X)[AV^\intercal +UB^\intercal , AV^\intercal +UB^\intercal] + \lambda (\norm{U}_F^2 + \norm{V}_F^2 ) \ge 0 $, \\ $\forall (U,V) \in \mathbb{R}^{m\times r}\times \mathbb{R}^{n\times r}$
    \end{enumerate}
    By equations 1 and 2, 
    \[
    -A^\intercal \nabla f(AB^\intercal) B = \lambda A^\intercal A = \lambda B^\intercal B, 
    \] so if $\lambda >0$ then $A^\intercal A = B^\intercal B$.
    Therefore by Lemma~\ref{matrixlemma1}, we can set $A= L_X \Sigma_X^{1/2} W, \ B = R_X \Sigma_X ^{1/2} W$ for some orthogonal matrix $W$. Plugging $A,B$ back into properties 1 and 2 gives us
    \[
    \nabla f(X) \cdot R_X = -\lambda L_X, \quad \nabla f(X)^\intercal \cdot L_X = -\lambda R_X.
    \]
    If $\lambda =0$, then apparently $\nabla f(X) \cdot R_X =0$ and $\nabla f(X)^\intercal \cdot L_X =0$, so the equation above holds regardless of $\lambda$.
 Now by Lemma~\ref{matrixlemma2}, $\nabla f(X)$ can be represented as 
    \begin{align*}
    \nabla f(X) = -\lambda L_X R_X ^\intercal + S , \quad S=\tilde{L}_X \tilde{\Sigma}_X \tilde {R_X}^\intercal , 
    \end{align*} for some diagonal matrix $\tilde{\Sigma}_X$ and some $\tilde{L}_X, \tilde{R}_X $ that  $\begin{bmatrix}
        L_X \ \tilde{L}_X
    \end{bmatrix}$ and  $\begin{bmatrix}
        R_X \ \tilde{R}_X
    \end{bmatrix}$ are orthogonal. Now we will show that the spectral norm of $S$ is bounded by $\lambda +\beta \sigma_r$.
    \\
    We first consider the case where $\text{rank}(X) =r$, or $\sigma_r (X)>0$. 
    Setting $u_\star , v_\star$ as the top singular vectors of $S$, and plugging in $(U,V) = (-u_\star u_r^\intercal A, v_\star v_r ^\intercal B)$ into property 3, we achieve the following inequality. 
     \begin{align*}
        \nabla^2 f(X) [X v_rv_\star ^\intercal -u_\star u_r ^\intercal X, X v_rv_\star ^\intercal -u_\star u_r ^\intercal X] + \lambda\left(\norm{U}_F^2 + \norm{V}_F^2\right)  &\ge 2 \langle \nabla f(X), u_\star u_r^\intercal X v_r v_\star^\intercal \rangle \\
        &= 2\langle -\lambda L_X R_X^\intercal +S, u_\star u_r^\intercal X v_r v_\star^\intercal\rangle \\
        &= 2\sigma_r \langle -\lambda L_X R_X^\intercal +S, u_\star  v_\star^\intercal \rangle
        \\
        &= 2\sigma_r \langle S, u_\star v_\star^\intercal \rangle = 2\sigma_r \norm{S}_2.
    \end{align*} 
Where \begin{align*}
    \norm{U}_F^2 + \norm{V}_F^2 &= tr(u_\star u_r^\intercal AA^\intercal u_r u_\star^\intercal) + tr(v_\star v_r^\intercal BB^\intercal v_r v_\star^\intercal )\\
    &= tr(u_\star u_r^\intercal L_X \Sigma_X L_X ^\intercal u_r u_\star ^\intercal) + tr(v_\star v_r^\intercal R_X \Sigma_X R_X ^\intercal v_r v_\star ^\intercal )\\
    &= \sigma_r \cdot( tr(u_\star u_\star ^\intercal ) + tr(v_\star v_\star ^\intercal )) = 2\sigma_r 
\end{align*}
Now the restricted smoothness of $f$ yields the following inequality. 
\begin{align*}
    \nabla^2 f(X) [X v_rv_\star ^\intercal -u_\star u_r ^\intercal X, X v_rv_\star ^\intercal -u_\star u_r ^\intercal X] &\le \beta\norm{X v_rv_\star ^\intercal -u_\star u_r ^\intercal X}_F ^2 \\
        &= \beta \sigma_r ^2 \norm{u_r v_\star ^\intercal - u_\star v_r ^\intercal}_F ^2\le 2\beta \sigma_r ^2
\end{align*}
Combining the two inequalities results in 
    \[
    \norm{S}_2 \le \beta  \sigma_r + \lambda
    \]
   \\
   Now we see the case where $\text{rank}(X) =r'<r$, or $\sigma_r (X)=0$. Since $\text{rank}(X)<r$, $A$ and $B$ are also rank deficient, so we can find a unit vector $w$ that $Aw=0$. Now 
 \[
 Aw=0 \Rightarrow A^\intercal Aw=0 \Rightarrow B^\intercal B w=0 \Rightarrow wB^\intercal Bw =0 \Rightarrow \norm{Bw} =0
 \]
so we have $Bw=0$ as well. 
Now for any unit vector $u\in \mathbb{R}^m, v\in \mathbb{R}^n$, plugging in $(U,V) = (uw^\intercal, -vw^\intercal) $ into property 3 results in 
\[
\langle \nabla f(X), uv^\intercal \rangle \le \lambda 
\]
Since this holds for any unit vector $u,v$, we have $\norm{\nabla f(X)}_2 \le \lambda $. Since $\nabla f(X) = -\lambda L_X R_X^\intercal + S$, this implies $\norm{S}_2 \le \lambda$ as well. 

Now that we know $\|S\|_2 \le \beta \sigma_r +\lambda $ for all cases, we will find what this implies. Consider the subgradient $\partial (\lambda \|X\|_* )$, which we know the explicit form as
\[
\partial (\lambda \|X\|_*) = \left\{ G \in \mathbb{R}^{m \times n} \ \Bigg| \ G = \lambda L_X R_X^\intercal + W,\; L_X^\intercal W = 0,\; W R_X = 0,\; \|W\|_2 \leq \lambda \right\}
\]
Since $S$ also satisfies $L_X^\intercal S= 0$ and $S R_X = 0$, there exists an element $g\in \partial (\lambda \|X\|_* )$ that $\|g+\nabla f(X)\|_2 \le \beta \sigma_r$. Now let $Z=(g+\nabla f(X))$. For any positive real number $\kappa>0$ that $\kappa\times \beta \sigma_r \le \sigma_{r_\star}$, the singular vectors of $\kappa Z$ are orthogonal to those of $X$ and $\|\kappa Z\|_2 \le\sigma_{r_\star} $, so the top $r_\star$ singular vectors of $X-\kappa Z$ coincide with those of $X$. Therefore, by the Eckart--Young--Mirsky theorem we see
\[
X^{r_\star}\in \underset{\text{rank}(Y)\le r_\star}{argmin} \|Y-(X-\kappa Z)\|_F^2 
\]
where $X^{r_\star}$ is a projection of $X$ onto the set of matrices of rank $r_\star$ or less. Since $\text{rank}(X_\star) =r_\star$, we can plug in $X_\star$ into $Y$ here, resulting in 
\[
\| X^{r_\star} -X +\kappa Z\|_F^2 \le \| X_\star -X +\kappa Z\|_F^2
\]
which is again equivalent to
\begin{equation} \label{projection}
\|X^{r_\star} -X\|_F^2 + 2k \langle X^{r_\star}-X,Z \rangle \le \|X_\star -X\|_F^2 + 2k \langle X_\star-X,Z \rangle
\end{equation}
Here we actually know that $\langle X^{r_\star}-X, Z\rangle =0$, since the singular vectors of $X$ and $Z$ are orthogonal. Now by restricted convexity at $X_\star$, we have
    \begin{equation} \label{FromRSC}
       \langle X-X_\star , \nabla f(X) - \nabla f(X_\star)\rangle \ge \alpha \norm{X- X_\star}_F ^2 
    \end{equation}
Since $X_\star$ is the global minimizer of $f_{\lambda} (X) = f(X) + \lambda \norm{X}_\star$ and the nuclear norm is lower semi-continuous, from Theorem 3.1 of \cite{mordukhovich1995nonconvex} we have $-\nabla f(X_\star) \in \partial  (\lambda \|X_\star\|_*)$. We know by definition that $g\in \partial (\lambda \|X\|_*)$, so
the property of the subgradient implies 
\begin{align} \label{subgradient}
    \langle X-X_\star , g -(-\nabla f(X_\star))\rangle \ge 0
\end{align}
Summing up  the inequalities \eqref{projection},\eqref{FromRSC},\eqref{subgradient} we have 
\[
(2\kappa \alpha -1) \|X_\star -X\|_F^2 + \|X^{r_\star} -X\|_F^2 \le 0
\]
Since this inequality holds for any positive real number $\kappa$ that $\kappa\times \beta \sigma_r \le \sigma_{r_\star}$, if $\sigma_r =0$ we can select an arbitrarily large $\kappa$, resulting in $X_\star =X$. If $\sigma_r \not = 0$, we can plug in $\kappa = \frac{\sigma_{r_\star}}{\sigma_r}$, resulting in $X=X_\star$ if $1 \ge 2\kappa \alpha$ and 
\[
\|X_\star - X\|_F^2 \ge \frac{\|X^{r_\star} -X\|_F^2}{1-2\kappa \alpha} =\frac{\sum_{s=r_\star +1}^r \sigma_s ^2}{1-2\kappa \alpha}
\]
otherwise. Therefore our statement is proven. 

\subsection{Proof of Theorem~\ref{thm:3}}
Here we present the proof of Theorem~\ref{thm:3}.
\begin{proof}
We can proceed with the identical reasoning and notations with the proof of Theorem 1 in \ref{proof:thm1}, up to defining $Z=g+\nabla f(X)$ with $\|Z\|_2\le \beta \sigma_r$. Now denoting the true global minimizer as $X_\star$, we have a rank $r_\star$ approximate global minimizer $X_\star ^{r_\star}$ with $\|X_\star - X_\star^{(\delta)}\|_F\le \delta$.
We first prove that $\sigma_r (X) \ge \varepsilon \cdot \frac{\alpha}{2\beta\sqrt{r}}$  if $X$ is an $\varepsilon$-spurious local minima, which requires independent reasoning, and then we prove the remaining results analogously to Theorem~\ref{thm:1}. 

\paragraph{Step 1.}
Fix some constant $c>0$, and define $r' =\min \{\gamma \ | \sigma_\gamma (X)<c, \ 1\le \gamma \le r\}$. Now setting $\kappa$ as a positive real number that $\kappa\times \beta \sigma_r <c$. By the Eckart--Young--Mirsky theorem we see
\[
\underset{\text{rank}(Y)\le r_\star}{argmin} \|Y-(X-\kappa Z)\|_F^2 = (X-\kappa Z)^{r_\star}
\]
where $(X-\kappa Z)^{r_\star}$ is the projection of $X-\kappa Z$ onto the set of matrices of rank $r_\star$ or less. Since $\|\kappa Z\|_2 <c$,  by definition of $r'$ the $1$-th to $r$-th singular vectors of $X-\kappa Z$ coincide with those of $X$, while the subsequent singular values are all smaller than $c$. This implies 
\[
\|X^{r'} - (X-\kappa Z)^{r_\star} \|_F^2 \le c^2 \cdot \max\{r_\star - r', 0\}
\]
If $r_\star < r'$, we are done, so we can consider only the case where $r_\star -r'\ge 0$. Combining the two relations, we have \begin{align*}
\|X^{r'}-(X-\kappa Z)\|_F &\le \|X^{r'}-(X-\kappa Z)^{r_\star} \|_F +\|(X-\kappa Z)^{r_\star}  - (X-\kappa Z)\|_F  \\ &\le c \sqrt{r_\star - r'} + \|X_\star^{(\delta)} -(X-\kappa Z)\|_F 
\end{align*}
and squaring each sides, this expands to 
\[
\|X^{r'}-X\|_F^2  \le c^2 (r_\star -r') + 2c\sqrt{r_\star -r'} \|X^{r'}-(X-\kappa Z)\|_F +\|X_\star^{(\delta)} -X\|_F^2 +2k\langle X_\star^{(\delta)} -X^{r'}, Z\rangle 
\]
Now in the same way with \ref{proof:thm1}, due to the restricted convexity and the subgradient property we have 
\[
2\alpha \kappa\|X-X_\star\|_F^2 \le 2k\langle X-X_\star , Z \rangle.
\]
Adding the two up, we have
\begin{align*}
(2\alpha \kappa -1)\|X-X_\star\|_F^2 +\|X^{r'}-X\|_F^2 \le & c^2 (r_\star -r)+2c\sqrt{r_\star -r'}\|X^{r'}-(X-\kappa Z)\|_F +(\|X_\star ^{r_\star} -X\|_F^2 - \|X_\star -X\|_F^2 ) \\
& + 2k \langle X_\star ^{r_\star} -X_\star , Z \rangle + 2k \langle X -X^{r'} , Z \rangle 
\end{align*}
which again simplifies to 
\[
(2\alpha \kappa -1)\|X-X_\star \|_F^2 - 2\delta \|X_\star -X\|_F \le 3c^2 (r-r') +2c\delta \sqrt{r-r_\star},
\]
or equivalently 
\[
\|X-X_\star\|_F \le \frac{\delta}{2\alpha \kappa -1} +\sqrt{(\frac{\delta}{2\alpha \kappa -1})^2 +3c^2 (r-r')+2c\delta \sqrt{r-r_\star}}.
\]
Therefore, setting $c = \frac{\varepsilon}{2\sqrt{r-r'}} $ and $\kappa=\frac{c}{\beta \sigma_r}$ (or any large number if $\sigma_r =0$), if $\sigma_r (X) < \frac{\alpha c}{\beta}$, then $\delta=o(\varepsilon^3)$ implies $\|X-X_\star\|_F <\varepsilon$, making $X$ an $\varepsilon$-global minimizer. Thus if $X$ is a spurious local minima, then  $\sigma_r (X) \ge \frac{\alpha}{2\beta \sqrt{r}} \cdot \varepsilon$

\paragraph{Step 2.}

Similarly to Theorem~\ref{thm:1}, define $\kappa$ as a positive real number that $\kappa\cdot \beta \sigma_r <\sigma_{r_\star}$, implying $\|\kappa Z\|\le \sigma_{r_\star}$. Then 
\[
X^{r_\star} \in \underset{\text{rank}(Y)\le r_\star}{\text{argmin}} \|Y-(X-\kappa Z)\|_F^2 
\]
and $\text{rank} (X_\star^{(\delta)}) =r_\star$ so 
\[
\|X^{r_\star}-X+\kappa Z\|_F^2 \le \|X_\star^{(\delta)} - X+\kappa Z\|_F^2 
\]
which is equivalent to 
\[
\|X^{r_\star} -X\|_F^2 \le \|X_\star ^{r_\star} - X\|_F^2 + 2k \langle X_\star ^{r_\star} - X, Z \rangle.
\]
Now the relations \eqref{FromRSC},\eqref{subgradient} from the proof of Theorem~\ref{thm:1} still apply here so adding these we have
\[
\alpha \|X-X_\star ^{r_\star}\|_F ^2 \le \kappa \langle X-X_\star , Z \rangle 
\]
so adding the two inequalities, we obtain
\[
2\kappa \alpha \|X-X_\star ^{r_\star} \|_F^2 -\|X-X_\star ^{r_\star}\|_F^2 +\|X^{r_\star} -X\|_F^2 \le 2k \langle X_\star ^{r_\star} -X_\star , Z\rangle .
\]
Here we can upper bound $\langle X_\star ^{r_\star} -X_\star , Z\rangle$ by $2\sigma_{r_\star} \cdot \|X_\star^{(\delta)} -X_\star\|_*$ , by the duality of the nuclear norm and spectral norm and $\|\kappa Z\|_2 \le \sigma_{r_\star}$. $\sigma_{r_\star} $ is again bounded by $\frac{D}{\sqrt{r_\star}}$ since we are looking at $\|X-X_\star \|_F\le \|X\|_F +\|X_\star\|_F\le 2D$, and $\|X_\star^{(\delta)} - X_\star\|_* \le \sqrt{r-r_\star} \cdot \|X_\star^{(\delta)} - X_\star\|_F \le  \delta\times \sqrt{r-r_\star}$ by the Cauchy-Schwartz inequality. Therefore we have
\[
2\kappa \alpha \|X-X_\star ^{r_\star} \|_F^2 -\|X-X_\star ^{r_\star}\|_F^2 +\|X^{r_\star} -X\|_F^2 \le 2\delta D \sqrt{\frac{r-r_\star}{r_\star}}.
\]
We can expand this inequality, using $X-X_\star ^{r_\star}  =(X-X_\star)+(X_\star-X_\star ^{r_\star})$ as
\begin{align*}
(2\kappa \alpha -1) \|X_\star-X\|_F^2 +\|X^{r_\star} -X\|_F^2 \le 2\langle X-X_\star , X_\star - X_\star ^{r_\star} \rangle +\delta D\sqrt{\frac{r-r_\star}{r_\star}} \le 2\delta \|X-X_\star\|_F +\delta 2D\sqrt{\frac{r-r_\star}{r_\star}}.
\end{align*}
Solving the quadratic inequality about $\|X_\star ^{r_\star} -X\|_F$, if $2\kappa \alpha -1 >\varepsilon $ the discriminant is \[
\delta ^2 +4(2\kappa \alpha-1)\left(2\delta D \sqrt{\frac{r-r_\star}{r_\star}} - \|X^{r_\star} -X\|_F^2 \right).
\]
Now we know that if $X$ is an $\varepsilon$-spurious local minima then  $\|X^{r_\star} -X\|_F \ge \sigma_r \ge \frac{\alpha}{2\beta \sqrt{r}} \cdot \varepsilon$, so the discriminant is negative since $\delta = o(\varepsilon^2 )$ and therefore there would be no spurious local minima. If $2\kappa \alpha -1 < 0$, we would see
\[
\|X_\star -X\|_F \le \sqrt{\frac{\|X^{r_\star}-X\|_F^2-(2\delta D\sqrt{\frac{r-r_\star}{r_\star}} -\frac{\delta^2}{1-2\kappa \alpha})}{1-2\kappa \alpha}}-\frac{\delta}{1-2\kappa \alpha}
\]
resulting in 
\[
\|X_\star -X\|_F \le \sqrt{\frac{\|X^{r_\star}-X\|_F^2-\varepsilon^3}{1-2\kappa \alpha}}-\varepsilon^2
\]
given $\delta =o(\varepsilon^3)$
\end{proof}
 
\subsection{Theorems for the multiple matrix case}
In Section~\ref{sec:3}, we presented most theorems as a result for tuning a single matrix for clarity. Here we explicitly portray the natural extension to the multi matrix case for the theorems and method of proof. First, we provide the proof of Theorem~\ref{thm:2}, the multi matrix case without error. 

\begin{proof}
As in the proofs above, we write $\widehat{\mathcal{L}} (\mathbf{X})$ as $f(\mathbf{X})$, $\widehat{\mathcal{L}}^{\mathrm{lora}} (\mathbf{A},\mathbf{B})$ as $g(\mathbf{A},\mathbf{B})$ for simplicity. Set $(\mathbf{A},\mathbf{B})$ to be a SOSP of $f_\lambda$, and $\mathbf{A}\mathbf{B}^\intercal$ as $\mathbf{X}_\square =(X_\square ^{(1)}, \dots , X_\square^{(L)})$. From the first and second-order optimality of $\mathcal{L}_\lambda(\mathbf{A},\mathbf{B})$, we acquire the following properties:
    \begin{enumerate}
        \item $0=\nabla_\mathbf{A} g(\mathbf{A},\mathbf{B}) = \nabla f(\mathbf{A}\mathbf{B}^\intercal) \cdot \mathbf{B}+\lambda \mathbf{A}$
        \item $0=\nabla_\mathbf{B} g(\mathbf{A},\mathbf{B}) = \nabla f(\mathbf{A}\mathbf{B}^\intercal)^\intercal \cdot \mathbf{A} +\lambda \mathbf{B}$
        \item $\nabla^2 g(\mathbf{A},\mathbf{B}) [(\mathbf{U},\mathbf{V}),(\mathbf{U},\mathbf{V})]= 2\langle \nabla f(\mathbf{X}), \mathbf{U}\mathbf{V}^\intercal \rangle + \nabla ^2 f(\mathbf{X})[\mathbf{A}\mathbf{V}^\intercal +\mathbf{U}\mathbf{B}^\intercal , \mathbf{A}\mathbf{V}^\intercal +\mathbf{U}\mathbf{B}^\intercal] + \lambda (\norm{\mathbf{U}}_F^2 + \norm{\mathbf{V}}_F^2 ) \ge 0 $
    \end{enumerate}
    Recall that for a tuple of matrices 
    \[
    \mathbf{C} = (C^{(1)} ,\dots , C^{(L)}), \quad , (D^{(1)}, \dots , D^{(L)})
    \]
    we write the product of the tuples as
    \[
    \mathbf{C}\mathbf{D}
    ^\intercal = (C^{(1)}(D^{(1)})^\intercal , \dots ,C^{(L)}(D^{(L)})^\intercal ),
    \]
    the scalar product as
    \[
    k\mathbf{C} = (kC^{(1)} ,\dots , kC^{(L)}),
    \]
    and the Frobernius norm as
    \[
    \|\mathbf{C}\|_F = \sqrt{\sum_{l=1}^L \|C^{(l)}\|_F^2 }.
    \]
    We interpret the gradient $\nabla f(\mathbf{A}\mathbf{B}^\intercal)$ as a tuple with the same shape with $\mathbf{A}\mathbf{B}^\intercal$, and the Hessian $\nabla ^2 f(\mathbf{X})$ as a $(m_1 n_1 +\dots +m_Ln_L)\times (m_1 n_1 +\dots +m_Ln_L) $ block matrix consisted of $m_in_i \times m_jn_j$ blocks $\nabla ^2_{i,j} f(X)$.

    By looking at only the $l$th matrices from the equations 1,2, we have 
     \begin{enumerate}
        \item $0=\nabla_{A^{(l)} }g(A^{(l)},B^{(l)}) = \nabla_l f(\mathbf{A}\mathbf{B}^\intercal) \cdot B^{(l)}+\lambda A^{(l)}$
        \item $0=\nabla_{B^{(l)}} g(A^{(l)},B^{(l)}) = \nabla_l f(\mathbf{A}\mathbf{B}^\intercal)^\intercal \cdot A^{(l)} +\lambda B^{(l)}$
    \end{enumerate}
    Furthermore, inputting $\mathbf{U}, \mathbf{V}$ with 
    \[
    \mathbf{U} = \left (0, \dots, U^{(l)}, \dots, 0\right), \quad,  \mathbf{V} = \left(0, \dots, V^{(l)}, \dots, 0\right)
    \]
    for some $U^{(l)}, V^{(l)}$ we see 
    \begin{enumerate}
    \setcounter{enumi}{2}
    \item $2\langle \nabla_l f(\mathbf{X}), U^{(l)}{V^{(l)}}^\intercal \rangle + \nabla_{l,l} ^2 f(\mathbf{X})[A^{(l)}(V^{(l)})^\intercal +U^{(l)}(B^{(l)})^\intercal , A^{(l)}(V^{(l)})^\intercal +U^{(l)}(B^{(l)})^\intercal] + \lambda (\norm{U^{(l)}}_F^2 + \norm{V^{(l)}}_F^2 ) \ge 0 $
    \end{enumerate}
    Now we can proceed similarly with the single matrix case. By the new versions of equations 1 and 2, 
    \[
    -(A^{(l)})^\intercal \nabla f(\mathbf{A}\mathbf{B}^\intercal) B^{(l)} = \lambda (A^{(l)})^\intercal A^{(l)} = \lambda (B^{(l)})^\intercal B^{(l)}, 
    \] so if $\lambda >0$ then $(A^{(l)})^\intercal A^{(l)} = (B^{(l)})^\intercal B^{(l)}$. Now denote the SVD of $X^{(l)}$ as $L_X^{(l)} \Sigma_X^{(l)} (R_X^{(l)})^\intercal$ , and $\sigma_i^{(l)}$ as the $i$-th (largest) singular value of $X^{(l)}$. By Lemma~\ref{matrixlemma1}, we can set $A^{(l)}= L_X^{(l)} (\Sigma_X^{(l)})^{1/2} W, \ B^{(l)} = R_X^{(l)} (\Sigma_X ^{(l)})^{1/2} W$ for some orthogonal matrix $W$. Plugging $A^{(l)},B^{(l)}$ back into properties 1 and 2 gives us
    \[
    \nabla_l f(\mathbf{X}) \cdot R_X^{(l)} = -\lambda L_X^{(l)}, \quad \nabla f(\mathbf{X})^\intercal \cdot L_X^{(l)} = -\lambda R_X^{(l)}.
    \]
    If $\lambda =0$, then apparently $\nabla_l f(\mathbf{X}) \cdot R_X^{(l)} =0$ and $\nabla_l f(\mathbf{X})^\intercal \cdot L_X^{(l)} =0$, so the equation above holds regardless of $\lambda$.
 Now by Lemma~\ref{matrixlemma2}, $\nabla_l f(X)$ can be represented as 
    \begin{align*}
    \nabla_l f(\mathbf{X}) = -\lambda L_X^{(l)} {R_X^{(l)}}^\intercal +S,\quad S=\tilde{L}_X^{(l)} \tilde{\Sigma}_X \tilde ({R_X}^{(l)})^\intercal,
    \end{align*} for some diagonal matrix 
    $\tilde{\Sigma}_X$ and some ${\tilde{L}}^{(l)}_X,{\tilde{R}}^{(l)}_X $ that  
    $\begin{bmatrix}
        L_X^{(l)} \ \tilde{L}^{(l)}_X
    \end{bmatrix}$ and  $\begin{bmatrix}
        R_X^{(l)} \ \tilde{R}^{(l)}_X
    \end{bmatrix}$ are orthogonal. Now we will show that the spectral norm of $S$ is bounded by $\lambda +\beta^{(l)} \sigma_r^{(l)}$.
    \\
    We first consider the case where $\text{rank}(X^{(l)}) =r$, or $\sigma_r (X^{(l)})>0$. 
    Setting $u_\star , v_\star$ as the top singular vectors of $S$, and plugging in $(U^{(l)},V^{(l)}) = (-u_\star u_r^\intercal A^{(l)}, v_\star v_r ^\intercal B^{(l)})$ into property 3, we achieve the following inequality. 
     \begin{align*}
        \nabla_{l,l}^2 f(\mathbf{X}) [X^{(l)} v_rv_\star^\intercal -u_\star u_r^\intercal X^{(l)}, X^{(l)} v_r v_\star^\intercal -u_\star u_r^\intercal X^{(l)}] + \lambda\left(\norm{U^{(l)}}_F^2 + \norm{V^{(l)}}_F^2\right)  &\ge 2 \langle \nabla_l f(\mathbf{X^{(l)}}), u_\star u_r^\intercal X^{(l)} v_r v_\star^\intercal \rangle \\
        &= 2\langle -\lambda L_X^{(l)} {R_X^{(l)}}^\intercal +S, u_\star u_r^\intercal X^{(l)} v_r v_\star^\intercal\rangle \\
        &= 2\sigma_r^{(l)} \langle -\lambda L_X^{(l)} {R_X^{(l)}}^\intercal +S, u_\star  v_\star^\intercal \rangle
        \\
        &= 2\sigma_r^{(l)} \langle S, u_\star  v_\star^\intercal \rangle = 2\sigma_r^{(l)} \norm{S}_2.
    \end{align*} 
Where 
\begin{align*}
    \norm{U^{(l)}}_F^2 + \norm{V^{(l)}}_F^2 &= tr(u_\star u_r^\intercal A^{(l)}(A^{(l)})^\intercal u_r u_\star^\intercal) + tr(v_\star v_r^\intercal B^{(l)}(B^{(l)})^\intercal v_r v_\star^\intercal )\\
    &= tr(u_\star u_r^\intercal L_X^{(l)} \Sigma_X^{(l)} {L_X^{(l)}}^\intercal u_r u_\star ^\intercal) + tr(v_\star v_r^\intercal R_X^{(l)} \Sigma_X^{(l)} {R_X^{(l)}} ^\intercal v_r v_\star ^\intercal )\\
    &= \sigma_r ^{(l)} \cdot( tr(u_\star u_\star ^\intercal ) + tr(v_\star v_\star ^\intercal )) = 2\sigma_r ^{(l)} 
\end{align*}
Now the restricted smoothness of $f$ yields the following inequality. 
\begin{align*}
    \nabla^2 f(X^{(l)}) [X^{(l)} v_rv_\star^\intercal -u_\star u_r^\intercal X^{(l)}, X^{(l)} v_rv_\star^\intercal -u_\star u_r^\intercal X^{(l)}] &\le \beta^{(l)}\norm{X^{(l)} v_rv_\star ^\intercal -u_\star u_r^\intercal X^{(l)}}_F ^2 \\
    &= \beta^{(l)} {\sigma_r^{(l)}}^2 \norm{u_r v_\star^\intercal - u_\star v_r^\intercal}_F^2\le 2\beta^{(l)} {\sigma_r^{(l)}}^2
\end{align*}
Combining the two inequalities results in 
\[
    \norm{S}_2 \le \beta^{(l)}  \sigma_r ^{(l)} + \lambda.
\]
Now we see the case where $\text{rank}(X^{(l)}) =r'<r$, or $\sigma_r (X^{(l)})=0$. Since $\text{rank}(X^{(l)})<r$, $A^{(l)}$ and $B^{(l)}$ are also rank deficient, so we can find a unit vector $w$ that $A^{(l)}w=0$. Now 
 \[
 A^{(l)}w=0 \Rightarrow (A^{(l)})^\intercal A^{(l)}w=0 \Rightarrow (B^{(l)})^\intercal B^{(l)} w=0 \Rightarrow w(B^{(l)})^\intercal B^{(l)}w =0 \Rightarrow \norm{B^{(l)}w} =0
 \]
so we have $B^{(l)}w=0$ as well. 
Now for any unit vector $u\in \mathbb{R}^m, v\in \mathbb{R}^n$, plugging in $(U,V) = (uw^\intercal, -vw^\intercal) $ into property 3 results in 
\[
\langle \nabla_l f(\mathbf{X}), uv^\intercal \rangle \le \lambda 
\]
Since this holds for any unit vector $u,v$, we have $\norm{\nabla_l f(\mathbf{X})}_2 \le \lambda $. Since $\nabla_l f(\mathbf{X}) = -\lambda L_X^{(l)} (R_X^{(l)})^\intercal + S$, this implies $\norm{S}_2 \le \lambda$ as well. 

Now that we know $\|S\|_2 \le \beta^{(l)} \sigma_r ^{(l)} +\lambda $ for all cases, we will find what this implies. Consider the subgradient $\partial (\lambda \|X^{(l)}\|_* )$, which we know the explicit form as
\[
\partial (\lambda \|X^{(l)}\|_*) = \left\{ G \in \mathbb{R}^{m \times n} \ \,\Bigg|\, \ G = \lambda L_X^{(l)} (R_X^{(l)})^\intercal + W,\; (L_X^{(l)})^\intercal W = 0,\; W R_X^{(l)} = 0,\; \|W\|_2 \leq \lambda \right\}
\]
Since $S$ also satisfies $(L_X^{(l)})^\intercal S= 0$ and $S R_X^{(l)} = 0$, there exists an element $g\in \partial (\lambda \|X^{(l)}\|_* )$ that $\|g+\nabla_l f(\mathbf{X})\|_2 \le \beta^{(l)} \sigma_r ^{(l)}$. Now let $Z=(g+\nabla_l f(\mathbf{X}))$. For any positive real number $\kappa>0$ that $\kappa\times \beta^{(l)} \sigma_r ^{(l)} \le \sigma_{r_\star}^{(l)}$, the singular vectors of $\kappa Z$ are orthogonal to those of $X^{(l)}$ and $\|\kappa Z\|_2 \le\sigma_{r_\star}^{(l)} $, so the top $r_\star$ singular vectors of $X^{(l)}-\kappa Z$ coincide with those of $X^{(l)}$. Therefore, by the Eckart--Young--Mirsky theorem we see
\[
(X^{(l)})^{r_\star}\in \underset{\text{rank}(Y)\le r_\star}{argmin} \|Y-(X^{(l)}-\kappa Z)\|_F^2 
\]
where $(X^{(l)})^{r_\star}$ is a projection of $X^{(l)}$ onto the set of matrices of rank $r_\star$ or less. Since $\text{rank}(X^{(l)}_\star) \le r_\star$, we can plug in $X^{(l)}_\star$ into $Y$ here, resulting in 
\[
\| (X^{(l)})^{r_\star} -X^{(l)} +\kappa Z\|_F^2 \le \| X^{(l)}_\star -X^{(l)} +\kappa Z\|_F^2
\]
which is again equivalent to
\begin{equation} \label{block_projection}
\|(X^{(l)})^{r_\star} -X^{(l)}\|_F^2 + 2k \langle (X^{(l)})^{r_\star}-X^{(l)},Z \rangle \le \|X^{(l)}_\star -X^{(l)}\|_F^2 + 2k \langle X^{(l)}_\star-X^{(l)},Z \rangle
\end{equation}
Here we actually know that $\langle (X^{(l)})^{r_\star}-X^{(l)}, Z\rangle =0$, since the singular vectors of $X^{(l)}$ and $Z$ are orthogonal. Now by restricted convexity at $X^{(l)}_\star$, we have
    \begin{equation} \label{block_FromRSC}
       \langle X^{(l)}-X^{(l)}_\star , \nabla_l f(\mathbf{X}) - \nabla_l f(\mathbf{X}_\star)\rangle \ge \alpha^{(l)} \norm{X^{(l)}- X^{(l)}_\star}_F ^2 
    \end{equation}
Since $X_\star$ is the global minimizer of $f_{\lambda} (\mathbf{X}) = f(\mathbf{X}) + \lambda \norm{\mathbf{X}}_\star$ and the nuclear norm is lower semi-continuous, from \citet[Theorem~3.1]{mordukhovich1995nonconvex} we have $-\nabla_l f(\mathbf{X}_\star) \in \partial  (\lambda \|X^{(l)}_\star\|_*)$. We know by definition that $g\in \partial (\lambda \|X^{(l)}\|_*)$, so
the property of the subgradient implies 
\begin{align} \label{block_subgradient}
    \langle X^{(l)}-X^{(l)}_\star , g -(-\nabla_l f(X_\star))\rangle \ge 0
\end{align}
Summing up  the inequalities \eqref{block_projection},\eqref{block_FromRSC},\eqref{block_subgradient} we have 
\[
(2\kappa \alpha^{(l)} -1) \|X^{(l)}_\star -X^{(l)}\|_F^2 + \|(X^{(l)})^{r_\star} -X^{(l)}\|_F^2 \le 0
\]
Since this inequality holds for any positive real number $\kappa$ that $\kappa\times \beta^{(l)} \sigma_r ^{(l)} \le \sigma_{r_\star}$, if $\sigma_r ^{(l)} =0$ we can select an arbitrarily large $\kappa$, resulting in $X^{(l)}_\star =X^{(l)}$. If $\sigma_r ^{(l)} \not = 0$, we can plug in $\kappa = \frac{\sigma_{r_\star}^{(l)}}{\sigma_r^{(l)}}$, resulting in $X^{(l)}=X^{(l)}_\star$ if $1 \ge 2\kappa \alpha^{(l)}$ and 
\[
\|X^{(l)}_\star - X^{(l)}\|_F^2 \ge \frac{\|(X^{(l)})^{r_\star} -X^{(l)}\|_F^2}{1-2\kappa \alpha^{(l)}}
\]
otherwise. Therefore, if $\sigma_r (X^{(l)})\ge \frac{\alpha^{(l)}}{2\beta^{(l)}} \cdot \sigma_{r_\star} (X^{(l)})$ for all $1\le l\le L$ then $X_\star ^{(l)} = X^{(l)}$  for all $1\le l\le L$, resulting in $\mathbf{X} = \mathbf{X}_\star$. If not, there must exist some $l$ satisfying the inequality above. Therefore our statement is proved.     
\end{proof}

Now we present a Master Theorem, applicable for the most general case with multiple matrices and an approximately low rank solution. We first analogously define a $\delta$-global minimizer for the multiple matrix case:

We say $\mathbf{X}_\star^{(\delta)}$ is a \emph{$\delta$-global minimizer of full fine-tuning} if
\[
\|(X_\star^{(l)})^{(\delta)} - X_\star^{(l)}\|_F \le \delta \ , \quad 1\le l \le L
\]
where $\mathbf{X}_\star$ is an `exact' global minimizer of $\widehat{\mathcal{L}}^\mathrm{full}$. 
Based on this definition, we present the analogous result. 

\begin{theorem}{(Master Theorem)} \label{Master_Theorem}
Let $\varepsilon>0$ and $\lambda\ge 0$.
Assume the full fine-tuning loss $\widehat{\mathcal{L}}^\mathrm{full}_\lambda$ has an exact global minimizer $\mathbf{X}_\star$, and a rank-$r_\star$ $\delta$-global minimizer $\mathbf{X}_\star^{(\delta)}$ with $\delta=o(\varepsilon^3)$.
Respectively denote the $(r,D)$-RSC and $(r,D)$-RSM constants of $\widehat{\mathcal{L}}^\mathrm{full}$ about $\mathbf{X}_\star$ as $\alpha = (\alpha^{(1)}, \dots, \alpha^{(L)}) $ and $\beta=(\beta^{(1)}, \dots, \beta^{(L)})$.
Assume $\alpha^{(l)} >0$ and $\beta^{(l)} <\infty$ for each $1\le l \le L$.
Assume we use a LoRA module with rank $r\ge r_\star$.
Then, every SOSP $(\mathbf{A}, \mathbf{B})$ of $\widehat{\mathcal{L}}^{\mathrm{lora}}_\lambda $ with $\mathbf{X}_\square=\mathbf{A}\mathbf{B}^\intercal $ and $\|\mathbf{X}_\square -\mathbf{X}_\star\|_F \le D$ satisfies the following.
    \begin{enumerate}
        \item  If $2\alpha^{(l)} \ge\beta^{(l)} (1+\varepsilon)$ for all $l=1,\dots, L$ (special regime),
        $\mathbf{X}_\square$ is an $\varepsilon$-global minimizer.
        \item If $2\alpha^{(l)} <\beta^{(l)}(1+\varepsilon)$ for some $l=1,\dots, L$ (generic regime), one of the following holds.
        \begin{itemize}
        \item[(i)]
        $\mathbf{X}_\square$ is an $\varepsilon$-global minimizer.
         \item[(ii)] $\mathbf{X}_\square$ is not an $\varepsilon$-global minimizer, $X^{(l)}_\square$  is exactly rank $r$ with $\sigma_r (X^{(l)}_\square) \ge  \max\{\frac{2\alpha}{\beta (1+\varepsilon)}\sigma_{r_\star} (X_\square^{(l)}),\frac{\alpha}{2\beta \sqrt{r}}\cdot \varepsilon \} $, and either \[\sigma_r (X_\square^{(l)}) \le \frac{2\alpha}{\beta} \sigma_{r_\star} (X_\square^{(l)})\] or 
        \[
        \!\!\!\!\!\!\!\!\!\!\!\!\!
        \norm{X_\square^{(l)}-X_\star^{(l)}}_F \ge \sqrt{\frac{\norm{X_\square^{(l)}-\Pi_{\mathrm{rank}\le r_\star}(X_\square^{(l)})}_F^2 -\varepsilon^3}{1-\frac{2\alpha \sigma_{r_\star}}{\beta \sigma_r}} }-\varepsilon^2\]
         where $\Pi_{\mathrm{rank}\le r_\star}(X_\square^{(l)})$ is the projection of $X_\square^{(l)}$ onto the set of matrices of rank $r_\star$ or less.
        \end{itemize}
    \end{enumerate} 
    To clarify, when we say $X_\square$ is or is not an $\varepsilon$-global minimizer, it is with respect to $\widehat{\mathcal{L}}^\mathrm{full}_\lambda$.
\end{theorem}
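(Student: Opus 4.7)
The plan is to combine the block-wise decomposition used in the proof of Theorem~\ref{thm:2} with the perturbative analysis used in the proof of Theorem~\ref{thm:3}. Since all of the critical inequalities in those earlier proofs are ultimately indexed by a single block $l$, one can carry out the Theorem~\ref{thm:3} argument separately on each $l$, provided the block-wise restricted strong convexity, restricted smoothness, and $\delta$-closeness are propagated correctly.

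The first step is to extract the per-block first- and second-order optimality conditions at the SOSP $(\mathbf{A},\mathbf{B})$. By choosing test tuples $\mathbf{U},\mathbf{V}$ that are zero outside block $l$, we obtain $\nabla_l f(\mathbf{X}_\square) B^{(l)} + \lambda A^{(l)} = 0$, $\nabla_l f(\mathbf{X}_\square)^\intercal A^{(l)} + \lambda B^{(l)} = 0$, and
\[
2\langle \nabla_l f(\mathbf{X}_\square), U^{(l)}(V^{(l)})^\intercal\rangle + \nabla^2_{l,l}f(\mathbf{X}_\square)[A^{(l)}(V^{(l)})^\intercal + U^{(l)}(B^{(l)})^\intercal, \cdot] + \lambda(\|U^{(l)}\|_F^2 + \|V^{(l)}\|_F^2) \ge 0.
\]
Using Lemmas~\ref{matrixlemma1} and \ref{matrixlemma2} block-by-block, together with the same test-matrix choices as in Theorem~\ref{thm:1} (respectively, the rank-deficient case), we obtain $\nabla_l f(\mathbf{X}_\square) = -\lambda L_{X_\square}^{(l)}(R_{X_\square}^{(l)})^\intercal + S^{(l)}$ with $\|S^{(l)}\|_2 \le \lambda + \beta^{(l)}\sigma_r(X_\square^{(l)})$, and hence a subgradient $g^{(l)} \in \partial(\lambda\|X_\square^{(l)}\|_\star)$ with $\|g^{(l)} + \nabla_l f(\mathbf{X}_\square)\|_2 \le \beta^{(l)}\sigma_r(X_\square^{(l)})$. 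Here the block-wise restricted smoothness of Section~\ref{sec::3.2} is exactly what is needed.

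With $Z^{(l)} := g^{(l)} + \nabla_l f(\mathbf{X}_\square)$ in hand, the argument of Theorem~\ref{thm:3} transfers verbatim, with $\alpha$, $\beta$, $\sigma_{r_\star}$, $\sigma_r$ replaced by their block-$l$ versions. In Step~1 of that proof, one picks $r' = \min\{\gamma : \sigma_\gamma(X_\square^{(l)}) < c\}$ and the truncation constant $c = \varepsilon/(2\sqrt{r-r'})$ to conclude that either $\sigma_r(X_\square^{(l)}) \ge \frac{\alpha^{(l)}}{2\beta^{(l)}\sqrt{r}}\varepsilon$ or $\|X_\square^{(l)} - X_\star^{(l)}\|_F < \varepsilon$. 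In Step~2, applying Eckart--Young--Mirsky to $X_\square^{(l)} - \kappa Z^{(l)}$ with $\kappa = \sigma_{r_\star}(X_\square^{(l)})/(\beta^{(l)}\sigma_r(X_\square^{(l)}))$, combined with block-wise restricted strong convexity and the subgradient inequality $\langle X_\square^{(l)} - X_\star^{(l)}, g^{(l)} + \nabla_l f(\mathbf{X}_\star)\rangle \ge 0$, yields a quadratic inequality in $\|X_\square^{(l)} - X_\star^{(l)}\|_F$ whose analysis gives the two branches in case (ii).

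The main obstacle I foresee is bookkeeping for the $\delta$-perturbation across blocks. In Theorem~\ref{thm:3}, the cross-term $\langle X_\star^{(\delta)} - X_\star, Z\rangle$ is bounded using the duality between the nuclear and spectral norms together with $\|Z\|_2 \le \sigma_{r_\star}/\kappa$; in the multi-matrix setting the analogous cross-term is $\langle (X_\star^{(l)})^{(\delta)} - X_\star^{(l)}, Z^{(l)}\rangle$ and one must use the block-wise $\delta$-closeness $\|(X_\star^{(l)})^{(\delta)} - X_\star^{(l)}\|_F \le \delta$ from the block-wise definition of a $\delta$-global minimizer. Because only the $l$-th block appears in each such cross-term, the same Cauchy--Schwarz and nuclear/spectral-norm duality bounds go through, giving an extra additive error of order $\delta$. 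Since $\delta = o(\varepsilon^3)$, the discriminant and square-root manipulations at the end of the Theorem~\ref{thm:3} proof yield the claimed alternative in case (ii), completing the argument. If the resulting per-block conclusion is ``$\varepsilon$-global minimizer'' for every $l$, then $\mathbf{X}_\square$ itself is an $\varepsilon$-global minimizer (possibly with the $\varepsilon$ rescaled by $\sqrt{L}$); otherwise some block falls into case (ii), which is exactly the statement of the theorem.
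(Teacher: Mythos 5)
Your proposal is correct and follows essentially the same path as the paper: block-wise first- and second-order optimality extracted by restricting the test tuples $(\mathbf{U},\mathbf{V})$ to one nonzero slot, Lemmas~\ref{matrixlemma1} and \ref{matrixlemma2} applied per block to get $\nabla_l f = -\lambda L^{(l)}(R^{(l)})^\intercal + S^{(l)}$ with $\|S^{(l)}\|_2 \le \lambda + \beta^{(l)}\sigma_r(X_\square^{(l)})$, then running the two-step perturbative argument of Theorem~\ref{thm:3} per block with the nuclear/spectral duality handling the $\delta$-cross-term. One small note: the paper's definition of a $\delta$-global minimizer in the multi-matrix case is already block-wise ($\|(X_\star^{(l)})^{(\delta)} - X_\star^{(l)}\|_F \le \delta$ for each $l$), so the per-block conclusions assemble into the tuple conclusion with no $\sqrt{L}$ rescaling — the hedge at the end of your sketch is unnecessary.
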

\begin{proof}
    We can proceed identically with the proof of Theorem~\ref{thm:2} up to defining $Z$ such that $\|Z\|_2 \le \sigma_r$.
    
We first prove that $\sigma_r (X^{(l)}) \ge \varepsilon \cdot \frac{\alpha}{2\beta\sqrt{r}}$  if $\|X^{(l)}-X_\star^{(l)}\|\ge \varepsilon $, which requires independent reasoning, and then we prove the remaining results analogously to Theorem~\ref{thm:1}. 

\paragraph{Step 1.}
Fix some constant $c>0$, and define $r' =\min \{\gamma \ | \sigma_\gamma (X^{(l)})<c, \ 1\le \gamma \le r\}$. Now setting $\kappa$ as a positive real number that $\kappa\times \beta \sigma_r <c$. By the Eckart--Young--Mirsky theorem we see
\[
\underset{\text{rank}(Y)\le r_\star}{argmin} \|Y-(X^{(l)}-\kappa Z)\|_F^2 = (X^{(l)}-\kappa Z)^{r_\star}
\]
where $(X^{(l)}-\kappa Z)^{r_\star}$ is the projection of $X^{(l)}-\kappa Z$ onto the set of matrices of rank $r_\star$ or less. Since $\|\kappa Z\|_2 <c$,  by definition of $r'$ the $1$-th to $r$-th singular vectors of $X^{(l)}-\kappa Z$ coincide with those of $X^{(l)}$, while the subsequent singular values are all smaller than $c$. This implies 
\[
\|(X^{(l)})^{r'} - (X^{(l)}-\kappa Z)^{r_\star} \|_F^2 \le c^2 \cdot \max\{r_\star - r', 0\}
\]
If $r_\star < r'$, we are done, so we can consider only the case where $r_\star -r'\ge 0$. Combining the two relations, we have \begin{align*}
\|(X^{(l)})^{r'}-(X^{(l)}-\kappa Z)\|_F &\le \|(X^{(l)})^{r'}-(X^{(l)}-\kappa Z)^{r_\star} \|_F +\|(X^{(l)}-\kappa Z)^{r_\star}  - (X^{(l)}-\kappa Z)\|_F  \\ &\le c \sqrt{r_\star - r'} + \|(X^{(\delta)}_\star)^{(l)}-(X^{(l)}-\kappa Z)\|_F 
\end{align*}
and squaring each sides, this expands to 
\[
\|(X^{(l)})^{r'}-X^{(l)}\|_F^2  \le c^2 (r_\star -r') + 2c\sqrt{r_\star -r'} \|(X^{(l)})^{r'}-(X^{(l)}-\kappa Z)\|_F +\|(X_\star^{(\delta)})^{(l)}-X^{(l)}\|_F^2 +2k\langle (X_\star^{(\delta)})^{(l)}-(X^{(l)})^{r'}, Z\rangle 
\]
Now in the same way with \ref{proof:thm1}, due to the restricted convexity and the subgradient property we have 
\[
2\alpha \kappa\|X^{(l)}-X^{(l)}_\star\|_F^2 \le 2k\langle X^{(l)}-X^{(l)}_\star , Z \rangle.
\]
Adding the two up, we have
\begin{align*}
(2\alpha \kappa -1)\|X^{(l)}-X^{(l)}_\star\|_F^2 &+\|(X^{(l)})^{r'}-X^{(l)}\|_F^2 \le  c^2 (r_\star -r)+2c\sqrt{r_\star -r'}\|(X^{(l)})^{r'}-(X^{(l)}-\kappa Z)\|_F \\&+(\|(X_\star^{(\delta)})^{(l)} -X^{(l)}\|_F^2 \|X^{(l)}_\star -X^{(l)}\|_F^2 ) 
+ 2k \langle (X_\star^{(\delta)})^{(l)} -X^{(l)}_\star , Z \rangle + 2k \langle X^{(l)} -(X^{(l)})^{r'} , Z \rangle 
\end{align*}
which again simplifies to 
\[
(2\alpha \kappa -1)\|X^{(l)}-X^{(l)}_\star \|_F^2 - 2\delta \|X^{(l)}_\star -X^{(l)}\|_F \le 3c^2 (r-r') +2c\delta \sqrt{r-r_\star},
\]
or equivalently if $2\alpha \kappa -1>0$,
\[
\|X^{(l)}-X^{(l)}_\star\|_F \le \frac{\delta}{2\alpha \kappa -1} +\sqrt{(\frac{\delta}{2\alpha \kappa -1})^2 +3c^2 (r-r')+2c\delta \sqrt{r-r_\star}}.
\]
Therefore, setting $c = \frac{\varepsilon}{2\sqrt{r-r'}} $ and $\kappa=\frac{c}{\beta \sigma_r}$ (or any large number if $\sigma_r =0$), if $\sigma_r (X^{(l)}) < \frac{\alpha c}{\beta}$, then $\delta=o(\varepsilon^3)$ implies $\|X^{(l)}-X^{(l)}_\star\|_F <\varepsilon$. Thus if $\|X^{(l)}-X_\star^{(l)}\|_F\ge \varepsilon$, then  $\sigma_r (X^{(l)}) \ge \frac{\alpha}{2\beta \sqrt{r}} \cdot \varepsilon$ for some $l$.

\paragraph{Step 2.}

Similarly to Theorem~\ref{thm:1}, define $\kappa$ as a positive real number that $\kappa\cdot \beta \sigma_r <\sigma_{r_\star}$, implying $\|\kappa Z\|\le \sigma_{r_\star}$. Then 
\[
(X^{(l)})^{r_\star} \in \underset{\text{rank}(Y)\le r_\star}{\text{argmin}} \|Y-(X^{(l)}-\kappa Z)\|_F^2 
\]
and $\text{rank} ((X^{(\delta)}_\star)^{(l)}) =r_\star$ so 
\[
\|(X^{(l)})^{r_\star}-X^{(l)}+\kappa Z\|_F^2 \le \|(X_\star^{(\delta)})^{(l)}- X^{(l)}+\kappa Z\|_F^2 
\]
which is equivalent to 
\[
\|(X^{(l)})^{r_\star} -X^{(l)}\|_F^2 \le \|(X_\star^{(\delta)})^{(l)} - X^{(l)}\|_F^2 + 2k \langle (X_\star^{(\delta)})^{(l)} - X^{(l)}, Z \rangle.
\]
Now the relations \eqref{FromRSC},\eqref{subgradient} from the proof of Theorem~\ref{thm:1} still apply here so adding these we have
\[
\alpha \|X^{(l)}-(X_\star^{(\delta)})^{(l)}\|_F ^2 \le \kappa\langle X^{(l)}-X^{(l)}_\star , Z \rangle 
\]
so adding the two inequalities, we obtain
\[
2\kappa \alpha \|X^{(l)}-(X_\star^{(\delta)})^{(l)} \|_F^2 -\|X^{(l)}-(X_\star^{(\delta)})^{(l)}\|_F^2 +\|(X^{(l)})^{r_\star} -X^{(l)}\|_F^2 \le 2k \langle (X_\star^{(\delta)})^{(l)} -X^{(l)}_\star , Z\rangle .
\]
Here we can upper bound $\langle (X_\star^{(\delta)})^{(l)} -X^{(l)}_\star , Z\rangle$ by $2\sigma_{r_\star} \cdot \|(X_\star^{(\delta)})^{(l)}-X^{(l)}_\star\|_*$ , by the duality of the nuclear norm and spectral norm and $\|\kappa Z\|_2 \le \sigma_{r_\star}$. $\sigma_{r_\star} $ is again bounded by $\frac{D}{\sqrt{r_\star}}$ since we are looking at $\|X^{(l)}\|_F\le D$, and $\|(X_\star^{(\delta)})^{(l)}- X^{(l)}_\star\| \le \sqrt{r-r_\star} \cdot \|(X_\star^{(\delta)})^{(l)}- X^{(l)}_\star\| \le  \delta\times \sqrt{r-r_\star}$ by the Cauchy-Schwartz inequality. Therefore we have
\[
2\kappa \alpha \|X^{(l)}-(X_\star^{(\delta)})^{(l)} \|_F^2 -\|X^{(l)}-(X_\star^{(\delta)})^{(l)}\|_F^2 +\|(X^{(l)})^{r_\star} -X^{(l)}\|_F^2 \le \delta D \sqrt{\frac{r-r_\star}{r_\star}}.
\]
We can expand this inequality, using $X^{(l)}-(X_\star^{(\delta)})^{(l)}  =(X^{(l)}-X^{(l)}_\star)+(X^{(l)}_\star-(X_\star^{(\delta)})^{(l)})$ as
\begin{align*}
(2\kappa \alpha -1) \|X^{(l)}_\star-X^{(l)}\|_F^2 +\|(X^{(l)})^{r_\star} -X^{(l)}\|_F^2 &\le 2\langle X^{(l)}-X^{(l)}_\star , X^{(l)}_\star - (X_\star^{(\delta)})^{(l)} \rangle +\delta D\sqrt{\frac{r-r_\star}{r_\star}}\\& \le 2\delta \|X^{(l)}-X^{(l)}_\star\|_F +\delta D\sqrt{\frac{r-r_\star}{r_\star}}.
\end{align*}
Solving the quadratic inequality about $\|(X_\star^{(\delta)})^{(l)} -X^{(l)}\|_F$, if $2\kappa \alpha -1 >\varepsilon $ the discriminant is \[
\delta ^2 +4(2\kappa \alpha-1)\left(\delta D \sqrt{\frac{r-r_\star}{r_\star}} - \|(X^{(l)})^{r_\star} -X^{(l)}\|_F^2 \right).
\]
Now we know that if $\mathbf{X}$ is an $\varepsilon$-spurious local minima then  $\|(X^{(l)})^{r_\star} -X^{(l)}\|_F \ge \sigma_r \ge \frac{\alpha}{2\beta \sqrt{r}} \cdot \varepsilon$ for some $l$, so the discriminant is negative since $\delta = o(\varepsilon^2 )$ and therefore there would be no spurious local minima. If $2\kappa \alpha -1 < 0$, we would see
\[
\|X^{(l)}_\star -X^{(l)}\|_F \le \sqrt{\frac{\|(X^{(l)})^{r_\star}-X^{(l)}\|_F^2-(\delta D\sqrt{\frac{r-r_\star}{r_\star}} -\frac{\delta^2}{1-2\kappa \alpha})}{1-2\kappa \alpha}}-\frac{\delta}{1-2\kappa \alpha}
\]
resulting in 
\[
\|X^{(l)}_\star -X^{(l)}\|_F \le \sqrt{\frac{\|(X^{(l)})^{r_\star}-X^{(l)}\|_F^2-\varepsilon^3}{1-2\kappa \alpha}}-\varepsilon^2
\]
given $\delta =o(\varepsilon^3)$
\end{proof}

\subsection{Proof of Lemma~\ref{lem:low-rank}}
\label{ss:lemma-proof}
\begin{proof}
Denote the low rank matrices at the $t$th iteration as $A_t\in \mathbb{R}^{m \times r}$ and $B_t \in \mathbb{R}^{n\times r} $, the weight update $X_t = A_t B_t ^\intercal$, and the mini-batch used at the $t$th iteration as $S_t$. With a slight abuse of notation,  denote the mini-batch loss for batch $S_t$ at iteration $t$ as \[\widehat{\mathcal{L}}_{S_t} (X_t) = \frac{1}{b} \sum_{(x,y)\in S_t} \ell (f(W_0 +X; x),y).\] Denote the input vector to the layer as $u\in \mathbb{R}^{m\times 1}$, where $u^\intercal \cdot (W_0 +AB^\intercal)$ is inputted to the next layer. 
Here we see that 
\[
\nabla_{A}\widehat{\mathcal{L}}_{S_t} (X_t) = \frac{1}{b} \sum_{(x,y)\in S_t } \nabla _A \ell (f(W_0 +AB^\intercal ; x),y).
\]
Now the gradient of the loss is expandable by the chain rule as
\[
 \nabla _A \ell (f(W_0 +AB^\intercal ; x),y) =  \frac{\partial \ell (f(W_0 +AB^\intercal;x),y)}{\partial A} = \frac{\partial \ell(f,y)}{\partial f}\cdot \frac{\partial f(W_0 +AB^\intercal;x)}{\partial A} = \frac{\partial \ell(f,y)}{\partial f}\cdot u \cdot (\frac{\partial f }{\partial (u^\intercal A)} )^\intercal .
\]
Therefore $\nabla _A \ell (f(W_0 +AB^\intercal ; x),y)$ is a rank-$1$ matrix, as it is a product of a scalar and two rank-$1$ matrices. Thus, $\widehat{\mathcal{L}}_{S_t} (X_t) $, as a sum of $b$ rank-$1$ matrices, has rank at most $b$. 

Now we can characterize the SGD process with weight decay  as \begin{align*}
A_{t+1} &= A_t - \mu \nabla_{A}\widehat{\mathcal{L}}_{S_t} (X_t) -2\mu \lambda A_t  \\
B_{t+1} &= B_t - \mu \nabla_{B}\widehat{\mathcal{L}}_{S_t} (X_t) -2\mu\lambda B_t
\end{align*}
or equivalently, 
\begin{align*}
A_{t+1} &= (1-2\mu \lambda) A_t - \mu \nabla_{A}\widehat{\mathcal{L}}_{S_t} (X_t)  \\
B_{t+1} &= (1-2\mu \lambda) B_t - \mu \nabla_{B}\widehat{\mathcal{L}}_{S_t} (X_t).
\end{align*}
Recursively applying this process $n$ times, we have 
\[
A_t = (1-2\mu \lambda)^n A_{t-n}  -\mu \sum_{j=1}^n (1-2\mu \lambda )^{j-1} \widehat{\mathcal{L}}_{S_{t-j}} (X_{t-j}).
\]
Therefore denoting $U_{t,n} := -\mu \sum_{j=1}^n (1-2\mu \lambda )^{j-1} \widehat{\mathcal{L}}_{S_{t-j}} (X_{t-j})$ we see $A_t$, $B_t$ are approximately close to $U_{t,n}$, $V_{t,n}$ by
\[
\|A_t - U_{t,n} \|\le (1-2\mu \lambda )^n \|A_{t-n}\|, \quad 
\|B_t - V_{t,n} \|\le (1-2\mu \lambda )^n \|B_{t-n}\|
\] for analogously defined $V_{t,n}$, and  $\mathrm{rank} (U_{t,n}),\ rank(V_{t,n}) \le nb$. Therefore, 
\begin{align*}
X_t = A_t B_t ^\intercal &= (U_{t,n} + (1-2\mu \lambda )^n  A_{t-n})(V_{t,n}^\intercal + (1-2\mu \lambda )^n  B_{t,n}^\intercal ) 
\\=& U_{t,n} (V_{t,n}^\intercal + (1-2\mu \lambda )^n  B_{t-n}^\intercal ) + (1-2\mu \lambda )^n  A_{t-n} V_{t,n}^\intercal +(1-2\mu \lambda )^{2n} A_{t-n} B_{t-n}^\intercal 
\end{align*}

Since $U_{t,n} (V_{t,n}^\intercal + (1-2\mu \lambda )^n  B_{t-n}^\intercal )$, $(1-2\mu \lambda )^n  A_{t-n} V_{t,n}^\intercal$ are both matrices of rank at most $b$, we can define a matrix $W$ with \[
\left \|\frac{X_t}{\|X_t\|}- W\right\| = (1-2\mu \lambda )^{2n} \frac{\|X_{t-n}\|}{\|X_t\|}
\]
and $\mathrm{rank}(W)\le 2nb$. Since we assumed $X_t$ converges to $\tilde{X}$, when $t$ is sufficiently large we can assume $\frac{\|X_{t-n} \|}{\|X_t\|} \le 2$. Thus for any $n$ that $(1-2\mu \lambda )^{2n} <\varepsilon /2$, there exists some matrix $W$ with \[
\left \|\frac{X_t}{\|X_t\|}- W\right\|<\varepsilon, \quad \mathrm{rank}(W) \le  b\times \frac{\log (\varepsilon /2)}{\log (1-2\mu \lambda)}\]

\end{proof}

\section{Matrix lemmas} \label{Appendix:B}
\begin{lemma} \label{matrixlemma1}
    If matrices $A\in \mathbb{R}^{m\times r}, B \in \mathbb{R}^{n \times r}$ satisfy $A^\intercal A = B^\intercal B$,
    \[
    A= U\Sigma^{1/2} W^\intercal, \quad B = V\Sigma^{1/2} W^\intercal
    \]
    where $U\Sigma V^\intercal $ is a singular value decomposition of $AB^\intercal$, and $W$ is a orthonormal matrix. 
\end{lemma}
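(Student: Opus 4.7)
The plan is to start from the symmetric PSD matrix $M := A^\intercal A = B^\intercal B \in \mathbb{R}^{r \times r}$ and apply the spectral theorem. This yields an orthogonal $W \in \mathbb{R}^{r \times r}$ and a nonnegative diagonal $\Sigma \in \mathbb{R}^{r \times r}$ (sorted in decreasing order) with $M = W \Sigma W^\intercal$. The matrices $W$ and $\Sigma$ in the lemma's statement will be this $W$ and this $\Sigma$; we still need to manufacture compatible $U$ and $V$.

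Next, I would consider $\tilde A := AW \in \mathbb{R}^{m \times r}$ and $\tilde B := BW \in \mathbb{R}^{n \times r}$. A direct computation gives $\tilde A^\intercal \tilde A = W^\intercal M W = \Sigma$ and likewise $\tilde B^\intercal \tilde B = \Sigma$, so the columns of $\tilde A$ are pairwise orthogonal with squared norms equal to the diagonal entries of $\Sigma$, and similarly for $\tilde B$. Let $r'$ denote the number of strictly positive diagonal entries of $\Sigma$. For $i \le r'$, define the $i$-th columns of $U$ and $V$ by $U_i := \tilde A_i / \sqrt{\Sigma_{ii}}$ and $V_i := \tilde B_i / \sqrt{\Sigma_{ii}}$; these are unit-norm and pairwise orthogonal by construction. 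For the remaining indices $r' < i \le r$, extend $\{U_1,\ldots,U_{r'}\}$ and $\{V_1,\ldots,V_{r'}\}$ to orthonormal collections in $\mathbb{R}^m$ and $\mathbb{R}^n$ respectively (e.g., via Gram--Schmidt); this is possible since $m,n \ge r$ in the LoRA setup. By construction $\tilde A = U \Sigma^{1/2}$ and $\tilde B = V \Sigma^{1/2}$, whence $A = U \Sigma^{1/2} W^\intercal$ and $B = V \Sigma^{1/2} W^\intercal$.

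Finally, I would verify that $U \Sigma V^\intercal$ is indeed an SVD of $AB^\intercal$ by the direct computation $AB^\intercal = U \Sigma^{1/2} W^\intercal W \Sigma^{1/2} V^\intercal = U \Sigma V^\intercal$, noting that $U$ and $V$ have orthonormal columns and $\Sigma$ is nonnegative diagonal with decreasing entries, so all requirements of an SVD are met.

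The only delicate point is the nonuniqueness of the SVD when $\Sigma$ has repeated or zero eigenvalues. In the zero case, the freedom in extending to orthonormal bases is exactly what is needed so that the resulting $U,V$ still define a valid SVD; in the case of repeated positive eigenvalues, the eigenvector choice in $W$ already absorbs the corresponding rotational ambiguity. Thus the lemma should be read as an existence statement: we are simultaneously selecting an SVD of $AB^\intercal$ and an orthogonal $W$ in a compatible way, rather than matching a pre-specified SVD. No serious obstacle arises; the argument is essentially a careful bookkeeping of the polar/SVD-type decomposition forced by the hypothesis $A^\intercal A = B^\intercal B$.
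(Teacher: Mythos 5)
Your proof is correct and takes a cleaner, more constructive route than the paper's. The paper first takes separate compact SVDs $A = U_A \Sigma_A V_A^\intercal$ and $B = U_B \Sigma_B V_B^\intercal$, equates $V_A \Sigma_A^2 V_A^\intercal = A^\intercal A = B^\intercal B = V_B \Sigma_B^2 V_B^\intercal$, and appeals to ``uniqueness of the singular value decomposition'' (up to reordering and sign flips) to force $V_A = V_B$ and $\Sigma_A = \Sigma_B$, after which it matches $U_A, U_B$ against the singular vectors of $AB^\intercal = U_A \Sigma_A^2 U_B^\intercal$. You instead go directly to the spectral decomposition $M = W\Sigma W^\intercal$ of the shared Gram matrix $M = A^\intercal A = B^\intercal B$, pass to $\tilde A = AW$, $\tilde B = BW$, normalize the columns corresponding to positive eigenvalues to build $U$ and $V$, extend arbitrarily on the null indices, and then simply \emph{verify} that $U\Sigma V^\intercal$ is one SVD of $AB^\intercal$. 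The underlying content is the same (both take $W$ to be an eigenbasis of $M$ and $\Sigma$ its eigenvalues), but your version is more careful: the paper's appeal to SVD uniqueness ``by sufficient reordering and sign flipping'' glosses over the fact that in degenerate eigenspaces the eigenvector choice is free up to rotation, not just permutation and sign---one must \emph{choose} $V_B = V_A$ rather than conclude it. Your closing observation that the lemma is an existence statement, simultaneously selecting a compatible SVD of $AB^\intercal$ and orthogonal $W$, is exactly the right reading, and is what makes the argument go through without the delicate matching step.
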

\begin{proof}
    Take the compact singular value decompositions of $A$ and $B$,
    \begin{align*}
        A = U_A\Sigma_A V_A^\intercal,\quad  B = U_B\Sigma_B V_B^\intercal.
    \end{align*}
    Then we have
    \begin{align*}
        V_A \Sigma_A^2 V_A ^\intercal = A^\intercal A=B^\intercal B =  V_B \Sigma_B^2 V_B^\intercal
    \end{align*}
    Here $V_A, V_B$ are both orthogonal matrices and $\Sigma_A^2 , \Sigma_B^2$ are both diagonal matrices, so by the uniqueness of the singular value decomposition, by sufficient reordering and sign flipping of the singular vectors we can have $V_A = V_B$, $\Sigma_A = \Sigma_B$. Therefore inputting this into $AB^\intercal =X^{(l)}$, we have
    \[
    U_X^{(l)} \Sigma_X {V_X^{(l)}}^\intercal = U_A \Sigma_A ^2 U_B^\intercal 
    \]
    which implies again $\Sigma_A = \Sigma_X^{1/2}$, $U_A =U_X^{(l)}$, $U_B=V_X^{(l)}$ up to reordering and sign flipping. This proves the given statement.

\end{proof}
\begin{lemma} \label{matrixlemma2}
    For orthonormal matrices $U \in \mathbb{R}^{m\times r}$ and $ V \in \mathbb{R}^{n\times r}$, if $ X\in \mathbb{R}^{m\times n}$ satisfies $ XV = U$ and $X^\intercal U=V$ then $X$ can be expressed as
    \[
    X=UV^\intercal + \tilde{U} \Sigma \tilde{V}^\intercal
    \]
     where $\tilde{U} \in \mathbb{R}^{m\times (m-r)}$ is an orthogonal basis for the orthogonal complement of the column space of $U$, $\tilde{V} \in \mathbb{R}^{n\times (n-r)}$ is an orthogonal basis for the orthogonal complement of the column space of $V$, and $\Sigma \in \mathbb{R}^{(m-r)\times (n-r)}$ is a rectangular diagonal matrix. 
\end{lemma}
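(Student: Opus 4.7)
The plan is to show that the conditions $XV = U$ and $X^{\intercal} U = V$ essentially fix the action of $X$ on the subspace $\mathrm{col}(V)$ and leave the action on $\mathrm{col}(V)^\perp$ unconstrained, after which a singular value decomposition of that unconstrained block yields the claimed rectangular diagonal form. Concretely, I would first extend the orthonormal columns of $U$ to a full orthogonal matrix $[U\ \tilde{U}] \in \mathbb{R}^{m\times m}$ and the columns of $V$ to a full orthogonal $[V\ \tilde{V}] \in \mathbb{R}^{n\times n}$, where $\tilde{U},\tilde{V}$ are any choice of orthonormal bases for the respective orthogonal complements.

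Next, I would write $X$ in these bases as
\[
X = [U\ \tilde{U}]\,M\,[V\ \tilde{V}]^{\intercal},\quad
M = \begin{bmatrix} M_{11} & M_{12}\\ M_{21} & M_{22}\end{bmatrix},
\]
where $M_{11}\in\mathbb{R}^{r\times r}$, $M_{12}\in\mathbb{R}^{r\times (n-r)}$, $M_{21}\in\mathbb{R}^{(m-r)\times r}$, and $M_{22}\in\mathbb{R}^{(m-r)\times (n-r)}$. Using $[V\ \tilde{V}]^{\intercal}V = \bigl[\begin{smallmatrix} I_r\\ 0 \end{smallmatrix}\bigr]$, the identity $XV=U$ forces the first block column of $M$ to equal $\bigl[\begin{smallmatrix} I_r\\ 0 \end{smallmatrix}\bigr]$, so $M_{11}=I_r$ and $M_{21}=0$. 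Symmetrically, $X^{\intercal}U = V$ applied to $[U\ \tilde{U}]^{\intercal}U = \bigl[\begin{smallmatrix} I_r\\ 0 \end{smallmatrix}\bigr]$ forces the first block row of $M$ to match, giving $M_{12}=0$. Consequently
\[
X = UV^{\intercal} + \tilde{U}\,D\,\tilde{V}^{\intercal}
\]
for the residual block $D := M_{22} \in \mathbb{R}^{(m-r)\times(n-r)}$, which is completely unconstrained by the two hypotheses.

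Finally, I would apply a (rectangular) singular value decomposition $D = P\,\Sigma\,Q^{\intercal}$ with $P\in\mathbb{R}^{(m-r)\times (m-r)}$ and $Q\in\mathbb{R}^{(n-r)\times(n-r)}$ orthogonal and $\Sigma$ rectangular diagonal, and then absorb the orthogonal factors into the orthonormal complements by redefining $\tilde{U}\leftarrow \tilde{U}P$ and $\tilde{V}\leftarrow \tilde{V}Q$. Since multiplying an orthonormal basis of a subspace by an orthogonal matrix produces another orthonormal basis of the same subspace, the redefined $\tilde{U},\tilde{V}$ remain valid orthonormal bases of $\mathrm{col}(U)^\perp$ and $\mathrm{col}(V)^\perp$, and we obtain the asserted decomposition $X = UV^{\intercal} + \tilde{U}\Sigma\tilde{V}^{\intercal}$.

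There is not really a hard step here; the main thing to be careful about is the interpretation of the statement, since $\tilde{U}$ and $\tilde{V}$ are not uniquely determined by the orthogonal complements and the diagonal form of $\Sigma$ is only achievable after a judicious choice of these bases (equivalently, after the SVD absorption step). Once this is understood, the computation is a routine block matrix manipulation, and the proof proceeds without any further subtlety.
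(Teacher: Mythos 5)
Your proof is correct and follows essentially the same route as the paper's: express $X$ in the orthonormal bases $[U\ \tilde{U}]$ and $[V\ \tilde{V}]$, use the two hypotheses to force the block structure $M_{11}=I_r$, $M_{12}=M_{21}=0$, and then absorb the SVD factors of the unconstrained block $M_{22}$ into the complementary orthonormal bases. The only difference is cosmetic (block labeling and a slightly cleaner derivation of which constraints kill which off-diagonal blocks); the underlying decomposition and the SVD-absorption step are identical to the paper's argument.
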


\begin{proof}
    First fix an arbitrary $\tilde{U}$ and $\tilde{V}$, that makes $\begin{bmatrix}
        U \ \tilde{U}
    \end{bmatrix}$ and  $\begin{bmatrix}
        V \ \tilde{V}
    \end{bmatrix}$ orthogonal. Since orthogonal matrices are invertible, $X$ can be expressed as
    \[
    X= \begin{bmatrix}
        U \ \tilde{U}
    \end{bmatrix}\begin{bmatrix}
        A \ B \\ C \ D
    \end{bmatrix}\begin{bmatrix}
        V^\intercal \\ \tilde{V}^\intercal
    \end{bmatrix}
    \]
    for some $A\in \mathbb{R}^{r\times r}, B\in \mathbb{R}^{r\times (n-r)}, C\in \mathbb{R}^{(m-r)\times r}, D\in \mathbb{R}^{(m-r)\times (n-r)}$. Expanding this expression we can write $X$ again as 
    \[
    X=UAV^\intercal +UB\tilde{V}^\intercal + \tilde{U}CV^\intercal +\tilde{U}D\tilde{V}^\intercal
    \]
    Therefore $XV=U, X^\intercal U =V$ implies $UA+\tilde{U}C = U$, $VA+\tilde{V}B = U$, respectively. Now the orthogonality of $U,\tilde{U}$ and $V,\tilde{V}$ implies $A=I_r$ and  $B,C=0$. Finally, let the singular value decomposition of $D$ be $D=U_D \Sigma V_D^\intercal$. Then
    \[
    A=UV^\intercal + \tilde{U}U_D \Sigma_D (\tilde{V}V_D)^\intercal
    \]
    and $\tilde{U}U_D, \tilde{V}V_D$ are still orthonormal matrices in the same space, so our statement is proved. 
    
\end{proof}

\section{Experimental Details} \label{Appendix:C}

\subsection{Verifying low-rank global minima exist}

\paragraph{Experimental setup}
We perform the experiments by fine tuning a RoBERTa model for the SST-2 dataset, and fine tuning a Vision Transformer model for the CIFAR-100 dataset. To maintain only low rank updates, we follow \citet{pmlr-v202-malladi23a}, by first training the linear classification head only, and fine tuning on the resulting model. This way of training is in fact quite reasonable, as prior studies report it is beneficial to LP-LoRA \citep{tomihari2024understanding}, i.e. perform linear probing first and then apply low rank adaptation. 
We tune a total of 24 matrices for each experiment, with the hyperparameters as below.

\begin{table}[h!]
\centering
\caption{Hyperparameters for verifying low-rank global minima exist}
\label{tab:config}
\begin{tabular}{lcc}
\toprule
\textbf{HyperParameter} & \textbf{SST2} & \textbf{CIFAR100} \\
\midrule
Task & SST2 & CIFAR100 \\
$\lambda$ & 0, 0.1, 0.05, 0.01, 0.005, 0.001 & 0, 0.01, 0.005, 0.003, 0.001, 0.0005 \\
Learning Rate & 0.005 & 0.001 \\
Scheduler & Cosine Annealing & Cosine Annealing \\
Optimizer & Proximal SGD & Proximal SGD \\
Batch Size & 128 & 128 \\
Epochs & 150 & 150 \\
\bottomrule
\end{tabular}
\end{table}

\paragraph{Optimizing nuclear norm}
For verifying the existence of a low rank global minima exists, we optimize over the loss $\widehat{\mathcal{L}}^{full}_{\lambda} (X)= \widehat{\mathcal{L}}(X)+\lambda \|X\|_\star$. Here the nuclear norm is non-differentiable, so instead of the standard stochastic gradient descent, we apply proximal gradient method, well known to converge to a global minimum in a convex objective \cite{10.1561/2400000003}:
\[
X_{t+1} = \mathrm{prox}_{\mu \lambda \|\cdot \|_*}(X_t - \mu \nabla \widehat{\mathcal{L}}(X_t))
\]
where 
\[
\mathrm{prox}_{\mu \lambda \|\cdot\|_*} (X) = \underset{X'}{\text{argmin}} \left ( \frac{1}{2\mu} \|X'-X\|_F^2 +\lambda\|X'\|_* \right).\]

\paragraph{Computing the rank}
To effectively compute the rank considering numerical issues, we use the torch.linalg.svd function to obtain the singular values of the normalized weight matrix, and truncate values below $10^{-4}$. As we consider approximately low rank matrices in Section~\ref{sec::3.3}, our theory robustly holds for this truncated approximate rank as well. 

\paragraph{Results}
The final ranks after training are presented well in Table~\ref{tab:lowrank_hypo}. Here we present the change of the ranks throughout training. For simplicity, we present the rank of the first value matrix, which had the largest rank among all matrices in general. The results presented in Table~\ref{tab:lowrank_hypo} are the ranks of the weight matrix with the highest rank for each $\lambda$. 

\begin{figure*}[!h]
    \centering
    \subfigure{
    \includegraphics[width = 0.45\textwidth]{ 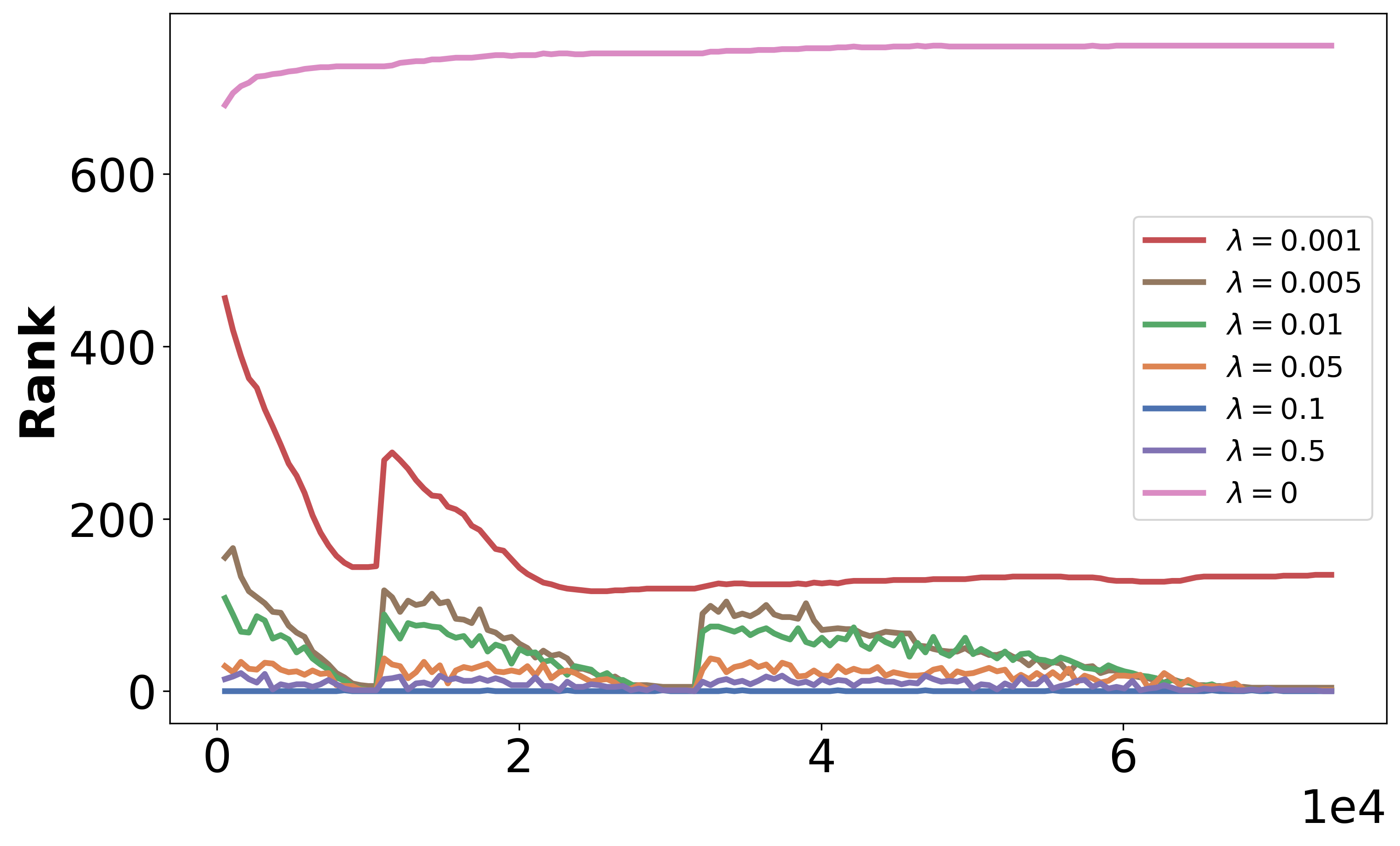}
    }
    \subfigure{
    \includegraphics[width = 0.45\textwidth]{ 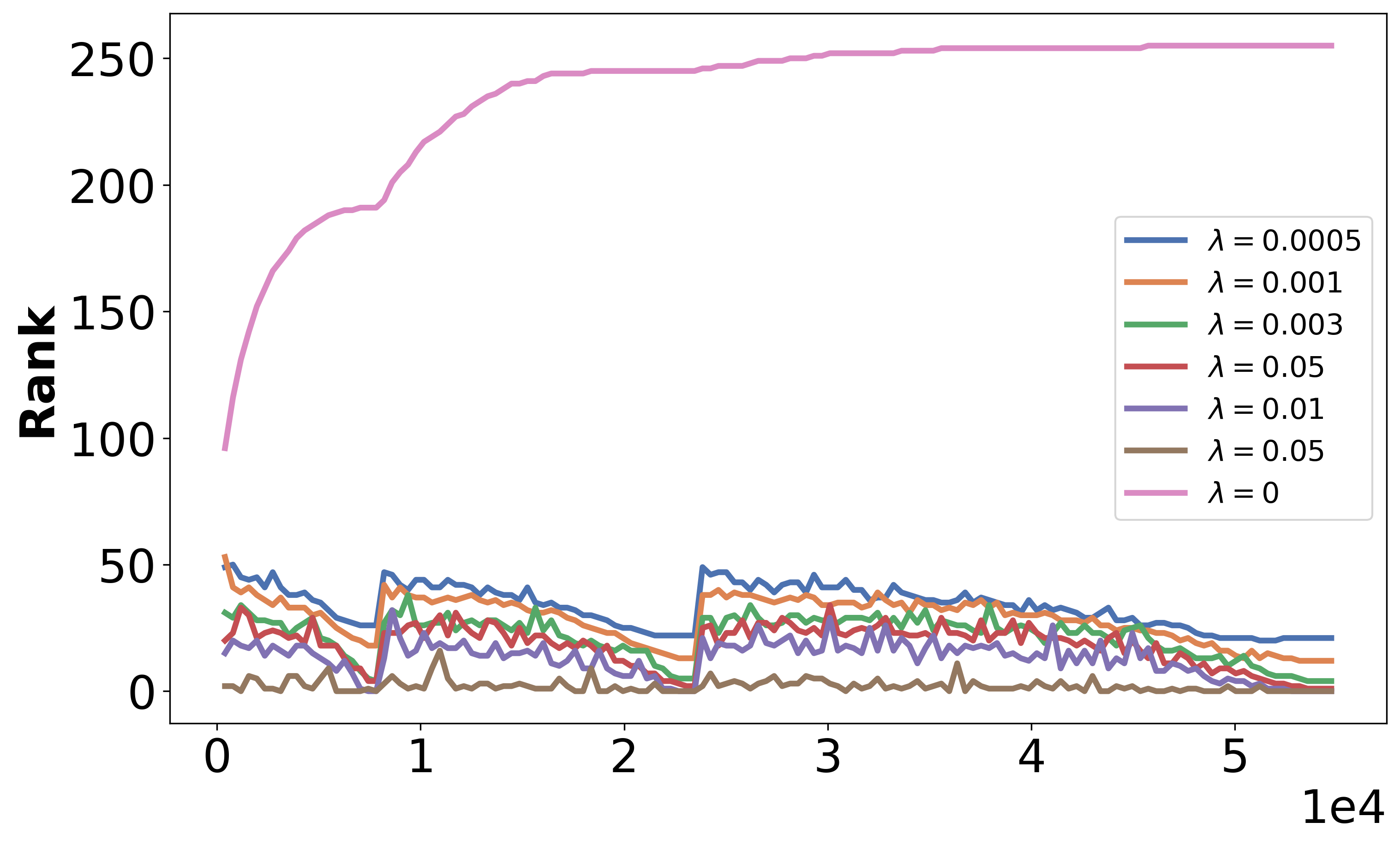}
    }
    \caption{Rank of the first value matrix throughout training. (left) SST2, (right) CIFAR100}    
\end{figure*}

We also present the test accuracy curves for each task. We see that weight decay have contrastive impacts for sst2 and cifar100, where weight decay doesn't harm the generalization performance as long as it is not too big in SST2, while it significantly impacts performance for CIFAR100. Therefore we do not use CIFAR100 for the following experiments, as the weight-decay equipped solutions are not the ones of our interest. Nevertheless, we clearly see the decreasing trend of rank as a function of $\lambda$, therefore successfully verifying our assumption. 

\begin{figure*}[!h]
    \centering
    \subfigure{
    \includegraphics[width = 0.45\textwidth]{ 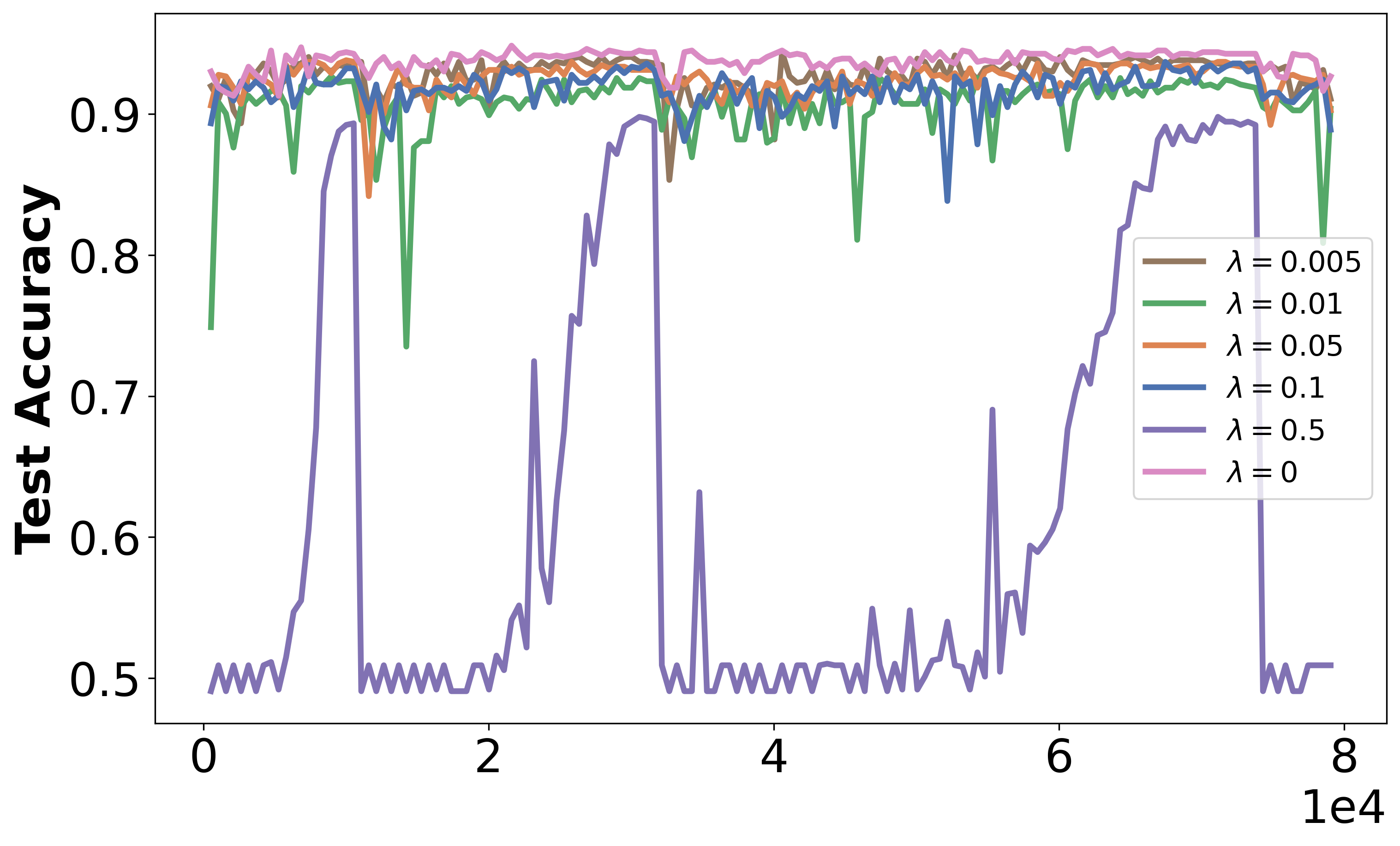}
    }
    \subfigure{
    \includegraphics[width = 0.45\textwidth]{ 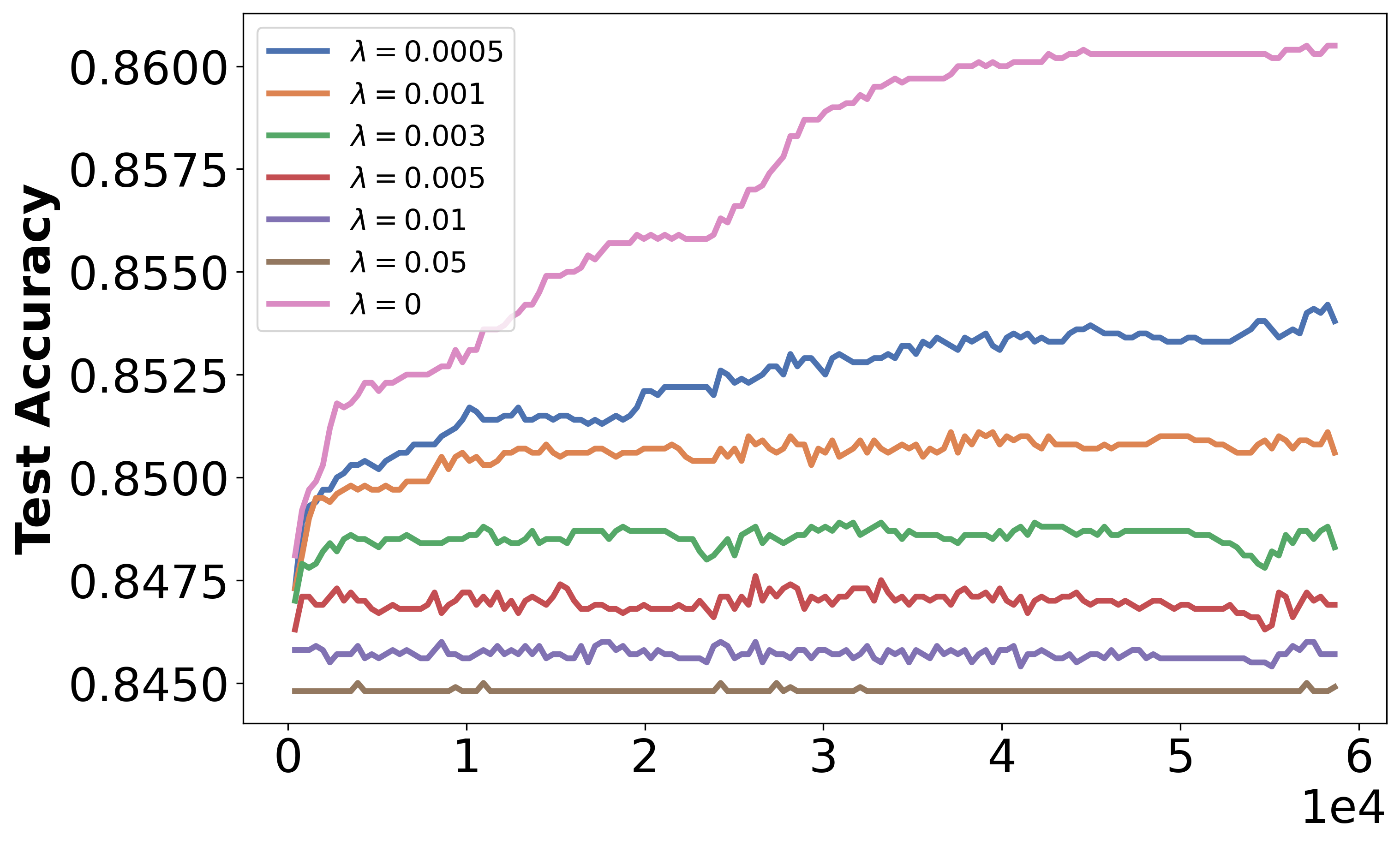}
    }
    \caption{Test accuracy throughout training. (left) SST2, (right) CIFAR100}    
\end{figure*}

\subsection{Verifying RSC and RSM}
Precisely computing the RSC and RSM constants of a deep neural network is generally infeasible, and therefore we perform monte-carlo sampling for 1000 samples to find an estimate for these values.  Specifically, we compute
\begin{align*}
\alpha^{(l)} &= \min_{1\le i\le N} \frac{\langle \nabla f_l(\mathbf{X_i})-\nabla f_l(\mathbf{X_\star}), X_i-X_\star^{(l)} \rangle }{\|X_i^{(l)}-X_\star^{(l)} \|_F^2} \\
\beta^{(l)} &= \max_{1\le i \le n} \left\{ \max_{\substack{\|U\|_F=\|V\|_F =1 \\ \mathrm{rank}(U)=\mathrm{rank}(V)=1}} \frac{\nabla_{l,l}^2 f(\mathbf{X})[UX^{(l)} + X^{(l)}V, \ UX^{(l)} + X^{(l)}V]}{\|UX^{(l)} + X^{(l)}V\|_F^2} \right\}
\end{align*}
for samples $\mathbf{X}_1, \dots , \mathbf{X}_N $ with $\|X^{(l)}_i -X_\star ^{(l)}\|\le D$, $\mathrm{rank}(X^{(l)}_i)\le r$ for each $1\le i \le N$. Due to the intense computational bottleneck of the Hessian,  we limit the computation to only the last layer of $W_q$ and $W_v$, following \cite{jang2024lora}. The $\alpha$, $\beta$ values we present in Section~\ref{sec::experiments}are the values for $W_v$, which had larger $\frac{\beta}{\alpha}$ values than $W_q$ for each case. 

\subsection{Validating main theorem}
In Section~\ref{sec::experiments}, we present the training results in two cases: zero initialization and Large initialization, which demonstrate a global minimizer and spurious local minimizer, respectively. We additionally present here that in smaller initializations, and in other setups, LoRA training always converges to the global minimum, indeed validating our argument that spurious local minima \emph{must} be unfeasible if they exist.

\begin{figure*}[!h]
    \centering
    \subfigure{
    \includegraphics[width = 0.45\textwidth]{ 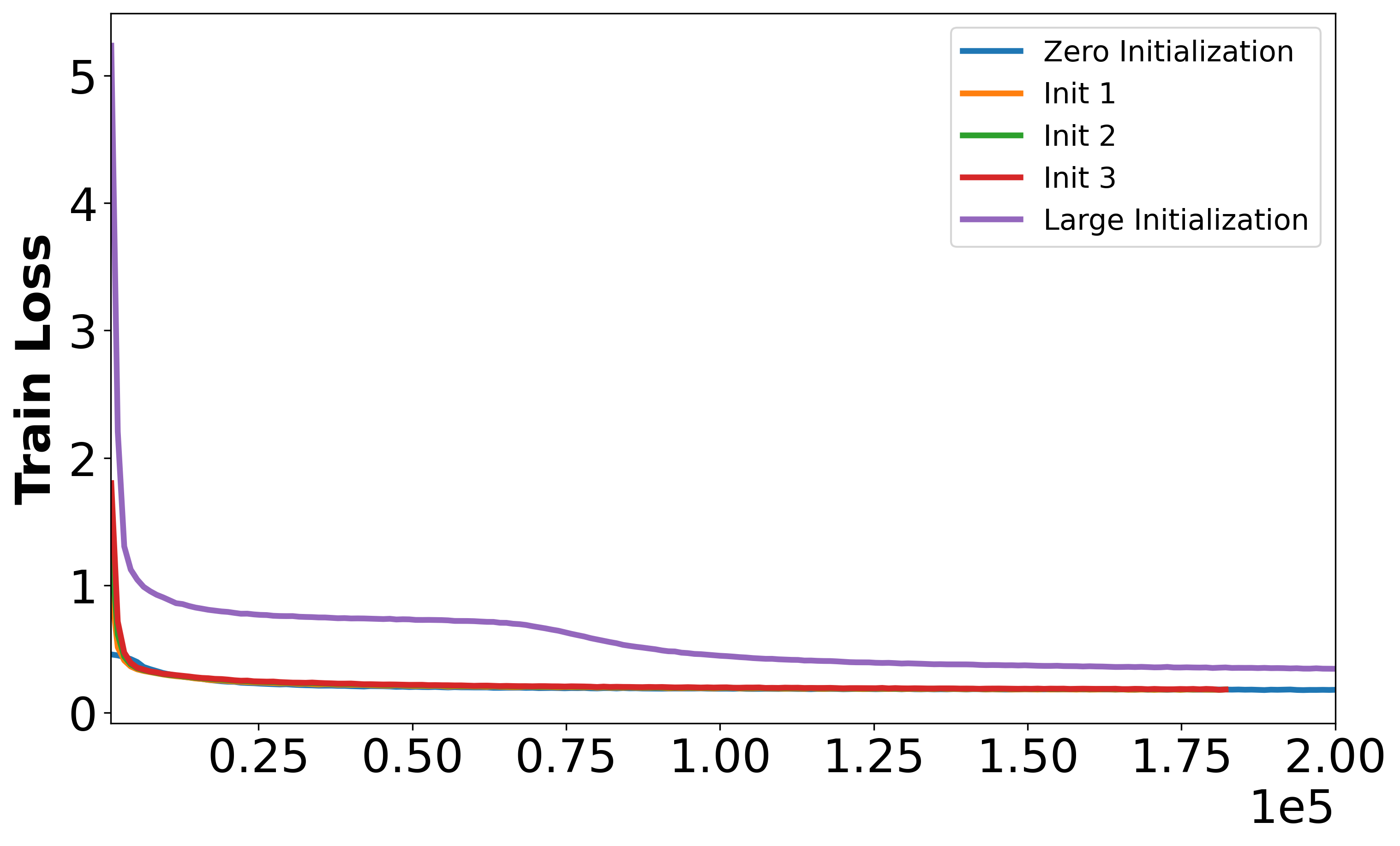}
    }
    \subfigure{
    \includegraphics[width = 0.45\textwidth]{ 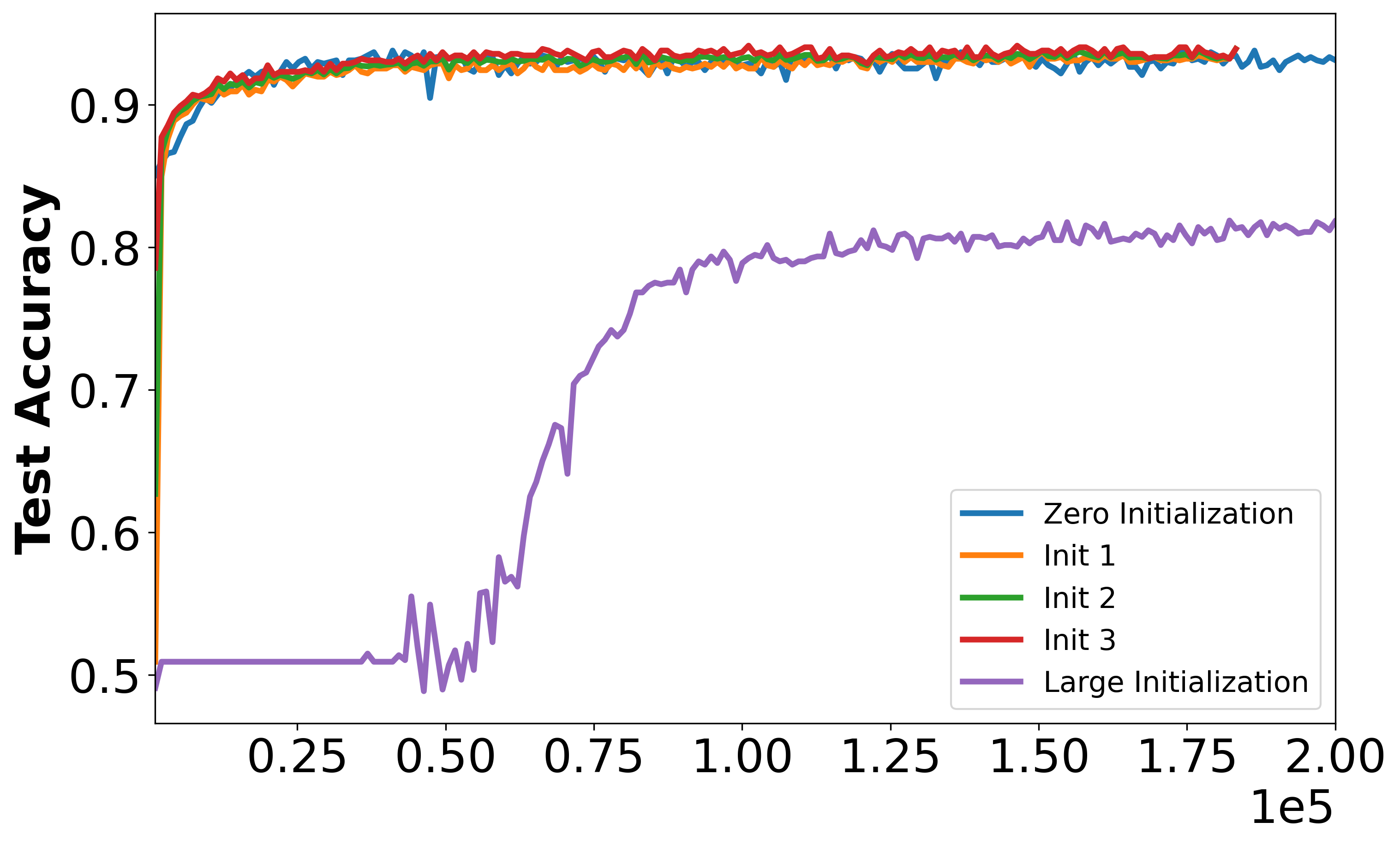}
    }
    \caption{LoRA training on SST2 with varying initialization (left) training loss, (right) test accuracy}    
\end{figure*}

We present the specific initializations below. Here the initialization for the $A$ matrix in the zero init follows the standard Kaiming uniform initialization, while other initializations are random gaussian initializations. 

\begin{table}[!h] \label{tab:init}
    \centering
    \begin{tabular}{c|cc}
        \hline
        Matrix & A & B  \\
        \hline
        \\ Zero Initialization & $\mathcal{U}(-\sqrt{\frac{15}{768}}, \sqrt{\frac{15}{768}})$ & 0 \\ \\
        Initialization 1 & $\mathcal{N}\left (0, \frac{1}{30}\right )$ & $\mathcal{N}\left (0, \frac{1}{30}\right )$ \\ \\
        Initialization 2 & $\mathcal{N}\left (0, \frac{1}{10}\right )$ & $\mathcal{N}\left (0, \frac{1}{10}\right )$  \\ \\
        Initialization 3 & $\mathcal{N}\left (-\frac{1}{10}, \frac{1}{20}\right )$  & $\mathcal{N}\left (\frac{1}{10}, \frac{1}{20}\right )$  \\ \\
        Large Initialization & $\mathcal{N}\left (0, \frac{1}{5}\right )$  & $\mathcal{N}\left (0, \frac{1}{5}\right )$  \\ \\
        \hline
    \end{tabular}
    \caption{Initialization values for matrices A and B}
    \label{tab:init_values}
\end{table}

\end{document}